\DeclareMathOperator*{\argmax}{argmax}
\newtheorem{assumption}{Assumption}
\newtheorem{lemma}{Lemma}
\newtheorem{theorem}{Theorem}
\newcommand{\compilehidecomments}{false}
\newcommand{\Pa}{\boldsymbol{\it Pa}}
\newcommand{\pa}{\boldsymbol{\it pa}}
\newcommand{\Ch}{\boldsymbol{\it Ch}}
\newcommand{\doi}{{\it do}}
\newcommand{\pdf}{{\it pdf}}
\newcommand{\Sphere}{{\it Sphere}}
\newcommand{\E}{\mathbb{E}}
\newcommand{\I}{\mathbb{I}}
\newcommand{\bb}{\boldsymbol{b}}
\newcommand{\bS}{\boldsymbol{S}}
\newcommand{\bs}{\boldsymbol{s}}
\newcommand{\bU}{\boldsymbol{U}}
\newcommand{\bV}{\boldsymbol{V}}
\newcommand{\bv}{\boldsymbol{v}}
\newcommand{\bW}{\boldsymbol{W}}
\newcommand{\bX}{\boldsymbol{X}}
\newcommand{\bx}{\boldsymbol{x}}
\newcommand{\bz}{\boldsymbol{z}}
\newcommand{\btheta}{\boldsymbol{\theta}}
\newcommand{\bbvarepsilon}{\boldsymbol{\varepsilon}}
\newcommand{\bgamma}{\boldsymbol{\gamma}}
\newcommand{\cP}{{\mathcal{P}}}
\title{Combinatorial Causal Bandits}
\author{
    %Authors
    % All authors must be in the same font size and format.
    Shi Feng,\textsuperscript{\rm 1}
    Wei Chen\textsuperscript{\rm 2}
}
\title{My Publication Title --- Single Author}
\author {
    Author Name
}
\title{My Publication Title --- Multiple Authors}
\author {
    % Authors
    First Author Name,\textsuperscript{\rm 1}
    Second Author Name, \textsuperscript{\rm 2}
    Third Author Name \textsuperscript{\rm 1}
}
\begin{document}

\ifthenelse{ \equal{\compilehidecomments}{false} }{%
	\newcommand{\shi}[1]{}
	\newcommand{\wei}[1]{}
}{
	\newcommand{\shi}[1]{{\color{red} [\text{Shi:} #1]}}
	\newcommand{\wei}[1]{{\color{blue} [\text{Wei:} #1]}}
}

%% macro for full vs. short version
\newcommand{\compilefullversion}{true}%SHOW short version
\ifthenelse{\equal{\compilefullversion}{false}}{%
	\newcommand{\OnlyInFull}[1]{}
	\newcommand{\OnlyInShort}[1]{#1}
}{%
	\newcommand{\OnlyInFull}[1]{#1}%
	\newcommand{\OnlyInShort}[1]{}%
}%

\maketitle

\begin{abstract}
In combinatorial causal bandits (CCB), the learning agent chooses at most $K$ variables in each round to intervene, collects feedback from the observed variables, 
	with the goal of minimizing expected regret on the target variable $Y$.
We study under the context of binary generalized linear models (BGLMs) with a succinct parametric representation of the causal models.
We present the algorithm BGLM-OFU for Markovian BGLMs (i.e. no hidden variables) based on the maximum likelihood estimation method, 
	and show that it achieves $O(\sqrt{T}\log T)$ regret, where $T$ is the time horizon.
For the special case of linear models with hidden variables, we apply causal inference techniques such as the do-calculus to convert the original model into a Markovian model, and
	then show that our BGLM-OFU algorithm and another algorithm based on the linear regression both solve such linear models with hidden variables.
Our novelty includes (a) considering the combinatorial intervention action space and
	the general causal models including ones with hidden variables,
	(b) integrating and adapting techniques from diverse studies such as generalized linear bandits and online influence maximization, and
	(c) avoiding unrealistic assumptions (such as knowing the joint distribution of the parents of $Y$ under all interventions) and regret factors exponential to causal graph size in
		prior studies.
\end{abstract}

\section{Introduction}

Causal bandit problem is first introduced in \cite{lattimore2016causal}.
It consists of a causal graph $G=({\bX}\cup\{Y\},E)$ indicating the causal relationship among the observed variables, where
	the structure of the graph is known but the underlying probability distributions governing the causal model are unknown.
In each round, the learning agent selects one or a few variables in $\bX$
	to intervene, gains the reward as the output of $Y$, and observes the values of all
	variables in $\bX\cup \{Y\}$.
The goal is to either maximize the cumulative reward over $T$ rounds, or find the intervention closest to the optimal one after $T$ rounds.
The former setting, which is the one our study focuses on, 
	is typically translated to minimizing cumulative regret, which is defined as the difference in reward between always playing the optimal intervention
	and playing interventions according to a learning algorithm.
Causal bandits can be applied in many settings that include causal relationships, 
	such as medical drug testing, policy making, scientific experimental process, performance tuning, etc.
Causal bandit is a form of multi-armed bandit (cf. \cite{LS20}), with the main difference being that causal bandits may use the causal relationship and more observed feedback
	to achieve a better regret. 

All the causal bandit studies so far~\cite{lattimore2016causal,sen2017identifying,nair2020budgeted,lu2020regret,maiti2021causal} focus on the case where the number of possible
	interventions is small.
However, in many scenarios we need to intervene on a set of variables and the number of possible choices of sets is large.
For example, in tuning performance of a large system, one often needs to tune a large number of parameters simultaneously to achieve the best system performance, and in drug
	testing, a combination of several drugs with a number of possible dosages needs to be tested for the best result.
Intervening on a set of variables raises new challenges to the learning problem, since the number of possible interventions is exponentially large to the size of the
	causal graph.
In this paper, we address this challenge and propose the new framework of combinatorial causal bandits (CCB) and its solutions.
In each round of CCB, the learning agent selects a set of at most $K$ observed variables in $\bX$ to intervene instead of one variable.
Other aspects including the feedback and reward remain the same.

We use the binary generalized linear models (BGLMs) to give the causal model a succinct parametric representation where all variables are binary.
Using the maximum likelihood estimation (MLE) method, we design an online learning algorithm BGLM-OFU for the causal models without hidden variables
	(called Markovian models), and the algorithm achieves 
	$O(\sqrt{T}\log T)$ regret, where $T$ is the time horizon.
The algorithm is based on the earlier study on generalized linear bandits~\cite{li2017provably}, but our BGLM model is more general and thus requires new
	techniques (such as a new initialization phase) to solve the problem.
Our regret analysis also integrates results from online influence maximization~\cite{li2020online,zhang2022online} in order to obtain the final regret bound.

Furthermore, for the binary linear models (BLMs), we show how to transform a BLM with hidden variables into one without hidden variables by 
	utilizing the tools in causal inference such as do-calculus, and thus we can handle causal
	model even with hidden variables in linear models.
Then, for BLMs, we show (a) the regret bound when applying the BGLM-OFU algorithm to the linear model, and (b) a new algorithm and its regret bound based on the linear regression method.
We show a tradeoff between the MLE-based BGLM-OFU algorithm and the linear-regression-based algorithm on BLMs: 
	the latter removes the assumption needed by the former but has an extra factor in the regret bound. We demonstrate effectiveness of our algorithms by experimental evaluations in \OnlyInFull{Appendix~\ref{app.simulation}}\OnlyInShort{appendix}. Besides BLMs, we give similar results for linear models with continuous variables. Due to space limits, this part is put in \OnlyInFull{Appendix~\ref{app.exttocontinuous}}\OnlyInShort{appendix}.
	\OnlyInShort{Due to space limits, our appendix is included in the full version \cite{fullversion} on arXiv.}
	
In summary, our contribution includes (a) proposing the new combinatorial causal bandit framework, 
	(b) considering general causal models including ones with hidden variables, 
	(c) integrating and adapting techniques from diverse studies such as generalized linear bandits and online influence maximization, and
	(d) achieving competitive online learning results without unrealistic assumptions and regret factors exponential to the causal graph size as appeared in prior studies.
The intrinsic connection between causal bandits and online influence maximization revealed by this study may further benefit researches in both directions.

\section{Related Works}
\label{sec:related}

{\bf Causal Bandits.} The causal bandit problem is first defined in \cite{lattimore2016causal}. 
The authors introduce algorithms for a specific parallel model and 
	more general causal models to minimize the simple regret, defined as the gap between the optimal reward and the reward of the action obtain after $T$ rounds. 
A similar algorithm to minimize simple regret has also been proposed in \cite{nair2020budgeted} for a graph with no backdoor. 
To optimally trade-off observations and interventions, they have also discussed the budgeted version of causal bandit when interventions are more expensive than observations.
\citet{sen2017identifying} use the importance sampling method to propose an algorithm to optimize simple regret 
	when only soft intervention on a single node is allowed. 
These studies all focus on simple regret for the pure exploration setting, while 
	our work focuses on cumulative regret. 
There are a few studies on the cumulative regret of causal bandits~\cite{lu2020regret,nair2020budgeted,maiti2021causal}. 
However, the algorithms in \cite{lu2020regret} and \cite{nair2020budgeted} are both under an unrealistic assumption that for every intervention, 
	the joint distribution of the parents of the reward node $Y$ is known. 
The algorithm in \cite{maiti2021causal} does not use this assumption, but it only works on Markovian models without hidden variables.
The regrets in~\cite{lu2020regret,maiti2021causal} also have a factor exponential to the causal graph size.
\citet{yabe2018causal} designs an algorithm by estimating each $P(X|\Pa(X))$ for $X\in\bX\cup\{Y\}$ that focuses on simple regret but works for combinatorial settings. However, it requires the round number $T\ge \sum_{X\in \bX} 2^{|\Pa(X)|}$, which is still unrealistic.
We avoid this by working on the BGLM model with a linear number of parameters, and it results in completely different techniques. \citet{lee2018structural, lee2019structural, lee2020characterizing} also consider the combinatorial settings, but they focus on empirical studies, while we provide theoretical regret guarantees. Furthermore, studying causal bandit problem without the full casual structure is an important future direction.
One such study, \cite{lu2021causal}, exists but it is only for the atomic setting and 
	has a strong assumption that $\left|\Pa(Y)\right|=1$.

\noindent{\bf Combinatorial Multi-Armed Bandits.} 
CCB can be viewed as a type of combinatorial multi-armed bandits (CMAB)~\cite{pmlr-v28-chen13a,ChenWYW16}, but the feedback model is not the semi-bandit model studied
	in~\cite{pmlr-v28-chen13a,ChenWYW16}. In particular, in CCB individual random variables cannot be treated as base arms,
	because each intervention action changes the distribution of the remaining variables, 
	violating the i.i.d assumption for base arm random variables across different rounds.
Interestingly, it has a closer connection to recent studies on online influence maximization (OIM) with node-level feedback~\cite{li2020online,zhang2022online}.
These studies consider influence cascades in a social network following either the independent cascade (IC) or the linear threshold (LT) model.
In each round, one selects a set of at most $K$ seed nodes, observes the cascade process as which nodes are activated in which steps, and obtains the reward as the total
	number of nodes activated.
The goal is to learn the cascade model and minimize the cumulative regret.
Influence propagation is intrinsically a causal relationship, and thus OIM has some intrinsic connection with CCB, and our study does borrow techniques from
	 \cite{li2020online,zhang2022online}.
However, there are a few key differences between OIM and CCB, such that we need adaptation and integration of OIM techniques into our setting:
	(a) OIM does not consider hidden variables, while we do consider hidden variables for causal graphs; 
	(b) OIM allows the observation of node activations at every time step, but in CCB we only observe the final result of the variables; and
	(c) current studies in \cite{li2020online,zhang2022online} only consider IC and LT models, while we consider the more general BGLM model, which includes IC model (see \cite{zhang2022online} for transformation from IC model to BGLM) and LT model as two special
		cases.

\noindent{\bf Linear and Generalized Linear Bandits.} Our work is also based on some previous online learning studies on linear and generalized linear bandits. 
\citet{abbasi2011improved} propose an improved theoretical regret bound for the linear stochastic multi-armed bandit problem. 
Some of their proof techniques are used in our proofs. 
\citet{li2017provably} propose an online algorithm and its analysis based on MLE for generalized linear contextual bandits, and our MLE method is adapted from this study.
However, our setting is much more complicated, in that (a) we have a combinatorial action space, and (b) we have a network of generalized linear relationships while they
	only consider one generalized linear relationship.
As the result, our algorithm and analysis are also more sophisticated.

\section{Model and Preliminaries}
\label{sec.model}

Following the convention in causal inference literature (e.g. \cite{Pearl09}), 
we use capital letters ($U,X,Y\ldots$) to represent variables, and their corresponding lower-case letters to represent their values.
We use boldface letters such as $\bX$ and $\bx$ to represent a set or a vector of variables or values.

A {\em causal graph} $G=({\bU} \cup {\bX}\cup \{Y\},E)$ is a directed acyclic graph where ${\bU} \cup {\bX}\cup \{Y\}$ are sets of nodes with $\bU$ being the
	set of unobserved or hidden nodes, ${\bX}\cup \{Y\}$ being the set of observed nodes, $Y$ is a special target node with no outgoing edges, and $E$ is the set of directed edges 
	connecting nodes in ${\bU} \cup {\bX}\cup \{Y\}$.
For simplicity, in this paper we consider all variables in ${\bU} \cup {\bX}\cup \{Y\}$ are $(0,1)$-binary random variables.
For a node $X$ in the graph $G$, we call its in-neighbor nodes in $G$ the {\em parents} of $X$, denoted as $\Pa(X)$, and the values taken by these parent random variables
are denoted $\pa(X)$.

Following the causal Bayesian model, the causal influence from the parents of $X$ to $X$ is modeled as the probability distribution $P(X | \Pa(X))$ for every possible value
	combination of $\Pa(X)$.
Then, for each $X$, the full non-parametric characterization of $P(X | \Pa(X))$ requires $2^{|\Pa(X)|}$ values, which would be difficult for learning.
Therefore, we will describe shortly the succinct parametric representation of $P(X | \Pa(X))$ as a generalized linear model to be used in this paper.

The causal graph $G$ is {\em Markovian} if there are no hidden variables in $G$ and every observed variable $X$ has
	certain randomness not caused by other observed parents.
To model this effect of the Markovian model, in this paper we dedicate random variable $X_1$ to be a special variable
	that always takes the value $1$ and is a parent of all other observed random variables.
We study the Markovian causal model first, and in Section~\ref{sec.hidden} we will consider causal models with more general hidden variable forms.

An {\em intervention} in the causal model $G$ is to force a subset of observed variables ${\bS} \subseteq {\bX}$ to take some predetermined values ${\bs}$, 
	to see its effect on the target variable $Y$.
The intervention on ${\bS}$ to $Y$ is denoted as $\E[Y | \doi({\bS}={\bs})]$.
In this paper, we study the selection of ${\bS}$ to maximize the expected value of $Y$, and our parametric model would have the monotonicity property such that
	setting a variable $X$ to $1$ is always better than setting it to $0$ in maximizing $\E[Y]$, so our intervention would always be setting $\bS$ to all $1$'s,
	for which we simply denote as $\E[Y | \doi({\bS})]$.

In this paper, we study the online learning problem of {\em combinatorial causal bandit}, as defined below.
A learning agent runs an algorithm $\pi$ for $T$ rounds.
Initially, the agent knows the observed part of the causal graph $G$ induced by observed variables ${\bX} \cup \{Y\}$, but does not know the 
	probability distributions $P(X | \Pa(X))$'s.
In each round $t=1,2,\ldots, T$, the agent selects at most $K$ observed variables in $\bX$ for intervention, obtains the observed $Y$ value as the reward, and
	observes all variable values in ${\bX}\cup \{Y\}$ as the feedback.
The agent needs to utilize the feedback from the past rounds to adjust its action in the current round, with the goal of maximizing the cumulative reward from all $T$ rounds.

The performance of the agent is measured by the {\em regret} of the algorithm $\pi$, which is defined as the difference between the expected cumulative reward of the 
	best action ${\bS}^*$ and the cumulative reward of algorithm $\pi$, where ${\bS}^* \in \argmax_{{\bS}\subseteq {\bX}, |{\bS}|=K} \E[Y | \doi({\bS})]$, as given
	below:
\begin{align} \label{eq:regret}
R^\pi(T)&= \mathbb{E}\left[\sum_{t=1}^T(\E[Y | \doi({\bS}^*)] - \E[Y | \doi({\bS}_t^\pi)])\right],
\end{align}
where ${\bS}_t^\pi$ is the intervention set selected by algorithm $\pi$ in round $t$, and the expectation is taking from the randomness of the causal model as well
	as the possible randomness of the algorithm $\pi$.
In some cases our online learning algorithm uses a standard offline oracle that takes the causal graph $G$ and
	the distributions $P(X | \Pa(X))$'s as inputs and outputs a set of nodes $\bS$ that achieves an $\alpha$-approximation
	with probability $\beta$ with $\alpha,\beta \in (0,1]$. 
In this case, we consider the $(\alpha,\beta)$-approximation regret, as in~\cite{ChenWYW16}:
\begin{align} \label{eq:regret1}
\small
R^\pi_{\alpha,\beta}(T)&= \mathbb{E}\left[\sum_{t=1}^T(\alpha\beta\E[Y | \doi({\bS}^*)] - \E[Y | \doi({\bS}_t^\pi)])\right].
\end{align}

As pointed out earlier, the non-parametric form of distributions $P(X | \Pa(X))$'s needs an exponential number of quantities
	and is difficult to learn.
In this paper, we adopt the general linear model as the parametric model, which is widely used in causal inference
	literature \cite{hernan2010causal,garcia2017squeezing,han2017evaluating,arnold2020reflection,Vansteelandt2020AssumptionleanIF}. 
%	\wei{need citing a few papers.}
Since we consider binary random variables, we refer to such models as binary general linear models (BGLMs). 
In BGLM, the functional relationship between a node $X$ and its parents $\Pa(X)$ in $G$ 
	is $P(X=1|\Pa(X)=\pa(X)) =f_{X}(\btheta^*_{X}\cdot \pa(X))+\varepsilon_{X}$, 
	where $\btheta^*_{X}$ is the unknown weight vector in $[0,1]^{|\Pa(X)|}$, $\varepsilon_{X}$ is a 
	zero-mean sub-Gaussian noise that ensures that the probability does not exceed $1$, 
	$\pa(X)$ here is the vector form of the values of parents of $X$, and $f_X$ is a scalar and monotonically non-decreasing
	function.
It is worth noticing that our BGLM here is a binary version of traditional generalized linear models \cite{aitkin2009statistical, hilbe2011logistic, sakate2014comparison, hastie2017generalized}. 
%\wei{Need a citation, and this paragraph can also be removed if we need space.}
Instead of letting $X=f_{X}(\btheta^*_{X}\cdot\pa(X))+\varepsilon_{X}$ directly, we take $f_{X}(\btheta^*_{X}\cdot\pa(X))+\varepsilon_{X}$ as the probability for $X$ to be $1$. 
The vector of all the weights in a BGLM is denoted by $\btheta^*$ and the feasible domain for the 
	weights is denoted by $\Theta$.  
We use the notation $\theta^*_{Z,X}$ to denote the parameter in $\btheta^*$ that corresponds to the edge $(Z,X)$, or equivalently the entry in vector $\btheta^*_X$ that corresponds to
	node $Z$.
We also use notation $\bbvarepsilon$ to represent all noise random variables $(\varepsilon_X)_{X\in \bX\cup Y}$.

A special case of BGLM is the linear model where function $f_X$ is the identity function for all $X$'s, then
	$P(X=1|\Pa(X)=\pa(X)) = \btheta^*_{X}\cdot \pa(X)+\varepsilon_{X}$.
We refer to this model as BLM.
Moreover, when we remove the noise variable $\varepsilon_{X}$, BLM coincides with the {\em linear threshold (LT)} model for influence cascades~\cite{kempe03}
	in a DAG.
In the LT model, each node $X$ has a random threshold $\lambda_X$ uniformly drawn from $[0,1]$, and each edge  $(Z,X)$ has a weight $w_{Z,X} \in [0,1]$, such that
	node $X$ is activated (equivalent to $X$ being set to $1$) when the cumulative weight of its active in-neighbors is at least $\lambda_X$.
It is easy to see that when we set $\theta^*_{Z,X} = w_{Z,X}$, the activation condition is translated to the probability of $X=1$ being exactly $\btheta^*_{X}\cdot \pa(X)$.
It is not surprising that a linear causal model is equivalent to an influence cascade model, since the influence relationship is intrinsically a causal relationship.
%The connections and differences between the causal model perspective and the influence cascade perspective is discussed in Section~\ref{sec:related}.

% To be complete, we also consider another popular influence cascade model, the {\em independent cascade (IC)} model as a causal model.
% In the IC model, every directed edge $(Z,X)$ has a probability $p^*_{Z,X}$, and after $Z$ is activated ($Z=1$), $Z$ has an independent chance of activating $X$
% 	(setting $X=1$) with probability $p^*_{Z,X}$.
% It turns out that if we transform parameters $p^*_{Z,X}$ to $\theta^*_{Z,X} = - \log (1-p^*_{Z,X})$, we have
% 	$P(X=1 | \Pa(X) = \pa(X)) = 1 - \exp{(-\btheta^*_X \cdot \pa(X))}$.
% Therefore, the IC model is another special case of BGLM with $f_X(v) = 1 - \exp(-v)$, $\varepsilon_X = 0$, and $\btheta^*_X$
% 	is a transformation from the original parameters $p^*_{Z,X}$'s.

\section{Algorithm for BGLM CCB}
\label{sec.glm}

In this section, we present an algorithm that solves the online CCB problem for the Markovian BGLM. 
The algorithm requires three assumptions. 
\begin{assumption}
	\label{asm:glm_3}
	For every $X \in \bX\cup\{Y\}$,
	$f_X$ is twice differentiable. Its first and second order derivatives are upper-bounded by $L_{f_X}^{(1)} >0$ and $L_{f_X}^{(2)} >0$. 
\end{assumption}
Let $\kappa=\inf_{X \in \bX\cup\{Y\}, \bv\in[0,1]^{|\Pa(X)|},||\btheta-\btheta^*_X||\leq 1} \dot{f}_X(\bv\cdot \btheta)$.
\begin{assumption}
\label{asm:glm_2}
We have $\kappa > 0$.
\end{assumption}
\begin{assumption}
\label{asm:glm_4}
There exists a constant $\zeta>0$ such that for any $X\in {\bX}\cup\{Y\}$ and $X'\in \Pa(X)$, 
for any value vector $\bv \in \{0,1\}^{|\Pa(X)\setminus \{X',X_1\} |}$,
the following inequalities hold:
\begin{align}
    &\Pr_{\bbvarepsilon,\bX,Y}\{X'=1|\Pa(X)\setminus \{X',X_1\}=\bv\} \geq \zeta , \label{eq:parentsZero}\\
    &\Pr_{\bbvarepsilon,\bX,Y}\{X'=0|\Pa(X)\setminus \{X',X_1\}=\bv\}\geq \zeta.  \label{eq:parentsOne}
\end{align} 
\end{assumption}
The first two assumptions for our BGLM are also adopted in a previous work on GLM \cite{li2017provably}, 
which ensure that the change of $P(X=1|\pa(X))$ is not abrupt. 
It is worth noting that Assumption \ref{asm:glm_2} only needs the lower bound of the first derivative in the neighborhood of $\theta^*_X$, which is weaker than Assumption 1 in \cite{filippi2010parametric}. 
Finally, Assumption~\ref{asm:glm_4} makes sure that each parent node of $X$ still has a constant probability of taking either $0$ or $1$ even when all other parents of $X$
	already fix their values.
This means that each parent has some independence and is not fully determined by other parents. 
In \OnlyInFull{Appendix~\ref{app:asm_just}}\OnlyInShort{appendix} we give some further justification of this assumption.

We first introduce some notations. 
Let $n,m$ be the number of nodes and edges in $G$ respectively.
Let $D = \max_{X\in{\bX}\cup\{Y\}}|\Pa(X)|$, 
	$L^{(1)}_{\max} = \max_{X\in {\bX}\cup\{Y\}}L_{f_X}^{(1)}$, 
	and $c$ be the constant in Lecu\'{e} and Mendelson's inequality \cite{nie2021matrix} (see Lemma~\OnlyInFull{\ref{lemma:LMInequality}}\OnlyInShort{11} in \OnlyInFull{Appendix~\ref{app:regret_glm}}\OnlyInShort{appendix}). 
Let $(\bX_t,Y_t)$ be the propagation result in the $t^{th}$ round of intervention. It contains $\bV_{t,X}$ and $X^t$ for each node $X\in{\bX}\cup\{Y\}$ where $X^t$ is the propagating result of $X$ and $\bV_{t,X}$ is the propagating result of parents of $X$. Additionally, our estimation of the weight vectors is denoted by $\hat{\btheta}$. 

\begin{algorithm}[t]
\caption{BGLM-OFU for BGLM CCB Problem}
\label{alg:glm-ucb}
\begin{algorithmic}[1]
\STATE {\bfseries Input:}
Graph $G=({\bX}\cup\{Y\},E)$, intervention budget $K\in\mathbb{N}$, parameter $L_{f_X}^{(1)},L_{f_X}^{(2)},\kappa,\zeta$ in Assumption \ref{asm:glm_3} , \ref{asm:glm_2} and \ref{asm:glm_4}.
\STATE Initialize $M_{0,X}\leftarrow {\bf 0}\in\mathbb{R}^{|\Pa(X)|\times |\Pa(X)|}$ for all $X\in {\bX}\cup\{Y\}$, $\delta\leftarrow\frac{1}{3n\sqrt{T}}$, $R\leftarrow\lceil\frac{512D(L_{f_X}^{(2)})^2}{\kappa^4}(D^2+\ln\frac{1}{\delta})\rceil$, $T_0\leftarrow \max\left\{\frac{c}{\zeta^2}\ln\frac{1}{\delta},\frac{(8n^2-16n+2)R}{\zeta}\right\}$ and $\rho\leftarrow\frac{3}{\kappa}\sqrt{\log(1/\delta)}$.
\STATE /* Initialization Phase: */
\STATE Do no intervention on BGLM $G$ for $T_0$ rounds and observe feedback $(\bX_t,Y_t),1\leq t\leq T_0$.
\STATE /* Iterative Phase: */
\FOR{$t=T_0+1,T_0+2,\cdots,T$}
\STATE $\{\hat{\btheta}_{t-1,X},M_{t-1,X}\}_{X\in {\bX}\cup\{Y\}}=
	\text{BGLM-Estimate}((\bX_1,Y_1),\cdots,(\bX_{t-1},Y_{t-1}))$ (see Algorithm \ref{alg:glm-est}).
\STATE Compute the confidence ellipsoid $\mathcal{C}_{t,X}=\{\btheta_X'\in[0,1]^{|\Pa(X)|}:\left\|\btheta_X'-\hat{\btheta}_{t-1,X}\right\|_{M_{t-1,X}}\leq\rho\}$ for any node $X\in{\bX}\cup\{Y\}$.
\STATE \label{alg:OFUargmax} $({\bS_t},\tilde{\btheta}_t) = \argmax_{\bS\subseteq \bX, |\bS|\le K, \btheta'_{t,X} \in \mathcal{C}_{t,X} } \E[Y| \doi({\bS})]$.
\STATE Intervene all the nodes in ${\bS}_t$ to $1$ and observe the feedback $(\bX_t,Y_t)$.
\ENDFOR
\end{algorithmic}
\end{algorithm}

\begin{algorithm}[t]
\caption{BGLM-Estimate}
\label{alg:glm-est}
\begin{algorithmic}[1]
\STATE {\bfseries Input:}
All observations $((\bX_1,Y_1),\cdots,(\bX_{t},Y_{t}))$ until round $t$.
\STATE {\bfseries Output:}
$\{\hat{\btheta}_{t,X},M_{t,X}\}_{X\in {\bX}\cup\{Y\}}$
\STATE For each $X\in {\bX}\cup\{Y\}$, $i \in [t]$, construct data pair $(\bV_{i,X},X^i)$
	with $\bV_{i,X}$ the parent value vector of $X$ in round $i$, and $X^i$ the value of $X$ in round $i$ if $X\not\in S_i$.
\FOR{$X\in{\bX}\cup\{Y\}$}
\STATE Calculate the maximum-likelihood estimator $\hat{\btheta}_{t,X}$ by solving the equation $ \sum_{i=1}^{t}(X^i-f_X(\bV_{i,X}^\intercal \btheta_X))\bV_{i,X}=0$.
	\label{alg:pseudolikelihood}
\STATE $M_{t,X}=\sum_{i=1}^t \bV_{i,X} \bV_{i,X}^\intercal$.
\ENDFOR
\end{algorithmic}
\end{algorithm}

We now propose the algorithm BGLM-OFU in Algorithm \ref{alg:glm-ucb}, where OFU stands for optimism in the face of uncertainty.
The algorithm contains two phases. The first phase is the initialization phase with only pure observations without doing any intervention to ensure that our maximum likelihood 
	estimation of $\btheta^*$ is accurate enough. Based on  Lecu\'{e} and Mendelson's inequality \cite{nie2021matrix}, it is designed to ensure Eq.~\eqref{eq:lambdamin} in Lemma~\ref{thm.learning_glm} holds. In practice, one alternative implementation of the initialization phase is doing no intervention until Eq.~\eqref{eq:lambdamin} holds for every $X\in\bX\cup\{Y\}$. The required number of rounds is usually much less than $T_0$.
Then in the second iterative phase, we use maximum likelihood estimator (MLE) method to
	estimate $\btheta^*$ and can therefore create a confidence region that contains the real parameters $\btheta^*$ with high probability around it to balance the exploration and exploitation. 
More specifically, in each iteration of intervention selections, we find an optimal intervention set together with a set of parameters $\tilde{\btheta}$ from the region around our unbiased estimation $\hat{\btheta}$ and take the found intervention set in this iteration. Intuitively, this method to select intervention sets follows the OFU spirit: the $\argmax$ operator in line~\ref{alg:OFUargmax} of Algorithm~\ref{alg:glm-ucb} selects the best (optimistic) solution in a
	confidence region (to address uncertainty).
The empirical mean $\hat{\btheta}$ calculated corresponds to exploration while the confidence region surrounding it is for exploration.

The regret analysis of Algorithm~\ref{alg:glm-ucb} requires two technical components to support the main analysis.
The first component indicates that when the observations are sufficient, we can get a good estimation $\hat{\btheta}$ for $\btheta^*$,
	while the second component shows that a small change in the parameters should not lead to a big change in the reward $\mathbb{E}[Y | \doi(\bS)]$. 

The first component is based on the result of maximum-likelihood estimation.
In the studies of \cite{filippi2010parametric} and \cite{li2017provably}, a standard log-likelihood function used in the updating process should be $L^{std}_{t,X}(\btheta_X)=\sum_{i=1}^t[X^{i}\ln f_X(\bV_{i,X}^\intercal\btheta_X)+(1-X^i)\ln(1-f_X(\bV_{i,X}^\intercal\btheta_X))]$. 
However, the analysis in their work needs the gradient of the log-likelihood function to have 
	the form $\sum_{i=1}^t\left[X^i-f_X(\bV_{i,X}^\intercal\btheta_X)\right]\bV_{i,X}$, which is not true here. Therefore, using the same idea in \cite{zhang2022online}, we use the pseudo log-likelihood function $L_{t,X}(\btheta_X)$ instead, which is constructed by integrating the gradient of it defined by $\triangledown_{\btheta_X} L_{t,X}(\btheta_X)=\sum_{i=1}^t\left[X^i-f_X(\bV_{i,X}^\intercal\btheta_X)\right]\bV_{i,X}$. 
Actually, this pseudo log-likelihood function is used in line~\ref{alg:pseudolikelihood} of Algorithm \ref{alg:glm-est}. 
The following lemma presents the result for the learning problem as the first technical component of the regret analysis.
Let $M_{t,X}$ and $\hat{\btheta}_{t,X}$ be as defined in Algorithm \ref{alg:glm-est}, and also note that the definition of $\hat{\btheta}_{t,X}$ is equivalent to
$\hat{\btheta}_{t,X}=\argmax_{\btheta_X} L_{t,X}(\btheta_X)$.
Let $\lambda_{\min}(M)$ denote the minimum eigenvalue of matrix $M$.
\begin{restatable}[Learning Problem for BGLM]{lemma}{thmlearningglm}
\label{thm.learning_glm}
Suppose that Assumptions~\ref{asm:glm_3} and~\ref{asm:glm_2} hold.
Moreover, given $\delta\in(0,1)$, assume that
\begin{small}
\begin{align}
    \lambda_{\min}(M_{t,X})\geq \frac{512|\Pa(X)|\left(L_{f_X}^{(2)}\right)^2}{\kappa^4}\left(|\Pa(X)|^2+\ln\frac{1}{\delta}\right). \label{eq:lambdamin}
\end{align}
\end{small}
Then with probability at least $1-3\delta$, the maximum-likelihood estimator satisfies , for any $\bv\in\mathbb{R}^{|\Pa(X)|}$, 
\begin{align*}
    \left|\bv^\intercal(\hat{\btheta}_{t,X}-\btheta^*_X)\right|\leq\frac{3}{\kappa}\sqrt{\log(1/\delta)}\left\|\bv\right\|_{M_{t,X}^{-1}},
\end{align*}
where the probability is taken from the randomness of all data collected from round $1$ to round $t$.
\end{restatable}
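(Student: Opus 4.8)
The plan is to follow the standard generalized-linear-model estimation argument of \cite{li2017provably} (itself refining \cite{filippi2010parametric}), using the pseudo-log-likelihood trick of \cite{zhang2022online} so that the score of each node is linear in the residuals, and then to add a bootstrapping step to cope with the fact that Assumption~\ref{asm:glm_2} only controls $\dot f_X$ locally around $\btheta^*_X$. Fix a node $X$ and abbreviate $\bV_i:=\bV_{i,X}$, $M:=M_{t,X}$, $d:=|\Pa(X)|$, $\btheta^*:=\btheta^*_X$, $\hat\btheta:=\hat\btheta_{t,X}$, $\Delta:=\hat\btheta-\btheta^*$. By line~\ref{alg:pseudolikelihood} of Algorithm~\ref{alg:glm-est} the estimate solves $G_t(\hat\btheta)=\sum_{i}X^i\bV_i$ where $G_t(\btheta):=\sum_i f_X(\bV_i^\intercal\btheta)\bV_i$, and — since the pseudo-log-likelihood is strictly concave near $\btheta^*$ once $M\succ0$ (Assumption~\ref{asm:glm_2}) — this solution is unique in the relevant region. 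Setting $\epsilon_i:=X^i-f_X(\bV_i^\intercal\btheta^*)$ and $Z_t:=\sum_i\epsilon_i\bV_i=G_t(\hat\btheta)-G_t(\btheta^*)$, the first thing to check is that $(\epsilon_i)$ is a bounded martingale-difference sequence: ordering nodes within each round in topological order gives a filtration for which $\bV_i$ is predictable, $\E[\epsilon_i\mid\text{past}]=0$ because $\varepsilon_X$ is zero-mean, and $\epsilon_i\in[-1,1]$; note only rounds with $X\notin S_i$ (in particular all $T_0$ initialization rounds) enter, exactly as Algorithm~\ref{alg:glm-est} prescribes.

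Next I would convert $Z_t=G_t(\hat\btheta)-G_t(\btheta^*)$ into a bound on $\|\Delta\|_M$ via the integral mean-value identity $Z_t=H\Delta$ with $H:=\sum_i\bigl(\int_0^1\dot f_X(\bV_i^\intercal(\btheta^*+s\Delta))\,ds\bigr)\bV_i\bV_i^\intercal$. Because $\bV_i\in\{0,1\}^d\subseteq[0,1]^d$ and $\dot f_X\ge0$ everywhere, while $\dot f_X\ge\kappa$ whenever its argument is within distance $1$ of $\btheta^*$ (Assumption~\ref{asm:glm_2}), one gets $H\succeq\kappa\min\{1,1/\|\Delta\|\}\,M$. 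Pairing with $\Delta$ and using Cauchy--Schwarz in the $M$-geometry, $\kappa\min\{1,1/\|\Delta\|\}\|\Delta\|_M^2\le\Delta^\intercal Z_t\le\|\Delta\|_M\|Z_t\|_{M^{-1}}$. This gives a clean dichotomy: either $\|\Delta\|\le1$, in which case $\|\Delta\|_M\le\tfrac1\kappa\|Z_t\|_{M^{-1}}$; or $\|\Delta\|>1$, in which case (using $\|\Delta\|_M\ge\sqrt{\lambda_{\min}(M)}\,\|\Delta\|$) we must have $\|Z_t\|_{M^{-1}}\ge\kappa\sqrt{\lambda_{\min}(M)}$. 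Hence once we know the concentration event $\{\|Z_t\|_{M^{-1}}<\kappa\sqrt{\lambda_{\min}(M)}\}$ holds, the second alternative is impossible and $\|\Delta\|_M\le\tfrac1\kappa\|Z_t\|_{M^{-1}}$.

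It then remains to bound $\|Z_t\|_{M^{-1}}$, which I would do with a self-normalized vector-martingale inequality in the style of \cite{abbasi2011improved}, applied to the bounded martingale difference $(\epsilon_i\bV_i)$ with $M=\sum_i\bV_i\bV_i^\intercal\succ0$ (positive definite by~\eqref{eq:lambdamin}). Hypothesis~\eqref{eq:lambdamin} — the lower bound on $\lambda_{\min}(M_{t,X})$ that the initialization phase is designed to secure, with the constants involving $L^{(2)}_{f_X}$ from Assumption~\ref{asm:glm_3} and the dimension $|\Pa(X)|$ — is exactly what makes the resulting bound simultaneously (i) smaller than $\kappa\sqrt{\lambda_{\min}(M_{t,X})}$, so the bootstrap closes, and (ii) at most $3\sqrt{\log(1/\delta)}$; combining it with the small number of auxiliary tail estimates used along the way accounts for the total failure probability $3\delta$. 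On this event $\|\Delta\|_M\le\tfrac3\kappa\sqrt{\log(1/\delta)}$, and one final Cauchy--Schwarz in the $M$-inner product, $|\bv^\intercal\Delta|\le\|\bv\|_{M^{-1}}\|\Delta\|_M$, yields the claim for every $\bv\in\R^d$.

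The main obstacle is the bootstrap. Because Assumption~\ref{asm:glm_2} only lower-bounds $\dot f_X$ on the unit ball around $\btheta^*_X$ (deliberately weaker than the global assumption of \cite{filippi2010parametric}), the crucial inequality $H\succeq\kappa M$, and hence the whole estimate, is only valid once $\|\hat\btheta_{t,X}-\btheta^*_X\|\le1$ — which is itself part of what we are proving. Breaking this circularity is precisely what forces the quantitative lower bound~\eqref{eq:lambdamin} on $\lambda_{\min}(M_{t,X})$; the real work is a careful second-order (Taylor-remainder) version of the step above, in which the remainder is controlled by $L^{(2)}_{f_X}$ and $|\Pa(X)|/\lambda_{\min}(M_{t,X})$, chasing the constants so that the dichotomy collapses to the favourable branch and the final width comes out as $3/\kappa$. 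Everything else — the martingale bookkeeping, the uniqueness of the pseudo-MLE, and the two Cauchy--Schwarz steps — is routine.
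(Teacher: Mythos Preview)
Your consistency/bootstrap argument is sound---the dichotomy based on $H\succeq\kappa\min\{1,1/\|\Delta\|\}M$ is actually cleaner than the paper's Step~1---but the final step has a genuine gap that costs exactly the dimension-free constant the lemma claims. After establishing $\|\Delta\|\le 1$ you bound $\|\Delta\|_M\le\tfrac1\kappa\|Z_t\|_{M^{-1}}$ and then apply Cauchy--Schwarz, $|\bv^\intercal\Delta|\le\|\bv\|_{M^{-1}}\|\Delta\|_M$. But any self-normalized martingale bound on the whole vector $Z_t$ (whether \cite{abbasi2011improved} or the version the paper quotes from \cite{zhang2022online}) yields $\|Z_t\|_{M^{-1}}\le C\sqrt{|\Pa(X)|+\log(1/\delta)}$, not $3\sqrt{\log(1/\delta)}$; the factor $\sqrt{|\Pa(X)|}$ is unavoidable when you control all directions simultaneously. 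Your claim (ii) that hypothesis~\eqref{eq:lambdamin} forces $\|Z_t\|_{M^{-1}}\le 3\sqrt{\log(1/\delta)}$ is therefore false, and no refinement of the Taylor remainder in $H$ can repair it, because the dimension enters in the concentration step, not in the mean-value step.

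The paper (following \cite{li2017provably}) avoids this by \emph{not} going through $\|\Delta\|_M$. Writing $Z=(H+E)\Delta$ with $H=\sum_i\dot f_X(\bV_i^\intercal\btheta^*)\bV_i\bV_i^\intercal$ and $E$ the second-order remainder, it decomposes
\[
\bv^\intercal\Delta \;=\; \bv^\intercal H^{-1}Z \;-\; \bv^\intercal H^{-1}E(H+E)^{-1}Z.
\]
The leading term $\bv^\intercal H^{-1}Z=\sum_i\epsilon_i(\bv^\intercal H^{-1}\bV_i)$ is a \emph{scalar} martingale with fixed coefficients, so Hoeffding gives $|\bv^\intercal H^{-1}Z|\le\tfrac{\sqrt{2\ln(1/\delta)}}{\kappa}\|\bv\|_{M^{-1}}$---dimension-free. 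The remainder does involve the crude $\|Z\|_{M^{-1}}$ (and hence a $\sqrt{|\Pa(X)|}$), but is multiplied by $\|H^{-1/2}EH^{-1/2}\|$, which the bound on $\ddot f_X$ and condition~\eqref{eq:lambdamin} make small enough to absorb that factor and leave the total below $\tfrac{3}{\kappa}\sqrt{\log(1/\delta)}\|\bv\|_{M^{-1}}$. In short, the second-order analysis you flag in your last paragraph is indeed the crux, but its purpose is not to close the bootstrap---it is to split off a piece on which a \emph{fixed-direction} tail bound applies; that splitting is the idea your outline is missing.
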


The proof of the above lemma is adapted from \cite{li2017provably}, and is included in \OnlyInFull{Appendix~\ref{app:learning_glm}}\OnlyInShort{appendix}.
Note that the initialization phase of the algorithm together with Assumption~\ref{asm:glm_4} and the Lecu\'{e} and Mendelson's inequality 
	would show the condition on $\lambda_{\min}(M_{t,X})$ in Eq.\eqref{eq:lambdamin}, and the design of the initialization phase,
	the summarization of Assumption~\ref{asm:glm_4} and the analysis to show Eq.\eqref{eq:lambdamin} together form one of our key technical
	contributions in this section.

For the second component showing that a small change in parameters leads to a small change in the reward, we adapt 
	the group observation modulated (GOM) bounded smoothness property for the LT model \cite{li2020online}
	to show a GOM bounded smoothness property for BGLM. 
To do so, we define an equivalent form of BGLM as a threshold model as follows.
For each node $X$, we randomly sample a threshold $\gamma_X$ uniformly from $[0,1]$, i.e. $\gamma_X\sim\mathcal{U}[0,1]$, 
	and if $f_X(\Pa(X)\cdot \btheta^*_X)+\varepsilon_X\geq \gamma_X$, $X$ is activated (i.e. set to $1$); 
	if not, $X$ is not activated (i.e. set to $0$). 
Suppose $X_1,X_2,\cdots,X_{n-1},Y$ is a topological order of nodes in ${\bX}\cup\{Y\}$, then at time step $1$, only $X_1$ is tried to be activated; 
	at time step $2$, only $X_2$ is tried to be activated by $X_1$; $\ldots$; at time step $n$, only $Y$ is tried to be activated by activated nodes in $\Pa(Y)$.
The above view of the propagating process is equivalent to BGLM, but it shows that BGLM is a general form of the LT model~\cite{kempe03} on DAG.
Thus we can show a result below similar to Theorem 1 in \cite{li2020online} for the LT model. 
For completeness, we include the proof in \OnlyInFull{Appendix~\ref{app:gom_glm}}\OnlyInShort{appendix}.
Henceforth, we use $\sigma(\bS,\btheta)$ to represent the reward function $\E[Y | \doi(\bS)]$ under parameter $\btheta$, to make the parameters of the reward explicit.
We use $\bgamma$ to represent the vector $(\gamma_X)_{X\in \bX\cup Y}$.

\begin{restatable}[GOM Bounded Smoothness of BGLM]{lemma}{thmgomglm}
\label{thm.gom_glm}
For any two weight vectors $\btheta^1,\btheta^2\in \Theta$ for a BGLM $G$, the difference of their expected reward for any intervened set $\bS$ can be bounded as
\begin{small}\begin{align}
    &\left|\sigma({\bS},\btheta^1)-\sigma({\bS},\btheta^2)\right| \leq\mathbb{E}_{\bbvarepsilon, \bgamma}\left[ \sum_{X\in {\bX}_{{\bS},Y}}
    	\!\!\! \left|\bV_{X}^\intercal (\btheta^1_X-\btheta^2_X)\right|L_{f_X}^{(1)}\right], \label{eq:GOMcond}
\end{align}\end{small}
where ${\bX}_{{\bS},Y}$ is the set of nodes in paths from ${\bS}$ to $Y$ excluding $\bS$, and $\bV_X$ is the propagation result of the parents of $X$ under parameter $\btheta^2$. The expectation is taken over the randomness of the thresholds $\bgamma$ and the noises $\bbvarepsilon$.
\end{restatable}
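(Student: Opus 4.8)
The plan is to adapt the proof of the GOM bounded smoothness of the LT model (Theorem~1 of \cite{li2020online}) to the BGLM, replacing edge-weight sums by evaluations of the link functions $f_X$ and using the first-derivative bounds $L_{f_X}^{(1)}$ of Assumption~\ref{asm:glm_3}. I work throughout in the threshold-model view of the BGLM introduced above: fix a realization of the noises $\bbvarepsilon$ and the thresholds $\bgamma$ and run the topological-order propagation $X_1,\dots,X_{n-1},Y$ \emph{twice on the same} $(\bbvarepsilon,\bgamma)$, once under $\btheta^1$ and once under $\btheta^2$. Each run is then deterministic; write $X^{(1)},X^{(2)}\in\{0,1\}$ for the activation of node $X$ in the two runs, so that $\sigma(\bS,\btheta^i)=\E_{\bbvarepsilon,\bgamma}[Y^{(i)}]$. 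By linearity of expectation, the triangle inequality, and $Y^{(i)}\in\{0,1\}$,
\[
  \bigl|\sigma(\bS,\btheta^1)-\sigma(\bS,\btheta^2)\bigr|
  \le \E_{\bbvarepsilon,\bgamma}\bigl[\,|Y^{(1)}-Y^{(2)}|\,\bigr]
  = \Pr_{\bbvarepsilon,\bgamma}\bigl\{Y^{(1)}\ne Y^{(2)}\bigr\},
\]
so it is enough to bound this probability by the right-hand side of~\eqref{eq:GOMcond}.

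For each node $X$, let $\bV_X$ be the parent-value vector of $X$ \emph{in the $\btheta^2$-run} (as in the statement), and let $B_X$ be the event that $\gamma_X$ lies between $f_X(\bV_X^\intercal\btheta^1_X)+\varepsilon_X$ and $f_X(\bV_X^\intercal\btheta^2_X)+\varepsilon_X$. I will establish (i) a \emph{localization} claim --- on the event $\{Y^{(1)}\ne Y^{(2)}\}$ some node $X\in\bX_{\bS,Y}$ satisfies $B_X$ --- and (ii) a \emph{per-node} bound: $\Pr\{B_X\}\le L_{f_X}^{(1)}\,\E_{\bbvarepsilon,\bgamma}\bigl[\,|\bV_X^\intercal(\btheta^1_X-\btheta^2_X)|\,\bigr]$. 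Given these, a union bound over the fixed set $\bX_{\bS,Y}$ yields $\Pr\{Y^{(1)}\ne Y^{(2)}\}\le\sum_{X\in\bX_{\bS,Y}}\Pr\{B_X\}\le\E_{\bbvarepsilon,\bgamma}\bigl[\sum_{X\in\bX_{\bS,Y}}|\bV_X^\intercal(\btheta^1_X-\btheta^2_X)|\,L_{f_X}^{(1)}\bigr]$, which is exactly~\eqref{eq:GOMcond}.

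For (ii) I would condition on the thresholds and noises of all strict ancestors of $X$ together with $\varepsilon_X$; since the intervention $\bS$ is fixed and nodes are processed in topological order, these quantities already determine $\bV_X$, while $\gamma_X\sim\mathcal{U}[0,1]$ is independent of them. Hence $\Pr\{B_X\mid\bV_X,\varepsilon_X\}$ equals the Lebesgue length of the interval between $f_X(\bV_X^\intercal\btheta^1_X)+\varepsilon_X$ and $f_X(\bV_X^\intercal\btheta^2_X)+\varepsilon_X$ intersected with $[0,1]$, which is at most $|f_X(\bV_X^\intercal\btheta^1_X)-f_X(\bV_X^\intercal\btheta^2_X)|$ (the $\varepsilon_X$ cancels and the clipping to $[0,1]$ only shrinks the interval), and this is $\le L_{f_X}^{(1)}|\bV_X^\intercal(\btheta^1_X-\btheta^2_X)|$ by the mean value theorem and the bounded-derivative hypothesis in Assumption~\ref{asm:glm_3}. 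Taking expectations gives~(ii).

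Claim~(i) is the crux. I would prove, by induction along the topological order, the stronger statement: if $X^{(1)}\ne X^{(2)}$ then some node $X'$ on a directed path from $\bS\cup\{X_1\}$ to $X$ (with $X'=X$ allowed) satisfies $B_{X'}$. The base case and inductive step are the same computation: given $X$ with $X^{(1)}\ne X^{(2)}$, if every parent of $X$ takes the same value in both runs then the common parent vector equals $\bV_X$ and, since $X^{(i)}=\mathbf{1}\{f_X(\bV_X^\intercal\btheta^i_X)+\varepsilon_X\ge\gamma_X\}$ disagree, $\gamma_X$ must lie between the two activation values, i.e.\ $B_X$ holds (and $X$ is reachable from $X_1$, which is a parent of every observed node); otherwise some parent $Z$ has $Z^{(1)}\ne Z^{(2)}$, and applying the hypothesis to $Z$ produces a node $X'$ that still reaches $X$ through $Z$. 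Instantiating at $X=Y$, and using that the always-active $X_1$ is a universal parent so that ``on a path from $\bS\cup\{X_1\}$ to $Y$'' is the set $\bX_{\bS,Y}$ intended in the statement, gives~(i). The hard part will be making this induction fully rigorous: arguing that discrepancies can only be \emph{created} at a $B_{X'}$ event and can only \emph{propagate} forward along directed edges, correctly handling nodes in $\bS$ (forced to $1$ in both runs and hence never discrepant) and $X_1$, and being consistent about always reading $\bV_X$ off the $\btheta^2$-run as the statement requires.
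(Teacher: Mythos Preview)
Your proposal is correct and follows essentially the same strategy as the paper: couple the $\btheta^1$- and $\btheta^2$-runs on the same $(\bbvarepsilon,\bgamma)$, localize the event $\{Y^{(1)}\ne Y^{(2)}\}$ to a node whose parents agree in both runs, bound the probability there by the Lipschitz constant $L_{f_X}^{(1)}$, and union-bound over $\bX_{\bS,Y}$.

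The execution differs in presentation. The paper inherits from \cite{li2020online} an elaborate family of auxiliary events $\mathcal{E}_1^{\bbvarepsilon},\mathcal{E}_{2,0}^{\bbvarepsilon},\mathcal{E}_{2,1}^{\bbvarepsilon},\mathcal{E}_{2,2}^{\bbvarepsilon},\mathcal{E}_{3,1}^{\bbvarepsilon},\mathcal{E}_{3,2}^{\bbvarepsilon},\mathcal{E}_{4,0}^{\bbvarepsilon}$, conditions on $\bgamma_{-X}$, and computes the probability that $X$ is the \emph{globally first} (in topological order) discrepant node. You instead define a single event $B_X$ using the $\btheta^2$-run parent vector, trace back along a path of discrepant parents via induction (so your $X'$ need not be the globally first discrepancy, only a node with agreeing parents that is an ancestor of $Y$), and condition directly on the ancestor thresholds/noises plus $\varepsilon_X$ to isolate $\gamma_X$. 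Your version is more compact and avoids the $\mathcal{E}_{2,0}\to\mathcal{E}_{4,0}$ relaxation step; the paper's version tracks the LT-model proof more faithfully, which has the advantage of making the adaptation transparent. You also explicitly flag the $X_1$/$\bX_{\bS,Y}$ subtlety (your ``$\bS\cup\{X_1\}$'' remark), which the paper handles only implicitly by asserting $\mathcal{E}_0^{\bbvarepsilon}(Y)\subseteq\cup_{X\in\bX_{\bS,Y}}\mathcal{E}_1^{\bbvarepsilon}(X)$; your observation that $X_1$ being a universal parent makes this set coincide with all non-$\bS$ ancestors of $Y$ is the right justification.
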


We can now prove the regret bound of Algorithm \ref{alg:glm-ucb}. 
In the proof of Theorem \ref{thm.regret_glm}, we use Lecu\'{e} and Mendelson's inequality \cite{nie2021matrix} to prove that our initialization step has a very high probability to meet the needs of Lemma \ref{thm.learning_glm}. 
Then we use Lemma \ref{thm.gom_glm} to transform the regret to the sum of $\left\|\bV_{t,X}\right\|_{M_{t-1,X}^{-1}}$, which can be bounded using a similar lemma of Lemma 2 in \cite{li2017provably}. The details of the proof is put in \OnlyInFull{Appendix~\ref{app:regret_glm}}\OnlyInShort{appendix} for completeness. 

\begin{restatable}[Regret Bound of BGLM-OFU]{theorem}{thmregretglm}
\label{thm.regret_glm}
Under Assumptions \ref{asm:glm_3}, \ref{asm:glm_2} and \ref{asm:glm_4}, the regret of BGLM-OFU (Algorithm~\ref{alg:glm-ucb} and~\ref{alg:glm-est}) is bounded as
\begin{equation}\begin{aligned}
    \label{equation.regret_glm}
    R(T)&=O\left(\frac{1}{\kappa} n L^{(1)}_{\max} \sqrt{DT}\log T\right),
%    \\
%    &=\widetilde{O}\left(\frac{1}{\kappa} n L^{(1)}_{\max} \sqrt{DT} \right).
\end{aligned}
\end{equation}
where the terms of $o(\sqrt{T})$ are omitted.
\end{restatable}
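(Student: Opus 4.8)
The plan is to bound the regret in two parts corresponding to the two phases of Algorithm~\ref{alg:glm-ucb}. During the initialization phase of length $T_0$, each round contributes at most $\E[Y\mid\doi(\bS^*)]\le 1$ to the regret (since $Y$ is binary), so this phase contributes at most $T_0 = O\!\left(\max\{\frac{c}{\zeta^2}\ln\frac1\delta,\,\frac{n^2 R}{\zeta}\}\right)$. Plugging in $\delta = \frac{1}{3n\sqrt T}$ and $R = O\!\left(\frac{D L_{\max}^{(2)\,2}}{\kappa^4}(D^2 + \ln(n\sqrt T))\right)$ makes $T_0$ polylogarithmic in $T$ (and polynomial in the graph parameters $n, D, 1/\kappa, 1/\zeta$), hence $T_0 = o(\sqrt T)$ and this term is absorbed into the omitted lower-order terms.

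\textbf{Next I would control the iterative phase.} First, I would invoke Lecu\'e and Mendelson's inequality (Lemma~11 in the appendix) together with Assumption~\ref{asm:glm_4}: after $T_0$ rounds of pure observation, with probability at least $1-\delta$ the condition \eqref{eq:lambdamin} on $\lambda_{\min}(M_{t,X})$ holds for every $X\in\bX\cup\{Y\}$ and every $t\ge T_0$ (since adding more rounds only increases eigenvalues). Conditioned on this event, Lemma~\ref{thm.learning_glm} gives, with probability at least $1-3\delta$, that $\btheta^*_X\in\mathcal{C}_{t,X}$ for all $X$ and all $t$ in the iterative phase, i.e. the true parameter lies in the confidence ellipsoid. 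On this good event, by the optimistic choice in line~\ref{alg:OFUargmax}, $\sigma(\bS_t,\tilde\btheta_t)\ge \sigma(\bS^*,\btheta^*)$, so the per-round regret is bounded by $\sigma(\bS_t,\tilde\btheta_t)-\sigma(\bS_t,\btheta^*)$. Applying Lemma~\ref{thm.gom_glm} (GOM bounded smoothness) to the pair $\tilde\btheta_t,\btheta^*$ bounds this by $\E_{\bbvarepsilon,\bgamma}\big[\sum_{X\in\bX_{\bS_t,Y}} |\bV_X^\intercal(\tilde\btheta_{t,X}-\btheta^*_X)|\,L_{f_X}^{(1)}\big]$. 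Since both $\tilde\btheta_{t,X},\btheta^*_X\in\mathcal{C}_{t,X}$, the triangle inequality in the $M_{t-1,X}$-norm gives $|\bV_X^\intercal(\tilde\btheta_{t,X}-\btheta^*_X)|\le 2\rho\|\bV_X\|_{M_{t-1,X}^{-1}}$, so the per-round regret is at most $2\rho L_{\max}^{(1)}\,\E[\sum_{X\in\bX_{\bS_t,Y}}\|\bV_{t,X}\|_{M_{t-1,X}^{-1}}]\le 2\rho L_{\max}^{(1)}\sum_{X\in\bX\cup\{Y\}}\E[\|\bV_{t,X}\|_{M_{t-1,X}^{-1}}]$, using that the observed parent vector in round $t$ realizes the randomness of $\bV_{t,X}$.

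\textbf{Summing over $t$ from $T_0+1$ to $T$ and handling each node separately}, I would apply a version of Lemma~2 of \cite{li2017provably} (the elliptical-potential / Cauchy--Schwarz argument): $\sum_{t=T_0+1}^{T}\|\bV_{t,X}\|_{M_{t-1,X}^{-1}}\le \sqrt{T\sum_{t}\|\bV_{t,X}\|^2_{M_{t-1,X}^{-1}}} = O(\sqrt{|\Pa(X)|\,T\log T})$, where the log factor comes from $\log\det(M_{T,X})/\det(M_{T_0,X})$ and $\|\bV_{t,X}\|\le\sqrt{|\Pa(X)|}$. Summing over the $n$ nodes and using $|\Pa(X)|\le D$ yields $O(n\sqrt{DT\log T})$. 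Multiplying by $2\rho L_{\max}^{(1)} = O(\frac{1}{\kappa}\sqrt{\log(1/\delta)}\,L_{\max}^{(1)}) = O(\frac{1}{\kappa}L_{\max}^{(1)}\sqrt{\log(n\sqrt T)})$ gives the stated $O\!\left(\frac{1}{\kappa} n L_{\max}^{(1)}\sqrt{DT}\log T\right)$. Finally, on the low-probability bad event (total probability $O(\delta)$ by a union bound over the $O(nT)$ estimation events and the initialization event), the regret is at most $T\le n\sqrt T\cdot\frac{1}{\delta}\cdot O(1)$... more carefully, it contributes $O(\delta T\cdot 1) = O(\sqrt T / n)$, again $o(\sqrt T)$. \textbf{The main obstacle I anticipate} is the careful bookkeeping to guarantee that a single initialization of length $T_0$ simultaneously ensures \eqref{eq:lambdamin} for \emph{all} nodes and \emph{all} subsequent rounds with high probability — this requires the concentration from Lecu\'e–Mendelson's inequality to be uniform over nodes (hence the $\ln(1/\delta)$ with $\delta = \frac{1}{3n\sqrt T}$ to absorb the union bound) and relies essentially on Assumption~\ref{asm:glm_4} to lower-bound the relevant second-moment matrices under the no-intervention observational distribution; getting the constants in $R$ and $T_0$ to line up so that $T_0 = o(\sqrt T)$ is the delicate part.
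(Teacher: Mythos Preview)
Your proposal is correct and follows essentially the same approach as the paper's proof: split the regret into the initialization phase (contributing $T_0=o(\sqrt T)$) and the iterative phase, use Lecu\'e--Mendelson's inequality with Assumption~\ref{asm:glm_4} to guarantee the eigenvalue condition~\eqref{eq:lambdamin}, invoke Lemma~\ref{thm.learning_glm} for the confidence ellipsoid, apply optimism and Lemma~\ref{thm.gom_glm}, then finish with the Cauchy--Schwarz/elliptical-potential bound $\sum_t\|\bV_{t,X}\|_{M_{t-1,X}^{-1}}=O(\sqrt{DT\log T})$ summed over the $n$ nodes. The only cosmetic difference is that the paper bounds the bad-event contribution per round (so it appears as $(6\delta+\exp(-T_0\zeta^2/c))n(T-T_0)$ in the final sum) rather than via a single union bound, and your ``main obstacle'' is handled simply by the monotonicity $\lambda_{\min}(M_{t-1,X})\ge\lambda_{\min}(M_{T_0,X})$.
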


{\bf Remarks. } The leading term of the regret in terms of $T$ is in the order of $O(\sqrt{T}\log T )$, which is commonly
	seen in confidence ellipsoid based bandit or combinatorial bandit algorithms (e.g.~\cite{abbasi2011improved,li2020online,zhang2022online}). Also, it matches the regret of previous causal bandits algorithm, C-UCB in \cite{lu2020regret}, which works on the atomic setting.
The term $L^{(1)}_{\max}$ reflects the rate of changes in $f_X$'s, and intuitively, the higher rate of changes in $f_X$'s, the larger the regret since 
	the online learning algorithm inevitably leads to some error in the parameter estimation, which will be amplified by the rate of changes in $f_X$'s.
Technically, $L^{(1)}_{\max}$ comes from the $L_{f_X}^{(1)}$ term in the GOM condition (Eq.\eqref{eq:GOMcond}).
Term $n$ is to relax the sum over ${\bX}_{\bS,Y}$ in Eq.\eqref{eq:GOMcond}, and it could be made tighter in causal graphs where $|{\bX}_{\bS,Y}|$ is significantly smaller than $n$,
	and intuitively it means that all nodes on the path from $\bS$ to $Y$ would contribute to the regret.
Term $\sqrt{D}$ implies that the regret depends on the number of parents of nodes, and technically it is because 
	the scale of $\left\|\bV_{t,X}\right\|_{M_{t-1,X}^{-1}}$ is approximately $\sqrt{|\Pa(X)|} \le \sqrt{D}$, 
	and we bound the regret as the sum of $\left\|\bV_{t,X}\right\|_{M_{t-1,X}^{-1}}$'s as we explained earlier.
Term $\frac{1}{\kappa}$ comes from the learning problem (Lemma~\ref{thm.learning_glm}), which is also adopted in the regret bound of UCB-GLM 
	of \cite{li2017provably} using a similar learning problem. For the budget $K$, it does not appear in our regret because it is not directly related to the number of parameters we are estimating.
	
While our algorithm and analysis are based on several past studies~\cite{li2017provably,zhang2022online,li2020online}, our innovation includes
	(a) the initialization phase and its corresponding Assumption~\ref{asm:glm_4} and its more involved analysis, because in our model we do not have direct observations
	of one-step immediate causal effect; and
	(b) the integration of the techniques from these separate studies, such as the maximum likelihood based analysis of~\cite{li2017provably}, the pseudo log-likelihood function
	of~\cite{zhang2022online}, and the GOM condition analysis of \cite{li2020online}, whereas each of these studies alone is not enough to achieve our result.

In line~\ref{alg:OFUargmax} of Algorithm~\ref{alg:glm-ucb}, the $\argmax$ operator needs a simultaneous optimization over both the intervention set $\bS$ and parameters $\btheta'$.
This could be a computationally hard problem for large-scale problems, but since we focus on the online learning aspect of the problem, we leave the computationally-efficient 
	solution for the full problem as future work, and such treatment is often seen in other combinatorial online learning problems (e.g.~\cite{combes2015combinatorialfeedback,li2020online}).

\section{Algorithms for BLM with Hidden Variables}
\label{sec.hidden}

Previous sections consider only Markovian models without hidden variables.
In many causal models, hidden variables exist to model the latent confounding factors.
In this section, we present results on CCB under the linear model BLM with hidden variables.

\subsection{Transforming the Model with Hidden Variables to the one without Hidden Variables}
\label{sec:transformBLM}
To address hidden variables, we first show how to reduce the BLM with hidden variables to a corresponding one without
	the hidden variables.
Suppose the hidden variables in $G=(\bU\cup \bX \cup \{Y\},E)$ are ${\bf U}=\{U_0,U_1,U_2,\cdots\}$, and
	we use $X_i,X_j$'s to represent observed variables.
Without loss of generality, we let $U_0$ always be $1$ and it only has out-edges pointing to other observed or unobserved nodes, to model
the self-activations of nodes.
We allow various types of connections involving the hidden nodes, including edges from observed nodes to hidden nodes and edges among
	hidden nodes, but we disallow the situation where a hidden node $U_s$ with $s>0$ has two paths to $X_i$ and $X_i$'s descendant $X_j$ and the
	paths contain only hidden nodes except the end points $X_i$ and $X_j$. 
Figure \ref{Fig.hidden} is an example of a causal graph allowed for this section.

\begin{figure}[htb] 
\centering 
\includegraphics[width=0.40\textwidth]{./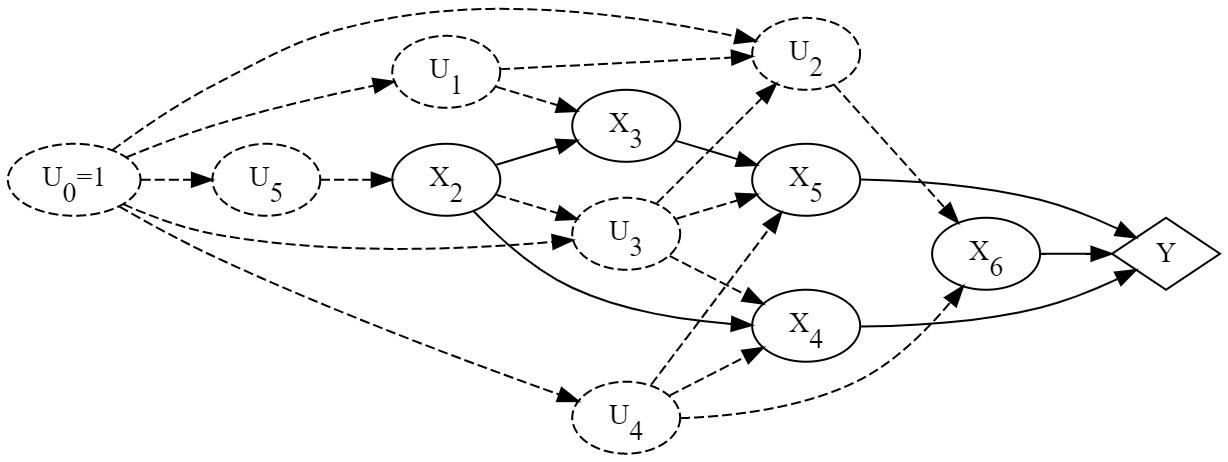}
\caption{An Example of BLM with Hidden Variables}
\label{Fig.hidden} 
\end{figure}

Our idea is to transform such a general causal model into an equivalent Markovian model $G'=(\{X_1\}\cup {\bf X}\cup\{Y\},E')$. 
For convenience, we assume $X_1$ is not in the original observed variable set $\bX\cup \{Y\}$.
Henceforth, all notations with~$'$, such as $\Pr',\E', \btheta^{*'}, \Pa'(X)$ correspond to the new Markovian model $G'$, and
	the notations $\Pr, \E$ without $'$ refer to the original model $G$.
For any two observed nodes $X_i,X_j$, a hidden path $P$ from $X_i$ to $X_j$ is a directed path from $X_i$ to $X_j$ where all intermediate nodes are hidden or
	there are no intermediate nodes.
We define $\theta^*_P$ to be the multiplication of weights on path $P$.
Let $\cP^u_{X_i,X_j}$ be the set of hidden paths from $X_i$ to $X_j$.
If $\cP^u_{X_i,X_j} \ne \emptyset$, then we add edge $(X_i,X_j)$ into $E'$, and its weight
	$\theta^{*'}_{X_i,X_j}=\sum_{P\in \cP^u_{X_i,X_j}}\theta^*_P$.
As in the Markovian model, $X_1$ is always set to $1$, and for each $X_i \in \bX\cup \{Y\}$, we add edge
	$(X_1, X_i)$ into $E'$, with weight $\theta^{*'}_{X_1,X_i}= \Pr\left\{X_i=1|\doi\left({\bf X} \cup \{Y\} \setminus\{X_i\}={\bf 0}\right)\right\}$.
The noise variables $\bbvarepsilon$ are carried over to $G'$ without change.

The following lemma shows that the new model $G'$ has the same parent-child conditional probability as the original model $G$.
The proof of this lemma utilizes the do-calculus rules for causal inference~\cite{Pearl09}.
\begin{restatable}{lemma}{lemmahidden}
\label{lemma.hidden}
For any $X\in{\bf X}\cup\{Y\}$, any $\bS \subseteq \bX$, any value $\pa'(X) \in \{0,1\}^{|\Pa'(X)|}$, any value $\bs \in \{0,1\}^{|\bS|}$
	($\bs$ is consistent with $\pa'(X)$ on values in $\bS \cap \Pa'(X)$), we have
\begin{align*}
    &\Pr\left\{X=1|\Pa'(X)\setminus\{X_1\}=\pa'(X)\setminus\{x_1\},\doi(\bS=\bs)\right\}\\
    &=\Pr{'}\left\{X=1|\Pa'(X)=\pa'(X),\doi(\bS=\bs)\right\}.
\end{align*}
\end{restatable}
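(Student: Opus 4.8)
The plan is to show that both sides of the claimed identity equal the single explicit linear expression $\btheta^{*'}_X\cdot\pa'(X)$, and hence each other; the case $X\in\bS$ is trivial (both sides then equal the value that $\bs$ assigns to $X$), so assume $X\notin\bS$. The right-hand side is immediate: $G'$ is a Markovian BLM, so $\Pr'\{X=1\mid\Pa'(X)=\pa'(X)\}=\btheta^{*'}_X\cdot\pa'(X)+\varepsilon_X$ with $\varepsilon_X$ zero-mean and independent of the non-descendants of $X$, so the probability equals $\btheta^{*'}_X\cdot\pa'(X)$; intervening additionally on $\bS$ — whose values agree with $\pa'(X)$ on $\bS\cap\Pa'(X)$ — does not touch the structural conditional of $X$ given its parents. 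It thus remains to show that the left-hand side, evaluated in the original model $G$, also equals $\btheta^{*'}_X\cdot\pa'(X)$.

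For the left-hand side I would recursively unroll the linear structural equations of $G$ starting from $X$. Write $\mathbf A=\Pa'(X)\setminus\{X_1\}$ and let $\mathbf C$ abbreviate the conditioning event that $\mathbf A=\pa'(X)\setminus\{x_1\}$ holds together with $\doi(\bS=\bs)$. Taking the conditional expectation of the structural equation of $X$ and using the local Markov property together with $\E[\varepsilon_X\mid\mathbf C]=0$ gives $\Pr\{X=1\mid\mathbf C\}=\sum_{Z\in\Pa(X)}\theta^*_{Z,X}\,\Pr\{Z=1\mid\mathbf C\}$; both prerequisites hold by d-separation, since $\mathbf A$ contains only ancestors of $X$ and $\doi(\bS)$ makes the nodes of $\bS$ exogenous. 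I then iterate: each observed parent of $X$ lies in $\mathbf A$ and so is pinned to its conditioned value (consistent with $\bs$ if it is also in $\bS$), each hidden parent is expanded further, and the recursion stops whenever an observed node is reached — every observed node reached this way has a hidden path to $X$, hence belongs to $\Pa'(X)$ and thus to $\mathbf A$ — or the constant node $U_0$. Since the hidden sub-DAG is acyclic, the recursion terminates; grouping terms by hidden path and using that $\sum_{P\in\cP^u_{X_i,X}}\theta^*_P=\theta^{*'}_{X_i,X}$ by construction of $G'$ and that $\{X_i:\cP^u_{X_i,X}\neq\emptyset\}=\mathbf A$, it yields
\begin{align*}
\Pr\{X=1\mid\mathbf C\}=\sum_{P:\,U_0\rightsquigarrow X}\theta^*_P+\sum_{X_i\in\mathbf A}\theta^{*'}_{X_i,X}\,\pa'_i(X),
\end{align*}
where the first sum runs over hidden paths from $U_0$ to $X$.

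To finish, identify $\sum_{P:\,U_0\rightsquigarrow X}\theta^*_P$ with $\theta^{*'}_{X_1,X}$: applying the same unrolling to $\Pr\{X=1\mid\doi(({\bf X}\cup\{Y\})\setminus\{X\}={\bf 0})\}$, every observed-node contribution vanishes because those nodes have value $0$, so only the $U_0$-path sum survives and this interventional quantity equals $\sum_{P:\,U_0\rightsquigarrow X}\theta^*_P$; but by definition of $G'$ it is $\theta^{*'}_{X_1,X}$. Substituting back gives that the left-hand side equals $\theta^{*'}_{X_1,X}+\sum_{X_i\in\mathbf A}\theta^{*'}_{X_i,X}\,\pa'_i(X)=\btheta^{*'}_X\cdot\pa'(X)$, matching the right-hand side.

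I expect the crux to be the d-separation bookkeeping behind the unrolling step at a hidden node $U_s$ with $s>0$: there $\E[\varepsilon_{U_s}\mid\mathbf C]=0$ holds only if no observed node of $\mathbf A$ is a descendant of $U_s$. If it failed, tracing a directed path from $U_s$ down to such an observed node and cutting it at the first observed node $X_m$ it meets would produce a hidden path from $U_s$ to $X_m$; together with the hidden path from $U_s$ to $X$ and the fact that $X$ is a descendant of $X_m$, this is exactly the configuration forbidden by this section's structural assumption. Making this implication precise in the mutilated graph $G_{\overline{\bS}}$ via the do-calculus rules is the one genuinely nontrivial point; the rest is path algebra and the definitions of the transformed edge weights.
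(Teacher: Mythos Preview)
Your proposal is correct and reaches the same conclusion, but by a genuinely different route than the paper. The paper's proof first applies the second rule of do-calculus in both $G'$ and $G$ to replace conditioning on $\Pa'(X)$ by \emph{intervention} on $\Pa'(X)$, thereby reducing the lemma to the purely interventional identity $\E[X\mid\doi(\Pa'(X)\setminus\{X_1\},\bS)]=\E'[X\mid\doi(\Pa'(X),\bS)]$, which is then read off as a special case of Lemma~\ref{thm:property_of_linear_model} (the path-sum identity for the reward). Your approach instead keeps the conditioning throughout and unrolls the linear structural equations of $G$ directly, arguing at each hidden ancestor $U_s$ that the unrolling is valid because no conditioned node in $\mathbf A$ is a descendant of $U_s$. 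Both arguments invoke the structural assumption on hidden variables at exactly the same point---the paper to verify the d-separation premise of do-calculus rule~2 in the trimmed graph $\tilde G$, and you to justify the recursion step at hidden nodes---and your final paragraph correctly isolates this as the crux and sketches the right contradiction. The paper's route is more modular (it factors through the already-proven Lemma~\ref{thm:property_of_linear_model} and treats do-calculus as a black box, invoking the structural assumption once globally), while yours is more elementary and self-contained, at the cost of repeating the d-separation reasoning at each level of the recursion.
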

The next lemma shows that the objective function is also the same for $G'$ and $G$.

\begin{restatable}{lemma}{thmlinearproperty}
\label{thm:property_of_linear_model}
For any $\bS \subseteq \bX$, any value $\bs \in \{0,1\}^{|\bS|}$, we have
$\mathbb{E}[Y|\doi({\bf S}={\bf s})]=\mathbb{E}'[Y|\doi({\bf S}={\bf s})]$.
\end{restatable}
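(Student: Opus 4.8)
The plan is to expand $\E[Y\mid\doi(\bS=\bs)]$ in the original model $G$ and in the new model $G'$ using the causal factorization along a fixed topological order, and to match the two term by term using Lemma~\ref{lemma.hidden}. Concretely, fix a topological order $X_1, X_2, \dots, X_{n-1}, Y$ of the observed nodes that is consistent with the edge set $E'$ of $G'$ (such an order exists since $G'$ is a DAG, and one can check it is also consistent with the ancestor relation among observed nodes in $G$). For the Markovian model $G'$, by the standard truncated factorization for interventions we have
\begin{align*}
\Pr{}'\{\bX\cup\{Y\}=\bx \mid \doi(\bS=\bs)\} = \prod_{X\notin\bS} \Pr{}'\{X=x \mid \Pa'(X)=\pa'(X)\},
\end{align*}
where the values $\pa'(X)$ are read off from $\bx$ (and from $\bs$ on the intervened coordinates), and $X_1$ is fixed to $1$. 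Summing $x_Y\cdot(\text{this product})$ over all $\bx$ gives $\E'[Y\mid\doi(\bS=\bs)]$.

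Next I would produce the analogous expansion for $G$. Here the subtlety is the hidden variables $\bU$: the truncated factorization in $G$ runs over $\bU\cup\bX\cup\{Y\}$, so I would first marginalize out $\bU$. The key structural observation — guaranteed by the no-two-hidden-paths restriction stated in Section~\ref{sec:transformBLM} — is that, after marginalizing the hidden nodes, the joint law of the observed variables under $\doi(\bS=\bs)$ again factorizes as a product over non-intervened observed nodes $X$ of conditional probabilities $\Pr\{X=x \mid \Pa'(X)\setminus\{X_1\} = (\dots),\ \doi(\bS=\bs)\}$; this is exactly the left-hand side of Lemma~\ref{lemma.hidden}. I would justify this factorization by induction along the topological order: conditioning on the values of the observed ancestors of $X$, the value of $X$ depends on the hidden nodes only through hidden paths into $X$, and the no-two-hidden-paths assumption ensures these hidden-path contributions into different observed nodes do not share a common hidden ancestor in a way that would create extra observed-observed dependence, so the relevant conditional is a function only of $X$'s observed parents in $G'$. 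Linearity of the BLM is what makes the per-node contribution collapse to the single aggregated weight $\theta^{*'}_{X_i,X_j}=\sum_{P}\theta^*_P$, matching the construction of $G'$.

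Once both $\E[Y\mid\doi(\bS=\bs)]$ and $\E'[Y\mid\doi(\bS=\bs)]$ are written as sums over $\bx$ of $x_Y$ times a product of per-node conditionals with the \emph{same} index set and the \emph{same} arguments, Lemma~\ref{lemma.hidden} equates the two models' conditionals factor by factor, and therefore the two sums are identical. This yields $\E[Y\mid\doi(\bS=\bs)]=\E'[Y\mid\doi(\bS=\bs)]$.

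The main obstacle is the middle step: rigorously establishing that marginalizing out $\bU$ in $G$ under $\doi(\bS=\bs)$ yields a clean product-form distribution over the observed variables whose factors are precisely the conditionals appearing in Lemma~\ref{lemma.hidden}. This is where the disallowed-configuration hypothesis (no hidden node $U_s$, $s>0$, having two all-hidden-interior paths to $X_i$ and to a descendant $X_j$ of $X_i$) must be used carefully — it is exactly what prevents a latent confounder from inducing dependence between an observed node and one of its observed descendants that is not already captured by an edge of $G'$. I expect to handle this by an induction on the topological order, invoking do-calculus (Rule 2/Rule 3) and the linearity of the structural equations to push the hidden-variable marginalization through, and then reading off that the resulting factor at node $X$ coincides with the left side of Lemma~\ref{lemma.hidden}; the remaining bookkeeping (consistency of $\bs$ with $\pa'(X)$ on $\bS\cap\Pa'(X)$, the role of $X_1$) is routine.
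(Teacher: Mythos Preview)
Your approach has a genuine gap, and it differs substantially from the paper's route.

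\textbf{Circularity.} In this paper, Lemma~\ref{lemma.hidden} is proved \emph{after} and \emph{using} Lemma~\ref{thm:property_of_linear_model}: the proof of Lemma~\ref{lemma.hidden} (see Eq.~\eqref{eq.middledoparents}) explicitly invokes Lemma~\ref{thm:property_of_linear_model} with $Y$ replaced by $X$. So you cannot cite Lemma~\ref{lemma.hidden} here without an independent proof of it, which you have not supplied.

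\textbf{The factorization step fails.} More fundamentally, your ``key structural observation'' --- that after marginalizing out $\bU$ the observed joint under $\doi(\bS=\bs)$ factorizes as a product of the conditionals $\Pr\{X\mid \Pa'(X)\setminus\{X_1\},\doi(\bS=\bs)\}$ --- is false in the generality allowed here. The paper's restriction only forbids a hidden node $U_s$, $s>0$, from having all-hidden paths to an observed node \emph{and a descendant of it}; it does \emph{not} forbid a hidden $U_s$ from confounding two observed nodes that are not in an ancestor--descendant relation. For instance, if $U_1\to X_2$ and $U_1\to X_3$ with $X_2,X_3$ incomparable, then in $G$ the pair $(X_2,X_3)$ is correlated through $U_1$, whereas in $G'$ they are conditionally independent given $X_1$. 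Thus $\Pr$ and $\Pr'$ on $\bX\cup\{Y\}$ are \emph{not} equal in general, and no factor-by-factor matching of joint laws can succeed. What \emph{is} preserved between $G$ and $G'$ is only the marginal expectation of each observed node (hence of $Y$), not the joint.

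\textbf{What the paper does instead.} The paper exploits linearity of expectation directly: it first shows the one-step recursion $\E[X\mid\doi(\bS=\bs)]=\sum_{Z\in\Pa(X)}\theta^*_{Z,X}\,\E[Z\mid\doi(\bS=\bs)]$ for $X\notin\bS$ (Eq.~\eqref{equation.property_linear}), unrolls it into the path-sum formula $\E[Y\mid\doi(\bS=\bs)]=\sum_{X\in\bS}s_X\sum_{P}\theta^*_P+\sum_{P\ni U_0}\theta^*_P$ (Eq.~\eqref{eq:EYold}), does the same in $G'$, and then matches the two path sums combinatorially using the definition $\theta^{*'}_{X_i,X_j}=\sum_{P\in\cP^u_{X_i,X_j}}\theta^*_P$. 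This argument never compares joint laws, only expectations, which is why the weaker hidden-structure assumption suffices. If you want to salvage your inductive outline, the fix is to induct on $\E[X_i\mid\doi(\bS=\bs)]$ along the topological order (not on conditionals or joints), which is exactly the paper's recursion.
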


The above two lemmas show that from $G$ to $G'$ the parent-child conditional probability and the reward function are all remain unchanged.

\subsection{Applying BGLM-OFU on BLM}

With the transformation described in Section~\ref{sec:transformBLM} and its properties summarized in Lemmas~\ref{lemma.hidden} and~\ref{thm:property_of_linear_model},
	we can apply Algorithm~\ref{alg:glm-ucb} on $G'$ to achieve the learning result for $G$.
More precisely, we can use the observed data of $X$ and $\Pa'(X)$ to estimate parameters $\btheta^{*'}_X$ and minimize the regret on the reward 
	$\mathbb{E}'[Y|\doi({\bf S})]$, which is the same as $\mathbb{E}[Y|\doi({\bf S})]$.
Furthermore, under the linear model BLM, Assumptions~\ref{asm:glm_3} and~\ref{asm:glm_2} hold with $L_{f_X}^{(1)} = \kappa = 1$ and $L_{f_X}^{(2)}$ could be
	any constant greater than $0$.
We still need Assumption~\ref{asm:glm_4}, but we change the $\Pa(X)$'s in the assumption to $\Pa'(X)$'s.
Then we can have the following regret bound.
\begin{theorem}[Regret Bound of Algorithm \ref{alg:glm-ucb} for BLM]
\label{thm.blm_regret}
Under Assumption~\ref{asm:glm_4}, Algorithm \ref{alg:glm-ucb} has the following regret bound when running on BLM with hidden variables:
\begin{align}
    R(T)=O\left(n\sqrt{DT}\log T\right), 
\end{align}
where $n$ is the number of nodes in $G'$, and $D = \max_{X\in \bX\cup\{Y\}} |\Pa'(X)|$ is the maximum in-degree in $G'$.
\end{theorem}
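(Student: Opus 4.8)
The plan is to lift the whole problem to the Markovian model $G'$ constructed in Section~\ref{sec:transformBLM}, use the equivalence Lemmas~\ref{lemma.hidden} and~\ref{thm:property_of_linear_model} to argue that running Algorithm~\ref{alg:glm-ucb} on $G$ is literally an instance of the BGLM-CCB problem of Section~\ref{sec.glm} with causal graph $G'$, and then specialize the general regret bound of Theorem~\ref{thm.regret_glm} to the linear case. First I would observe that every node of $G'$, namely $X_1$ together with $\bX\cup\{Y\}$, is observed in $G$ ($X_1$ being the deterministic all-ones variable, which is exactly the special node the Markovian model requires), so the data pairs $(\bV_{i,X},X^i)$ needed by BGLM-Estimate (Algorithm~\ref{alg:glm-est}) can all be reconstructed from the feedback collected in $G$.

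The core equivalence step is distributional: by Lemma~\ref{lemma.hidden}, for every $X$ and every $\bS\subseteq\bX$ the parent--child conditional probability in $G'$ under $\doi(\bS=\bs)$ matches that of the original model $G$; proceeding by induction along a topological order $X_1,X_2,\dots,Y$ of $G'$, this gives that the joint distribution of $(\bX,Y)$ under any $\doi(\bS)$ is identical in $G$ and $G'$. Hence the observations gathered during both the initialization phase and the iterative phase have exactly the distribution prescribed by the Markovian model $G'$, so Lemmas~\ref{thm.learning_glm} and~\ref{thm.gom_glm} apply verbatim to $G'$. Moreover, Lemma~\ref{thm:property_of_linear_model} gives $\E[Y|\doi(\bS)]=\E'[Y|\doi(\bS)]$ for every $\bS$, so $\bS^*$ is the same maximizer in both models and the per-round regret is unchanged; therefore $R(T)$ on $G$ equals the regret of the run interpreted on $G'$, and it suffices to bound the latter.

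Next I would check that Theorem~\ref{thm.regret_glm} applies to $G'$. Since $G'$ is a BLM, each $f_X$ is the identity, which is twice differentiable with first derivative $1$ and second derivative $0$; thus Assumption~\ref{asm:glm_3} holds with $L^{(1)}_{f_X}=1$ (hence $L^{(1)}_{\max}=1$) and $L^{(2)}_{f_X}$ any positive constant, Assumption~\ref{asm:glm_2} holds with $\kappa=1$, and Assumption~\ref{asm:glm_4} is supplied by the hypothesis (with $\Pa'$ replacing $\Pa$), which is precisely what the initialization phase of Algorithm~\ref{alg:glm-ucb} needs to force Eq.~\eqref{eq:lambdamin} to hold on $G'$. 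Substituting $\kappa=1$ and $L^{(1)}_{\max}=1$ into Eq.~\eqref{equation.regret_glm} yields $R(T)=O\!\left(n\sqrt{DT}\log T\right)$ with $n$ the number of nodes of $G'$ and $D=\max_{X}|\Pa'(X)|$, as claimed.

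The hard part will be the two soundness checks that underlie the reduction: (i) that the transformation of Section~\ref{sec:transformBLM} really outputs a bona fide Markovian BLM --- in particular that the collapsed edge weights $\theta^{*'}_{X_i,X_j}=\sum_{P\in\cP^u_{X_i,X_j}}\theta^*_P$ and the self-activation weights $\theta^{*'}_{X_1,X_i}$ leave every conditional probability in $[0,1]$ and keep $\btheta^{*'}$ inside a feasible domain $\Theta$ of the type used in Section~\ref{sec.glm} (this is exactly where the structural restriction forbidding a hidden node $U_s$ with $s>0$ having two hidden-only paths to an ancestor/descendant pair is needed); and (ii) that the inductive propagation argument built on Lemma~\ref{lemma.hidden} is valid under arbitrary do-interventions on subsets of $\bX$, not just on the empty intervention used during initialization. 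Once $G'$ is certified as a valid Markovian BLM satisfying Assumptions~\ref{asm:glm_3}--\ref{asm:glm_4} with the constants above, the regret bound is an immediate corollary of Theorem~\ref{thm.regret_glm}.
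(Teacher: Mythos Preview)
Your proposal is correct and follows essentially the same route as the paper: transform $G$ into the Markovian $G'$ via Section~\ref{sec:transformBLM}, invoke Lemmas~\ref{lemma.hidden} and~\ref{thm:property_of_linear_model} to identify both the feedback distribution and the reward function, observe that the identity link gives $\kappa=L^{(1)}_{\max}=1$ so Assumptions~\ref{asm:glm_3}--\ref{asm:glm_2} are free, and then read off the bound from Theorem~\ref{thm.regret_glm}. The paper's argument is in fact shorter than yours --- it states this reduction in a single paragraph before the theorem and does not spell out the inductive joint-distribution matching or the two soundness checks you flag, but those are exactly the details one would want to verify and your treatment of them is appropriate.
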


{\bf Remarks.} We compare our regret bound to the one in \cite{li2020online} for the online influence maximization problem under the LT model, 
which is a special case of our BLM model with $\varepsilon_X = 0$ for all $X$'s.
Our regret bound is $O(n^{\frac{3}{2}}\sqrt{T}\ln T)$ while theirs is $O(n^{\frac{9}{2}}\sqrt{T}\ln T)$.
The $n^3$ factor saving comes from three sources:
	(a) our reward is of scale $[0,1]$ while theirs is $[0,n]$, and this saves one $n$ factor;
	(b) our BLM is a DAG, and thus we can equivalently fix the activation steps as described before Lemma~\ref{thm.gom_glm}, causing our
	GOM condition (Lemma~\ref{thm.gom_glm}) to save another factor of $n$; and
	(c) we use MLE, which requires an initialization phase and Assumption~\ref{asm:glm_4} to give an accurate estimate of $\btheta^*$, while
	their algorithm uses linear regression without an initialization phase, which means we tradeoff a factor of $n$ with an additional Assumption~\ref{asm:glm_4}.

\subsection{Algorithm for BLM based on Linear Regression}

Now we have already introduced how to use Algorithm~\ref{alg:glm-ucb} and \ref{alg:glm-est} to solve the online BLM CCB problem with hidden variables. However, in order to meet the needs of Lemma~\ref{thm.learning_glm}, we have to process an initialization phase. Assumption~\ref{asm:glm_4} needs to hold for the Markovian model $G'$ after the transformation. 
We can remove the initialization phase and the dependency on Assumption~\ref{asm:glm_4}, by
	noticing that our MLE based on pseudo log-likelihood maximization is equivalent to linear regression when adopted on BLMs.
In particular, we use Lemma 1 in \cite{li2020online} to replace Lemma~\ref{thm.learning_glm} for MLE. We rewrite it as Lemma~\OnlyInFull{\ref{lemma.learning_glm}}\OnlyInShort{11} in \OnlyInFull{Appendix~\ref{app.blmlr}}\OnlyInShort{appendix}.

\begin{algorithm}[t]
\caption{BLM-LR for BLM CCB Problem}
\label{alg:linear-lr}
\begin{algorithmic}[1]
\STATE {\bfseries Input:}
Graph $G=({\bX}\cup\{Y\},E)$, intervention budget $K\in\mathbb{N}$.
\STATE Initialize $M_{0,X}\leftarrow \mathbf{I}\in\mathbb{R}^{|\Pa(X)|\times |\Pa(X)|}$, $\bb_{0,X}\leftarrow {\bf 0}^{|\Pa(X)|}$ for all $X\in {\bX}\cup\{Y\}$, $\hat{\btheta}_{0,X}\leftarrow 0\in\mathbb{R}^{|\Pa(X)|}$ for all $X\in {\bX}\cup\{Y\}$, $\delta\leftarrow\frac{1}{n\sqrt{T}}$ and $\rho_t\leftarrow\sqrt{n\log(1+tn)+2\log\frac{1}{\delta}}+\sqrt{n}$ for $t=0,1,2,\cdots,T$.
\FOR{$t=1,2,\cdots,T$}
\STATE Compute the confidence ellipsoid $\mathcal{C}_{t,X}=\{\btheta_X'\in[0,1]^{|\Pa(X)|}:\left\|\btheta_X'-\hat{\btheta}_{t-1,X}\right\|_{M_{t-1,X}}\leq\rho_{t-1}\}$ for any node $X\in{\bX}\cup\{Y\}$.
\STATE \label{alg:LRargmax} $({\bS_t},\tilde{\btheta}_t) = \argmax_{\bS\subseteq \bX, |\bS|\le K, \btheta'_{t,X} \in \mathcal{C}_{t,X} } \E[Y| \doi({\bS})]$.
\STATE Intervene all the nodes in ${\bS}_t$ to $1$ and observe the feedback $(\bX_t,Y_t)$.
\FOR{$X\in\bX\cup\{Y\}$}
\STATE Construct data pair $(\bV_{t,X},X^t)$ with $\bV_{t,X}$ the parent value vector of $X$ in round $t$, and $X^t$ the value of $X$ in round $t$ if $X\not\in S_t$.
\STATE $M_{t,X}=M_{t-1,X}+\bV_{t,X}\bV_{t,X}^\intercal$, $\bb_{t,X}=\bb_{t-1,X}+X^t\bV_{t,X}$, $\hat{\btheta}_{t,X}=M^{-1}_{t,X}\bb_{t,X}$.
\ENDFOR
\ENDFOR
\end{algorithmic}
\end{algorithm}

Based on the above result, we introduce Algorithm~\ref{alg:linear-lr} using the linear regression.
Algorithm~\ref{alg:linear-lr} is designed for Markovian BLMs. 
For a BLM $G$ with more general hidden variables, we can apply the transformation described in Section~\ref{sec:transformBLM} to 
	transform the model into a Markovian model $G'$ first. 
The following theorem shows the regret bound of Algorithm~\ref{alg:linear-lr}.

\begin{restatable}[Regret Bound of Algorithm~\ref{alg:linear-lr}]{theorem}{thmlinearlr}
\label{thm.blmlr}
The regret of BLM-LR (Algorithm~\ref{alg:linear-lr}) running on BLM with hidden variables is bounded as
\begin{align*}
    R(T)=O\left(n^2\sqrt{DT}\log T\right).
\end{align*}
\end{restatable}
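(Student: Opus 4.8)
\textbf{Proof plan.} The plan is to reuse the skeleton of the proof of Theorem~\ref{thm.regret_glm}, but to replace its learning guarantee (Lemma~\ref{thm.learning_glm}), which needed the initialization phase and Assumption~\ref{asm:glm_4}, by the self‑normalized confidence bound for linear regression (Lemma~1 of \cite{li2020online}). First I would dispose of hidden variables: by the transformation of Section~\ref{sec:transformBLM} together with Lemmas~\ref{lemma.hidden} and~\ref{thm:property_of_linear_model}, running BLM‑LR on $G$ is equivalent to running it on the Markovian model $G'$, whose parent–child conditionals and whose reward $\E'[Y\mid\doi(\bS)]=\E[Y\mid\doi(\bS)]$ agree with those of $G$; hence it suffices to prove the bound for a Markovian BLM, with $n$ and $D$ referring to $G'$. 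The second preliminary observation is that, since $f_X=\mathrm{id}$ for a BLM, the gradient equation solving the pseudo log‑likelihood is exactly the normal equation of the $\ell_2$‑regularized least‑squares estimator $\hat\btheta_{t,X}=M_{t,X}^{-1}\bb_{t,X}$ maintained in Algorithm~\ref{alg:linear-lr}, and the residuals $X^t-\btheta^{*'\intercal}_X\bV_{t,X}$ form a bounded (hence sub‑Gaussian) martingale‑difference sequence with respect to the natural filtration $\cF_t$: in the topological order $\bV_{t,X}$ is revealed before $X^t$, and $\E[X^t\mid\bV_{t,X},\cF_{t-1}]=\btheta^{*'\intercal}_X\bV_{t,X}$ because the probability noise $\varepsilon_X$ is zero‑mean. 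Thus Lemma~1 of \cite{li2020online} applies and, with $\delta=\tfrac{1}{n\sqrt T}$ and a union bound over the $n$ nodes, gives a ``good event'' $\cE$ with $\Pr(\cE)\ge 1-\tfrac{1}{\sqrt T}$ on which, for all $t\le T$, all $X\in\bX\cup\{Y\}$ and all $\bv$,
\[
\left|\bv^\intercal(\hat\btheta_{t-1,X}-\btheta^{*'}_X)\right|\le \rho_{t-1}\,\|\bv\|_{M_{t-1,X}^{-1}},\qquad \rho_{t-1}=O\!\left(\sqrt{n\log(nT)}\right).
\]

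On $\cE$ we have $\btheta^{*'}_X\in\mathcal{C}_{t,X}$ for every $t,X$, so $(\bS^*,\btheta^{*'})$ is feasible for the $\argmax$ in line~\ref{alg:LRargmax}; therefore $\sigma(\bS_t,\tilde\btheta_t)\ge\sigma(\bS^*,\btheta^{*'})=\E[Y\mid\doi(\bS^*)]$, and the per‑round regret is at most $\sigma(\bS_t,\tilde\btheta_t)-\sigma(\bS_t,\btheta^{*'})$ (no monotonicity of $\sigma$ is needed here). Next I apply the GOM bounded‑smoothness inequality of Lemma~\ref{thm.gom_glm}, which holds for every BGLM and hence for $G'$, with $\btheta^1=\tilde\btheta_t$, $\btheta^2=\btheta^{*'}$ and $L_{f_X}^{(1)}=1$, obtaining
\[
\sigma(\bS_t,\tilde\btheta_t)-\sigma(\bS_t,\btheta^{*'})\le \mathbb{E}_{\bbvarepsilon,\bgamma}\!\left[\sum_{X\in\bX_{\bS_t,Y}}\left|\bV_X^\intercal(\tilde\btheta_{t,X}-\btheta^{*'}_X)\right|\right].
\]
Since on $\cE$ both $\tilde\btheta_{t,X}$ and $\btheta^{*'}_X$ lie in the $M_{t-1,X}$‑ball of radius $\rho_{t-1}$ around $\hat\btheta_{t-1,X}$, the triangle inequality gives $\left|\bV_X^\intercal(\tilde\btheta_{t,X}-\btheta^{*'}_X)\right|\le 2\rho_{t-1}\|\bV_X\|_{M_{t-1,X}^{-1}}$; enlarging the sum from $\bX_{\bS_t,Y}$ to all of $\bX\cup\{Y\}$ and using that, conditioned on $\cF_{t-1}$, the observed parent vectors $\bV_{t,X}$ are distributed exactly as the $\bV_X$ appearing in Lemma~\ref{thm.gom_glm} while $M_{t-1,X},\rho_{t-1}$ are already fixed, I get
\[
R(T)\le \mathbb{E}\!\left[\sum_{t=1}^T 2\rho_{t-1}\sum_{X\in\bX\cup\{Y\}}\|\bV_{t,X}\|_{M_{t-1,X}^{-1}}\right]+T\cdot\Pr(\bar\cE).
\]

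The last display is exactly the quantity bounded in the proof of Theorem~\ref{thm.regret_glm}. Bounding $\rho_{t-1}\le\rho_T=O(\sqrt{n\log(nT)})$, swapping the two sums, and applying Cauchy–Schwarz in $t$ followed by the elliptical‑potential argument (here $M_{0,X}=\bI$, so $\sum_{t=1}^T\|\bV_{t,X}\|_{M_{t-1,X}^{-1}}^2\le 2\log\det M_{T,X}=O(D\log T)$) yields $\sum_{t=1}^T\|\bV_{t,X}\|_{M_{t-1,X}^{-1}}=O(\sqrt{DT\log T})$ for each $X$; summing over the $n$ nodes and multiplying by $\rho_T$ gives $R(T)=O\!\big(\rho_T\, n\,\sqrt{DT\log T}\big)=O\!\big(n^{2}\sqrt{DT}\log T\big)$, the stray term $T\cdot\Pr(\bar\cE)\le\sqrt T$ being absorbed.

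The main obstacle — and the only genuine departure from the proof of Theorem~\ref{thm.regret_glm} — is the first paragraph: one must verify carefully that the linear‑regression estimator of Algorithm~\ref{alg:linear-lr} satisfies the hypotheses of the self‑normalized bound of \cite{li2020online} even in the presence of the extra probability noise $\varepsilon_X$ (so that the residual is a \emph{bounded martingale difference}, not merely unconditionally mean‑zero), and that Lemmas~\ref{lemma.hidden}–\ref{thm:property_of_linear_model} let us transplant the whole argument to $G'$ without re‑imposing Assumption~\ref{asm:glm_4}. Everything afterward is the bookkeeping of Theorem~\ref{thm.regret_glm} specialized to $\kappa=L_{f_X}^{(1)}=1$ with the enlarged confidence radius $\rho_t$, which is precisely the source of the extra factor of $n$ relative to the BGLM‑OFU bound in Theorem~\ref{thm.blm_regret}.
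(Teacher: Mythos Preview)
Your proof follows essentially the same route as the paper's: reduce to $G'$ via Lemmas~\ref{lemma.hidden} and~\ref{thm:property_of_linear_model}, invoke the self-normalized confidence bound for linear regression (restated in the paper as Lemma~\ref{lemma.learning_glm}), use OFU plus the GOM smoothness of Lemma~\ref{thm.gom_glm}, and finish with Cauchy--Schwarz and the elliptical-potential lemma.

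There is one technical slip worth flagging. You write $\sum_{t}\|\bV_{t,X}\|_{M_{t-1,X}^{-1}}^{2}\le 2\log\det M_{T,X}=O(D\log T)$, but the inequality $x\le 2\log(1+x)$ holds only for $x\in[0,1]$, whereas here $\|\bV_{t,X}\|_{M_{t-1,X}^{-1}}^{2}\le\|\bV_{t,X}\|^{2}\le|\Pa(X)|$ because $M_{0,X}=\bI$. The paper fixes this by using $x\le\tfrac{n}{\log(n+1)}\log(1+x)$ for $x\in[0,n]$, obtaining $\sum_{t}\|\bV_{t,X}\|_{M_{t-1,X}^{-1}}^{2}\le\tfrac{nD}{\log(n+1)}\log(1+T)$ rather than $O(D\log T)$. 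That extra factor of $n$ under the square root is exactly what turns $n^{3/2}$ into the stated $n^{2}$ in the final bound; your arithmetic lands on $O(n^{2}\sqrt{DT}\log T)$ only because you silently loosened $n^{3/2}$ to $n^{2}$ at the last step. A second, smaller point: after enlarging the sum to all $X\in\bX\cup\{Y\}$ you must be careful that in rounds with $X\in\bS_t$ the matrix $M_{t,X}$ is \emph{not} updated, so the determinant identity $\det M_{t,X}=\det M_{t-1,X}(1+\|\bV_{t,X}\|_{M_{t-1,X}^{-1}}^{2})$ fails for those terms; the paper handles this by setting $\bW_{t,X}=\bV_{t,X}$ if $X\notin\bS_t$ and $\bW_{t,X}=\boldsymbol{0}$ otherwise, and running the potential argument on the $\bW_{t,X}$'s.
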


The proof of this theorem is adapted from the proof of Theorem 2 in \cite{li2020online}. For completeness, the proof is put in \OnlyInFull{Appendix~\ref{app.blmlr}}\OnlyInShort{appendix}. 
Comparing the algorithm and the regret bound of BLM-LR with those of BGLM-OFU, we can see a tradeoff between using the MLE method and the linear regression method:
When we use the linear regression method with the BLM-LR algorithm, we do not need to have an initialization phase so Assumption~\ref{asm:glm_4} is not required anymore. 
However, the regret bound of BLM-LR (Theorem~\ref{thm.blmlr}) has an extra factor of $n$ in regret bound, comparing to the regret bound of BGLM-OFU on BLM (Theorem~\ref{thm.blm_regret}).

\section{Conclusion and Future Work}

In this paper, we propose the combinatorial causal bandit (CCB) framework, and provide a solution for CCB under the BGLM.
We further study a special model, the linear model. We show that our algorithm would work for models with many types of hidden variables. We further provide an algorithm for linear model not relying on Assumption~\ref{asm:glm_4} based on the linear regression. 
	
There are many open problems and future directions to extend this work.
For the BGLM, one could study how to remove some assumptions (e.g. Assumption~\ref{asm:glm_4}), how to include hidden variables, or how to make
	the computation more efficient.
For the linear model, one could consider how to remove the constraint on the hidden variable structure that does not allow a hidden variable to
	connect to an observed variable and its observed descendant via hidden variables.
More generally, one could consider classes of causal models other than the BGLM. 

\section*{Acknowledgements}
The authors would like to thank the anonymous reviewers for their valuable comments and constructive feedback.

\bibliography{main}

\begin{thebibliography}{38}
\providecommand{\natexlab}[1]{#1}

\bibitem[{Abbasi-Yadkori, P{\'a}l, and
  Szepesv{\'a}ri(2011)}]{abbasi2011improved}
Abbasi-Yadkori, Y.; P{\'a}l, D.; and Szepesv{\'a}ri, C. 2011.
\newblock Improved algorithms for linear stochastic bandits.
\newblock \emph{Advances in Neural Information Processing Systems}, 24.

\bibitem[{Aitkin et~al.(2009)Aitkin, Francis, Hinde, and
  Darnell}]{aitkin2009statistical}
Aitkin, M.; Francis, B.; Hinde, J.; and Darnell, R. 2009.
\newblock \emph{Statistical Modelling in R}.
\newblock Oxford University Press Oxford.

\bibitem[{Arnold et~al.(2020)Arnold, Davies, de~Kamps, Tennant, Mbotwa, and
  Gilthorpe}]{arnold2020reflection}
Arnold, K.~F.; Davies, V.; de~Kamps, M.; Tennant, P.~W.; Mbotwa, J.; and
  Gilthorpe, M.~S. 2020.
\newblock Reflection on modern methods: generalized linear models for prognosis
  and intervention—theory, practice and implications for machine learning.
\newblock \emph{International Journal of Epidemiology}, 49(6): 2074--2082.

\bibitem[{Chen, Wang, and Yuan(2013)}]{pmlr-v28-chen13a}
Chen, W.; Wang, Y.; and Yuan, Y. 2013.
\newblock Combinatorial multi-armed bandit: General framework and applications.
\newblock In \emph{International Conference on Machine Learning}, 151--159.
  PMLR.

\bibitem[{Chen et~al.(2016)Chen, Wang, Yuan, and Wang}]{ChenWYW16}
Chen, W.; Wang, Y.; Yuan, Y.; and Wang, Q. 2016.
\newblock Combinatorial multi-armed bandit and its extension to
  probabilistically triggered arms.
\newblock \emph{The Journal of Machine Learning Research}, 17(1): 1746--1778.

\bibitem[{Combes et~al.(2015)Combes, Shahi, Proutiere
  et~al.}]{combes2015combinatorialfeedback}
Combes, R.; Shahi, M. S. T.~M.; Proutiere, A.; et~al. 2015.
\newblock Combinatorial bandits revisited.
\newblock In \emph{Advances in Neural Information Processing Systems},
  2116--2124.

\bibitem[{De~la Fuente(2000)}]{de2000mathematical}
De~la Fuente, A. 2000.
\newblock \emph{Mathematical Methods and Models for Economists}.
\newblock Cambridge University Press.

\bibitem[{Filippi et~al.(2010)Filippi, Cappe, Garivier, and
  Szepesv{\'a}ri}]{filippi2010parametric}
Filippi, S.; Cappe, O.; Garivier, A.; and Szepesv{\'a}ri, C. 2010.
\newblock Parametric bandits: The generalized linear case.
\newblock \emph{Advances in Neural Information Processing Systems}, 23.

\bibitem[{Fisher(1992)}]{fisher1992statistical}
Fisher, R.~A. 1992.
\newblock Statistical methods for research workers.
\newblock In \emph{Breakthroughs in Statistics}, 66--70. Springer.

\bibitem[{Garcia-Huidobro and Michael~Oakes(2017)}]{garcia2017squeezing}
Garcia-Huidobro, D.; and Michael~Oakes, J. 2017.
\newblock Squeezing observational data for better causal inference: Methods and
  examples for prevention research.
\newblock \emph{International Journal of Psychology}, 52(2): 96--105.

\bibitem[{Han, Yu, and Friedberg(2017)}]{han2017evaluating}
Han, B.; Yu, H.; and Friedberg, M.~W. 2017.
\newblock Evaluating the impact of parent-reported medical home status on
  children's health care utilization, expenditures, and quality: a
  difference-in-differences analysis with causal inference methods.
\newblock \emph{Health Services Research}, 52(2): 786--806.

\bibitem[{Hastie and Pregibon(2017)}]{hastie2017generalized}
Hastie, T.~J.; and Pregibon, D. 2017.
\newblock Generalized linear models.
\newblock In \emph{Statistical Models in S}, 195--247. Routledge.

\bibitem[{Hern{\'a}n and Robins(2010)}]{hernan2010causal}
Hern{\'a}n, M.~A.; and Robins, J.~M. 2010.
\newblock Causal inference.

\bibitem[{Hilbe(2011)}]{hilbe2011logistic}
Hilbe, J.~M. 2011.
\newblock Logistic regression.
\newblock \emph{International Encyclopedia of Statistical Science}, 1: 15--32.

\bibitem[{Hoeffding(1994)}]{hoeffding1994probability}
Hoeffding, W. 1994.
\newblock Probability inequalities for sums of bounded random variables.
\newblock In \emph{The Collected Works of Wassily Hoeffding}, 409--426.
  Springer.

\bibitem[{Karp(1972)}]{karp1972reducibility}
Karp, R.~M. 1972.
\newblock Reducibility among combinatorial problems.
\newblock In \emph{Complexity of Computer Computations}, 85--103. Springer.

\bibitem[{Kempe, Kleinberg, and Tardos(2003)}]{kempe03}
Kempe, D.; Kleinberg, J.; and Tardos, {\'E}. 2003.
\newblock Maximizing the spread of influence through a social network.
\newblock In \emph{Proceedings of the Ninth ACM SIGKDD International Conference
  on Knowledge Discovery and Data Mining}, 137--146.

\bibitem[{Lattimore, Lattimore, and Reid(2016)}]{lattimore2016causal}
Lattimore, F.; Lattimore, T.; and Reid, M.~D. 2016.
\newblock Causal bandits: learning good interventions via causal inference.
\newblock In \emph{Proceedings of the 30th International Conference on Neural
  Information Processing Systems}, 1189--1197.

\bibitem[{Lattimore and Szepesv\'{a}ri(2020)}]{LS20}
Lattimore, T.; and Szepesv\'{a}ri, C. 2020.
\newblock \emph{Bandit Algorithms}.
\newblock Cambridge University Press.

\bibitem[{Lee and Bareinboim(2018)}]{lee2018structural}
Lee, S.; and Bareinboim, E. 2018.
\newblock Structural causal bandits: where to intervene?
\newblock \emph{Advances in Neural Information Processing Systems}, 31.

\bibitem[{Lee and Bareinboim(2019)}]{lee2019structural}
Lee, S.; and Bareinboim, E. 2019.
\newblock Structural causal bandits with non-manipulable variables.
\newblock In \emph{Proceedings of the Thirty-Third AAAI Conference on
  Artificial Intelligence and Thirty-First Innovative Applications of
  Artificial Intelligence Conference and Ninth AAAI Symposium on Educational
  Advances in Artificial Intelligence}, 4146--4172.

\bibitem[{Lee and Bareinboim(2020)}]{lee2020characterizing}
Lee, S.; and Bareinboim, E. 2020.
\newblock Characterizing optimal mixed policies: Where to intervene and what to
  observe.
\newblock \emph{Advances in Neural Information Processing Systems}, 33:
  8565--8576.

\bibitem[{Li, Lu, and Zhou(2017)}]{li2017provably}
Li, L.; Lu, Y.; and Zhou, D. 2017.
\newblock Provably optimal algorithms for generalized linear contextual
  bandits.
\newblock In \emph{International Conference on Machine Learning}, 2071--2080.
  PMLR.

\bibitem[{Li et~al.(2020)Li, Kong, Tang, Li, and Chen}]{li2020online}
Li, S.; Kong, F.; Tang, K.; Li, Q.; and Chen, W. 2020.
\newblock Online influence maximization under linear threshold model.
\newblock \emph{Advances in Neural Information Processing Systems}, 33:
  1192--1204.

\bibitem[{Lu, Meisami, and Tewari(2021)}]{lu2021causal}
Lu, Y.; Meisami, A.; and Tewari, A. 2021.
\newblock Causal bandits with unknown graph structure.
\newblock \emph{Advances in Neural Information Processing Systems}, 34.

\bibitem[{Lu et~al.(2020)Lu, Meisami, Tewari, and Yan}]{lu2020regret}
Lu, Y.; Meisami, A.; Tewari, A.; and Yan, W. 2020.
\newblock Regret analysis of bandit problems with causal background knowledge.
\newblock In \emph{Conference on Uncertainty in Artificial Intelligence},
  141--150. PMLR.

\bibitem[{Maiti, Nair, and Sinha(2021)}]{maiti2021causal}
Maiti, A.; Nair, V.; and Sinha, G. 2021.
\newblock Causal Bandits on General Graphs.
\newblock \emph{arXiv preprint arXiv:2107.02772}.

\bibitem[{Nair, Patil, and Sinha(2021)}]{nair2020budgeted}
Nair, V.; Patil, V.; and Sinha, G. 2021.
\newblock Budgeted and non-budgeted causal bandits.
\newblock In \emph{International Conference on Artificial Intelligence and
  Statistics}, 2017--2025. PMLR.

\bibitem[{Nie(2022)}]{nie2021matrix}
Nie, Z. 2022.
\newblock Matrix anti-concentration inequalities with applications.
\newblock In \emph{Proceedings of the 54th Annual ACM SIGACT Symposium on
  Theory of Computing}, 568--581.

\bibitem[{Pearl(2009)}]{Pearl09}
Pearl, J. 2009.
\newblock \emph{Causality}.
\newblock Cambridge University Press.
\newblock 2nd Edition.

\bibitem[{Pearl(2012)}]{pearl2012calculus}
Pearl, J. 2012.
\newblock The do-calculus revisited.
\newblock In \emph{Proceedings of the Twenty-Eighth Conference on Uncertainty
  in Artificial Intelligence}, 3--11.

\bibitem[{Russell and Norvig(2021)}]{russell2021artificial}
Russell, S.; and Norvig, P. 2021.
\newblock Artificial Intelligence: A Modern Approach, Global Edition 4th.
\newblock \emph{Foundations}, 19: 23.

\bibitem[{Sakate and Kashid(2014)}]{sakate2014comparison}
Sakate, D.; and Kashid, D. 2014.
\newblock Comparison of Estimators in GLM with Binary Data.
\newblock \emph{Journal of Modern Applied Statistical Methods}, 13(2): 10.

\bibitem[{Sen et~al.(2017)Sen, Shanmugam, Dimakis, and
  Shakkottai}]{sen2017identifying}
Sen, R.; Shanmugam, K.; Dimakis, A.~G.; and Shakkottai, S. 2017.
\newblock Identifying best interventions through online importance sampling.
\newblock In \emph{International Conference on Machine Learning}, 3057--3066.
  PMLR.

\bibitem[{Vansteelandt and Dukes(2020)}]{Vansteelandt2020AssumptionleanIF}
Vansteelandt, S.; and Dukes, O. 2020.
\newblock Assumption-lean inference for generalised linear model parameters.
\newblock \emph{arXiv preprint arXiv:2006.08402}.

\bibitem[{Wu et~al.(2018)Wu, Du, Wang, Wu, Duan, and Tian}]{wu2018general}
Wu, W.; Du, H.; Wang, H.; Wu, L.; Duan, Z.; and Tian, C. 2018.
\newblock On general threshold and general cascade models of social influence.
\newblock \emph{Journal of Combinatorial Optimization}, 35(1): 209--215.

\bibitem[{Yabe et~al.(2018)Yabe, Hatano, Sumita, Ito, Kakimura, Fukunaga, and
  Kawarabayashi}]{yabe2018causal}
Yabe, A.; Hatano, D.; Sumita, H.; Ito, S.; Kakimura, N.; Fukunaga, T.; and
  Kawarabayashi, K.-i. 2018.
\newblock Causal bandits with propagating inference.
\newblock In \emph{International Conference on Machine Learning}, 5512--5520.
  PMLR.

\bibitem[{Zhang et~al.(2022)Zhang, Chen, Sun, and Zhang}]{zhang2022online}
Zhang, Z.; Chen, W.; Sun, X.; and Zhang, J. 2022.
\newblock Online influence maximization with node-level feedback using standard
  offline oracles.
\newblock In \emph{Proceedings of the AAAI Conference on Artificial
  Intelligence}, volume~36, 9153--9161.

\end{thebibliography}

\clearpage

\appendix
\onecolumn

\section*{Appendix}

\section{Notations}

The main notations used in this paper are listed below.

\begin{center}
	\begin{longtable}{cc}
		\hline
		Notations&Meanings\\
		\hline
		$c$& the constant in Lecu\'{e} and Mendelson's inequality\\
		$\Ch(X)$& set of children of $X$\\
		$D$& the maximum in-degree of nodes in $G$ \\
		$D_{\text{out}}$& the maximum out-degree of nodes in Markovian BGLM $G$ except $X_1$ \\
		$E$& the set of edges in $G$ \\
		$f_X$& the function that defines the activating probability of $X$ in BGLM \\
		$G$& a causal model $G=\{{\bf X}\cup\{Y\},E\}$ with other hidden nodes \\
		$G'$& the transformed Markovian model from $G$ in Section \ref{sec.hidden} \\
		$J_{i,X}$& the number of created data pairs in the $i^{th}$ round for node $X$\\
		$K$& the maximal number of nodes that can be chosen to intervene in a round\\
		$L_{f_X}^{(1)}$& the upper bound of $\dot{f}_X$\\
		$L^{(1)}_{\max}$& $\max_{X\in {\bf X}\cup\{Y\}}L_{f_X}^{(1)}$\\
		$L_{f_X}^{(2)}$& the upper bound of $\ddot{f}_X$\\
		$M_{t,X}$& the observation matrix ($M_{t,X}=\sum_{i=1}^t\sum_{j=1}^{J_{i,X}}\bV_{i,j,X}\bV_{i,j,X}^\intercal$)\\
		$m$& number of edges in $G$\\
		$n$& number of observed nodes in $G$\\
		$\pdf$& probability density function\\
		$P$& used to define the propagating rule of BGLMs by $P(X|\Pa(X))$\\ 
		$\Pa(X)$& set of parents of $X$\\
		$\cP_{X_i,X_j}$& set of paths from $X_i$ to $X_j$\\
		$\cP^u_{X_i,X_j}$& set of hidden paths from $X_i$ to $X_j$\\
		$R$& a constant in online algorithms\\
		$R_t$& the regret in the $t^{th}$ round\\
		${\bf S}_t$& the set of nodes we perform $\doi({\bf S}_t={\bf s}_t)$ in round $t$\\
		$T$& total number of rounds in the online CCB problem\\
		${\bf U}$& the set of hidden variables in $G$\\
		$\bV_X$& the propagating result of $\Pa(X)$\\
		${\bf X}$& node set $\{X_1,X_2,\cdots,X_{n-1}\}$ such that $X_1=1$ for Markovian models\\
		$({\bf X}_t,Y_t)$& the observed propagating result of nodes in round $t$\\
		$X^i$& propagating result of $X$ at the $i^{th}$ round\\
		$Y$& the reward node in $G$\\
		$\bgamma$& random thresholds in BGLMs\\
		$\delta$& a constant in the online algorithms\\
		$\varepsilon_X$& the noise for activating $X$\\
		$\varepsilon_X'$& a variable used in the proof of Lemma \ref{thm.learning_glm}\\
		$\zeta$& a constant in Assumption~\ref{asm:glm_4} for BGLMs\\
% 		$\zeta^*$& a constant in Assumption~\ref{asm:continuous_glm_4} for GLMs\\
		$\btheta^*$& the real parameters\\
		$\hat{\btheta}$& estimated parameters\\
		$\tilde{\btheta}$& parameter with the upper confidence bound or from the OFU ellipsoid\\
		$\lambda_{\min}(M)$& minimum eigenvalue of matrix $M$\\
		$\kappa$& a constant in Assumption \ref{asm:glm_2}\\
		$\rho$& a constant in the input of our online algorithms\\
		$\upsilon$& a constant in the justification of Assumption \ref{asm:glm_4} (Appendix \ref{app:asm_just})\\
		$\sigma({\bf S}, \btheta)$& the expected reward under parameter $\btheta$ and intervention $\doi({\bf S})$\\
		\hline
	\end{longtable}
\end{center}

\section{A Justification of Assumption~\ref{asm:glm_4}}
\label{app:asm_just}

In order to show that our assumption is reasonable, we give a possible valuation of $\zeta$ here for general Markovian BGLMs under certain conditions. 
First, we use the following alternative assumption:
\begin{assumption}
	\label{asm:parent2child}
	There exists a constant $\upsilon>0$ such that for every $X\in\bX\cup\{Y\}\setminus\{X_1\}$ and $\pa(X)\in\{0,1\}^{|\Pa(X)|}$, 
	\begin{align}
	&\E_{\bbvarepsilon}[P(X=1|\Pa(X)=\pa(X)) | \bbvarepsilon] \geq \upsilon, \label{eq:parentChildOne}\\
	&\E_{\bbvarepsilon}[P(X=0|\Pa(X)=\pa(X)) | \bbvarepsilon] \geq \upsilon.  \label{eq:parentChildZero}
	\end{align} 
\end{assumption}
This assumption means that even if all the parents of $X$ have fixed their values, $X$ still has a constant probability of being either $0$ or $1$.
The inequality for $X=0$ is already adopted as Assumption 1 in \citep{zhang2022online}. 
In Markovian BGLMs, since 
	$\Pr_{\bbvarepsilon,X}\{X=1|\Pa(X)=\pa(X)\}= \E_{\bbvarepsilon,X}[X | \Pa(X)=\pa(X)] = 
	\E_{\bbvarepsilon}[\E_{X}[f(\btheta^*_X \cdot \pa(X))+\varepsilon_X | \bbvarepsilon]]
	= f(\btheta^*_X \cdot \pa(X)) \ge f(\theta^*_{X_1,X})$, 
	condition $f(\theta^*_{X_1,X}) \ge \upsilon$ is suffice to satisfy Inequality~\eqref{eq:parentChildOne}.
Similarly, $\Pr_{\bbvarepsilon,X}\{X=0|\Pa(X)=\pa(X)\} = 1 - \E_{\bbvarepsilon,X}[X | \Pa(X)=\pa(X)] = 1 - f(\btheta^*_X \cdot \pa(X)) \ge 1- f(||\btheta^*_X||_1)$.
Thus, condition $f(||\btheta^*_X||_1) \le 1-\upsilon$ is suffice to satisfy Inequality~\eqref{eq:parentChildZero}.
This means that the existence of $\upsilon$ can be reasonably achieved.

Denote the children of $X$ by $\Ch(X)$. 
The following lemma shows that Assumption~\ref{asm:parent2child} leads to the following general result:
\begin{lemma}
\label{lemma:compute_zeta}
For every node $X\in\bX\cup\{Y\}\setminus\{X_1\}$ and every value $\bv \in \{0,1\}^{|\bX|}$, we have
\begin{align}
    \left(\frac{\upsilon}{1-\upsilon}\right)^{\left|\Ch(X)\right|+1}\leq \frac{\Pr_{\bbvarepsilon,\bX,Y}(X=1|\bX\cup\{Y\}\setminus\{X\}=\bv)}{\Pr_{\bbvarepsilon,\bX,Y}(X=0|\bX\cup\{Y\}\setminus\{X\}=\bv)}
    \leq\left(\frac{1-\upsilon}{\upsilon}\right)^{\left|\Ch(X)\right|+1}.
\end{align}
\end{lemma}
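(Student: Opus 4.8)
The plan is to exploit the local (Markov-blanket) structure of the causal Bayesian network, but only \emph{after} conditioning on the noise realization $\bbvarepsilon$. Fix $X\in\bX\cup\{Y\}\setminus\{X_1\}$ and $\bv\in\{0,1\}^{|\bX|}$; we may assume $\bv$ assigns the value $1$ to $X_1$, since otherwise the conditioning event has probability $0$. Because the ratio of conditional probabilities equals the ratio of the corresponding joint probabilities, it suffices to bound
\[
\frac{\Pr_{\bbvarepsilon,\bX,Y}\bigl(X=1,\ \bX\cup\{Y\}\setminus\{X\}=\bv\bigr)}{\Pr_{\bbvarepsilon,\bX,Y}\bigl(X=0,\ \bX\cup\{Y\}\setminus\{X\}=\bv\bigr)}.
\]
Conditioning on $\bbvarepsilon$ makes the model an ordinary Bayesian network on the DAG $G$ with conditional probability tables $P(Z=z\mid\Pa(Z)=\pa_Z)$ (now also depending on $\bbvarepsilon$ through $\varepsilon_Z$), so for $x\in\{0,1\}$ the joint probability of the assignment ``value $x$ to $X$, values $\bv$ to the rest'' factorizes as $\prod_{Z\in\bX\cup\{Y\}}P(Z=z_Z\mid\Pa(Z)=\pa_Z)$. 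Since $X$ is a parent only of the nodes in $\Ch(X)$, the only factors depending on $x$ are the one for $X$ itself and the $|\Ch(X)|$ factors for its children, while the remaining factors form a common, $x$-independent term $Q(\bbvarepsilon)\ge 0$. Writing $A_1(\bbvarepsilon)=P(X=1\mid\pa_X)\prod_{C\in\Ch(X)}P(C=v_C\mid\pa_C^{(1)})$ and $A_0(\bbvarepsilon)$ analogously with $X$ set to $0$, the displayed ratio equals $\E_{\bbvarepsilon}[A_1(\bbvarepsilon)Q(\bbvarepsilon)]\,/\,\E_{\bbvarepsilon}[A_0(\bbvarepsilon)Q(\bbvarepsilon)]$.

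Next I would invoke Assumption~\ref{asm:parent2child}: combining its two inequalities gives $\upsilon\le P(Z=z\mid\Pa(Z)=\pa_Z)\le 1-\upsilon$ for every $Z\in\bX\cup\{Y\}\setminus\{X_1\}$, every $z\in\{0,1\}$, every parent configuration, and (almost surely) every noise realization; this applies to $X$ and to every $C\in\Ch(X)$, none of which is the source $X_1$. Hence each of $A_1(\bbvarepsilon)$ and $A_0(\bbvarepsilon)$ is a product of exactly $|\Ch(X)|+1$ factors lying in $[\upsilon,1-\upsilon]$, so pointwise in $\bbvarepsilon$,
\[
\Bigl(\tfrac{\upsilon}{1-\upsilon}\Bigr)^{|\Ch(X)|+1}A_0(\bbvarepsilon)\ \le\ A_1(\bbvarepsilon)\ \le\ \Bigl(\tfrac{1-\upsilon}{\upsilon}\Bigr)^{|\Ch(X)|+1}A_0(\bbvarepsilon)
\]
(both bounds follow from $\upsilon^{|\Ch(X)|+1}\le A_i(\bbvarepsilon)\le(1-\upsilon)^{|\Ch(X)|+1}$). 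Multiplying through by $Q(\bbvarepsilon)\ge 0$ and taking expectations over $\bbvarepsilon$ preserves these inequalities; dividing by $\E_{\bbvarepsilon}[A_0(\bbvarepsilon)Q(\bbvarepsilon)]$, which is strictly positive because $A_0\ge\upsilon^{|\Ch(X)|+1}>0$ and $Q$ is a product of positive conditional-probability-table entries, yields exactly the claimed two-sided bound, and therefore the bound on $\Pr_{\bbvarepsilon,\bX,Y}(X=1\mid\cdot)/\Pr_{\bbvarepsilon,\bX,Y}(X=0\mid\cdot)$. The same strict positivity also shows the conditioning event has positive probability, so the conditional probabilities in the statement are well defined.

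The one step that requires genuine care is the treatment of the noise: the conditional probabilities in the statement are taken over the joint law of $(\bbvarepsilon,\bX,Y)$, so the Markov-blanket factorization cannot be applied directly to the mixture. The key observation is that after conditioning on $\bbvarepsilon$ one gets a genuine Bayesian network, the remainder $Q(\bbvarepsilon)$ is $x$-independent and nonnegative, and hence a pointwise-in-$\bbvarepsilon$ inequality between $A_1Q$ and $A_0Q$ survives integration over $\bbvarepsilon$. Everything else — the factorization itself, the count that precisely $|\Ch(X)|+1$ factors involve $X$, and the elementary product bound from Assumption~\ref{asm:parent2child} — is routine.
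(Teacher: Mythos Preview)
Your Markov-blanket decomposition and the count of exactly $|\Ch(X)|+1$ relevant factors match the paper's argument. The substantive difference is in how the noise $\bbvarepsilon$ is handled. You keep the common part $Q(\bbvarepsilon)$ inside the expectation and invoke Assumption~\ref{asm:parent2child} \emph{pointwise in $\bbvarepsilon$}, claiming $\upsilon\le P(Z=z\mid\Pa(Z))\le 1-\upsilon$ for almost every noise realization. The paper instead uses the mutual independence of the noises $\varepsilon_Z$: since each factor $P(Z=z_Z\mid\Pa(Z))=f_Z(\cdot)+\varepsilon_Z$ depends only on its own $\varepsilon_Z$, the expectation factors as
\[
\E_{\bbvarepsilon}\Bigl[\prod_{Z}P\bigl(Z=z_Z\mid\Pa(Z)\bigr)\Bigr]\;=\;\prod_{Z}\E_{\varepsilon_Z}\bigl[P\bigl(Z=z_Z\mid\Pa(Z)\bigr)\bigr],
\]
after which the common factors for $Z\notin\{X\}\cup\Ch(X)$ cancel \emph{exactly} (no $Q$ survives), and Assumption~\ref{asm:parent2child} is applied only to each noise-averaged term $\E_{\varepsilon_Z}[P(Z=\cdot\mid\cdot)]\in[\upsilon,1-\upsilon]$.

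This distinction matters because, despite the awkward ``$\E_{\bbvarepsilon}[\,\cdot\mid\bbvarepsilon]$'' notation, the paper treats Assumption~\ref{asm:parent2child} as a bound on the \emph{noise-averaged} conditional probability: the discussion immediately after the assumption verifies it via $\E_{\bbvarepsilon}[f_X(\cdot)+\varepsilon_X]=f_X(\cdot)$, and the proof of the lemma applies it to each $\E_{\varepsilon_Z}[\cdot]$ separately. Under that reading your pointwise step is not justified, since $f_Z(\cdot)+\varepsilon_Z$ need not lie in $[\upsilon,1-\upsilon]$ for every realization of $\varepsilon_Z$ even when its mean does. The fix is precisely the independence-of-noises factorization the paper uses; once you insert it, your argument collapses to the paper's and requires only the weaker averaged hypothesis. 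Conversely, if one is willing to assume the pointwise bound, your route has the advantage of not needing the $\varepsilon_Z$'s to be independent.
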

\begin{proof}
Without loss of generality, we only prove the left hand side inequality then the other side can be proved symmetrically. 
Let $v_Z$ denote the entry in $\bv$ corresponding to $Z$, and $\bv_{\Pa(Z)}$ denote the sub-vector of $\bv$ corresponding to $\Pa(Z)$.
Concretely, we have
\begin{align}
	&\frac{\Pr_{\bbvarepsilon,\bX,Y}\{X=1|\bX\cup\{Y\}\setminus\{X\}=\bv\}}{\Pr_{\bbvarepsilon,\bX,Y}\{X=0|\bX\cup\{Y\}\setminus\{X\}=\bv\}} \nonumber \\
    &=\frac{\Pr_{\bbvarepsilon,\bX,Y}\{X=1,\bX\cup\{Y\}\setminus\{X\}=\bv\}}{\Pr_{\bbvarepsilon,\bX,Y}\{X=0,\bX\cup\{Y\}\setminus\{X\}=\bv\}} \nonumber \\
    &=\frac{\E_{\bbvarepsilon}[\Pr_{\bX,Y}\{X=1,\bX\cup\{Y\}\setminus\{X\}=\bv\} | \bbvarepsilon]}{\E_{\bbvarepsilon}[\Pr_{\bX,Y}\{X=0,\bX\cup\{Y\}\setminus\{X\}=\bv\}| \bbvarepsilon]} \nonumber \\
    &=\frac{\prod_{Z\in \bX\cup\{Y\}\setminus(\{X\} \cup \Ch(X))} \E_{\varepsilon_Z}[P(Z=v_Z|\Pa(Z)=\bv_{\Pa(Z)})| \varepsilon_Z]}{\prod_{Z\in \bX\cup\{Y\}\setminus(\{X\} \cup \Ch(X))}{\E_{\varepsilon_Z}[ P(Z=v_Z|\Pa(Z)=\bv_{\Pa(Z)})| \varepsilon_Z]}}\times\frac{ \E_{\varepsilon_X}[ P(X=1|\Pa(X)=\bv_{\Pa(X)})| \varepsilon_X]}{ \E_{\varepsilon_X}[ P(X=0|\Pa(X)=\bv_{\Pa(X)})| \varepsilon_X]} \nonumber \\
    &\quad \quad \times\frac{\prod_{Z\in \Ch(X)}\E_{\varepsilon_Z}[ P(Z=v_Z|\Pa(Z)\setminus\{X\}=\bv_{\Pa(Z)\setminus X},X=1)| \varepsilon_Z]}{\prod_{Z\in \Ch(X)} \E_{\varepsilon_Z}[ P(Z=v_Z|\Pa(Z)\setminus\{X\}=\bv_{\Pa(Z)\setminus X},X=0)| \varepsilon_Z]}  \label{eq:decomposition}\\
    &\geq \frac{\upsilon}{1-\upsilon}\times \left(\frac{\upsilon}{1-\upsilon}\right)^{|\Ch(X)|} \nonumber \\
    &= \left(\frac{\upsilon}{1-\upsilon}\right)^{\left|\Ch(X)\right|+1}, \nonumber
\end{align}
where Eq.~\eqref{eq:decomposition} applies the decomposition rule of the Bayesian causal model \cite{russell2021artificial}, and then the mutual independence of $\varepsilon_X$'s for all $X\in \bX\cup \{Y\}$.
\end{proof}

Now we can prove the following lemma which shows that the $\zeta$ constant in Assumption~\ref{asm:glm_4} exists.

\begin{lemma}
\label{lemma:prove_zeta}
With Assumption~\ref{asm:parent2child} in Markovian BGLM $G=(\bX\cup\{Y\},E)$, Assumption \ref{asm:glm_4} holds for\begin{align*}
    \zeta=\frac{\upsilon^{D_{\text{out}}+1}}{\upsilon^{D_{\text{out}}+1}+(1-\upsilon)^{D_{\text{out}}+1}},
\end{align*}
where $D_{\text{out}}=\max_{X\in\bX\cup\{Y\}\setminus\{X_1\}}|\Ch(X)|$.
\end{lemma}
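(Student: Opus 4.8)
The plan is to read off Assumption~\ref{asm:glm_4} from Lemma~\ref{lemma:compute_zeta} in two moves: first turn the odds-ratio bound of Lemma~\ref{lemma:compute_zeta} into a two-sided lower bound on a conditional probability, and then pass from the ``conditioned on all other observed nodes'' statement of Lemma~\ref{lemma:compute_zeta} to the ``conditioned on a subset of the parents'' statement of Assumption~\ref{asm:glm_4} by marginalizing. Concretely, fix $X\in\bX\cup\{Y\}$, a parent $X'\in\Pa(X)$ with $X'\neq X_1$, and a value vector $\bv\in\{0,1\}^{|\Pa(X)\setminus\{X',X_1\}|}$. I would apply Lemma~\ref{lemma:compute_zeta} with the role of ``$X$'' there played by $X'$: for every full assignment $\bw$ to the variables $\bX\cup\{Y\}\setminus\{X'\}$ (each of which has strictly positive probability, since the Bayesian-network factorization used in the proof of Lemma~\ref{lemma:compute_zeta} writes it as a product of factors each $\ge\upsilon$ by Assumption~\ref{asm:parent2child}), one gets
\[
\left(\frac{\upsilon}{1-\upsilon}\right)^{|\Ch(X')|+1}\le \frac{\Pr\{X'=1\mid \bX\cup\{Y\}\setminus\{X'\}=\bw\}}{\Pr\{X'=0\mid \bX\cup\{Y\}\setminus\{X'\}=\bw\}}\le \left(\frac{1-\upsilon}{\upsilon}\right)^{|\Ch(X')|+1}.
\]

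Next I would convert this into a probability bound. Since $p\mapsto p/(1-p)$ is increasing on $[0,1)$, the left inequality yields $\Pr\{X'=1\mid\cdots=\bw\}\ge g(|\Ch(X')|)$ and (symmetrically) the right inequality yields $\Pr\{X'=0\mid\cdots=\bw\}\ge g(|\Ch(X')|)$, where $g(k):=\upsilon^{k+1}/\bigl(\upsilon^{k+1}+(1-\upsilon)^{k+1}\bigr)=1/\bigl(1+((1-\upsilon)/\upsilon)^{k+1}\bigr)$. Because Assumption~\ref{asm:parent2child} forces $\upsilon\le 1/2$ (the two expectations appearing there sum to one), we have $(1-\upsilon)/\upsilon\ge 1$, so $g$ is non-increasing in $k$; hence $g(|\Ch(X')|)\ge g(D_{\text{out}})=\zeta$. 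Therefore $\Pr\{X'=1\mid \bX\cup\{Y\}\setminus\{X'\}=\bw\}\ge\zeta$ and $\Pr\{X'=0\mid \bX\cup\{Y\}\setminus\{X'\}=\bw\}\ge\zeta$, uniformly over all assignments $\bw$.

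The last step is the marginalization. Writing the conditional probability in Assumption~\ref{asm:glm_4} as a convex combination over the finer conditioning,
\[
\Pr\{X'=1\mid \Pa(X)\setminus\{X',X_1\}=\bv\}=\sum_{\bw}\Pr\{X'=1\mid \bX\cup\{Y\}\setminus\{X'\}=\bw\}\,\Pr\{\bX\cup\{Y\}\setminus\{X'\}=\bw\mid \Pa(X)\setminus\{X',X_1\}=\bv\},
\]
where $\bw$ ranges over assignments consistent with $\bv$ on $\Pa(X)\setminus\{X',X_1\}$: every factor $\Pr\{X'=1\mid\cdots=\bw\}$ is $\ge\zeta$ and the weights $\Pr\{\cdots=\bw\mid\cdots=\bv\}$ sum to $1$, so the left-hand side is $\ge\zeta$. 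The identical argument with $X'=0$ gives the other inequality, which is exactly the pair of inequalities in Assumption~\ref{asm:glm_4}.

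All the ingredients are elementary; the two points that need a little care are the monotonicity direction of $g$ (which rests on $\upsilon\le1/2$, itself a consequence of Assumption~\ref{asm:parent2child}) and the well-definedness of every conditional probability and of the marginalization identity under the continuous noise $\bbvarepsilon$, both of which follow from Assumption~\ref{asm:parent2child} and the factorization already exploited in the proof of Lemma~\ref{lemma:compute_zeta}. I expect this bookkeeping of positivity to be the most tedious part, though it is not a genuine obstacle.
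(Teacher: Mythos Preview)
Your proposal is correct and follows essentially the same approach as the paper: invoke Lemma~\ref{lemma:compute_zeta} with $X'$ in place of $X$, pass from the full-conditioning statement to the partial conditioning required by Assumption~\ref{asm:glm_4}, and convert the odds-ratio bound to a probability bound. The only cosmetic difference is the order of these last two steps---the paper first marginalizes the odds ratio via the mediant inequality $\frac{\sum a_i}{\sum b_i}\ge\min_i\frac{a_i}{b_i}$ and then converts to a probability, whereas you first convert and then marginalize via a convex combination---and you are in fact slightly more careful in justifying the monotonicity step $g(|\Ch(X')|)\ge g(D_{\text{out}})$ through $\upsilon\le 1/2$, which the paper uses tacitly.
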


\begin{proof}
With Lemma~\ref{lemma:compute_zeta}, we can deduce that for any $X\in\bX\cup\{Y\}\setminus\{X_1\}$ and $X'\in \Pa(X)$, we have
\begin{align*}
    &\frac{\Pr_{\bbvarepsilon, \bX, Y}\left\{X'=1|\Pa(X)\setminus\{X'\}=\pa(X)\setminus\{x'\}\right\}}{\Pr_{\bbvarepsilon, \bX, Y}\left\{X'=0|\Pa(X)\setminus\{X'\}=\pa(X)\setminus\{x'\}\right\}}\\
    &=\frac{\Pr_{\bbvarepsilon, \bX, Y}\left\{X'=1,\Pa(X)\setminus\{X'\}=\pa(X)\setminus\{x'\}\right\}}{\Pr_{\bbvarepsilon, \bX, Y}\left\{X'=0,\Pa(X)\setminus\{X'\}=\pa(X)\setminus\{x'\}\right\}}\\
    &=\frac{\sum_{\bx\cup\{y\}\setminus\pa(X)\in\{0,1\}^{n-|\Pa(X)|}}\Pr_{\bbvarepsilon, \bX, Y}\left\{X'=1,\bX\cup\{Y\}\setminus\{X'\}=\bx\cup\{y\}\setminus\{x'\}\right\}}
    {\sum_{\bx\cup\{y\}\setminus\pa(X)\in\{0,1\}^{n-|\Pa(X)|}}\Pr_{\bbvarepsilon, \bX, Y}\left\{X'=0,\bX\cup\{Y\}\setminus\{X'\}=\bx\cup\{y\}\setminus\{x'\}\right\}}\\
    &\geq\min_{\bx\cup\{y\}\setminus\pa(X)\in\{0,1\}^{n-|\Pa(X)|}}\frac{\Pr_{\bbvarepsilon, \bX, Y}\left\{X'=1,\bX\cup\{Y\}\setminus\{X'\}=\bx\cup\{y\}\setminus\{x'\}\right\}}
    {\Pr_{\bbvarepsilon, \bX, Y}\left\{X'=0,\bX\cup\{Y\}\setminus\{X'\}=\bx\cup\{y\}\setminus\{x'\}\right\}}\\
    &\geq \left(\frac{\upsilon}{1-\upsilon}\right)^{\left|\Ch(X')\right|+1}.
\end{align*}
Similarly, we can prove the upper bound $\left(\frac{1-\upsilon}{\upsilon}\right)^{\left|\Ch(X')\right|+1}$ of it. Now we can deduce that $\Pr_{\bbvarepsilon, \bX, Y}\left\{X'=1|\Pa(X)\setminus\{X'\}=\pa(X)\setminus\{x'\}\right\}$ and $\Pr_{\bbvarepsilon, \bX, Y}\left\{X'=0|\Pa(X)\setminus\{X'\}=\pa(X)\setminus\{x'\}\right\}$ are both no less than $\frac{\upsilon^{|\Ch(X')|+1}}{\upsilon^{|\Ch(X')|+1}+(1-\upsilon)^{|\Ch(X')|+1}}$. This means that we can take $\zeta=\frac{\upsilon^{D_{\text{out}}+1}}{\upsilon^{D_{\text{out}}+1}+(1-\upsilon)^{D_{\text{out}}+1}}$ in Assumption \ref{asm:glm_4}.
\end{proof}

$D_{\text{out}}$ is the maximum out-degree in the causal graph $G$.
Lemma~\ref{lemma:prove_zeta} shows that when $D_{\text{out}}$ is small, we can have a reasonable $\zeta$ value for Assumption~\ref{asm:glm_4}.
Note that small $D_{\text{out}}$ is a sufficient but not necessary condition for a reasonable $\zeta$ value in Assumption~\ref{asm:glm_4}.
Intuitively, as long as no node dominates or is dominated by other nodes, Assumption~\ref{asm:glm_4} is likely to hold.

\section{Proofs for the Online Algorithm of BGLM CCB Problem  (Section~\ref{sec.glm})}
\label{app:glm}

\subsection{A Technical Lemma}
\label{app:lemma_init}
We first proof the following technical lemma, as a consequence Assumption~\ref{asm:glm_4}.
The lemma enables the use of Lecu\'{e} and Mendelson's inequality in our later theoretical analysis. 

Let $\Sphere(d)$ denote the sphere of the $d$-dimensional unit ball.
\begin{restatable}{lemma}{lemmainit}
	\label{lemma:init_condition}
	For any $\bv=(v_1,v_2,\cdots,v_{|\Pa(X)|})\in \Sphere(|\Pa(X)|)$ and any $X\in{\bX}\cup\{Y\}$ in a BGLM that 
	satisfies Assumption \ref{asm:glm_4}, we have
	\begin{align*}
	\Pr_{\bbvarepsilon, \bX, Y}\left\{\left|\Pa(X)\cdot \bv\right|\geq\frac{1}{\sqrt{4n^2-8n+1}}\right\}&\geq\zeta,
	\end{align*}
	where $\Pa(X)$ is the random vector generated by the natural Bayesian propagation in BGLM $G$ with no interventions
	(except for setting $X_1$ to 1).
\end{restatable}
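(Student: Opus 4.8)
The plan is to exploit Assumption~\ref{asm:glm_4} on a single, carefully chosen coordinate of $\bv$, while treating the coordinate corresponding to $X_1$ (which is deterministically $1$ and therefore not covered by Assumption~\ref{asm:glm_4}) separately. The statement is vacuous when $X=X_1$, so assume $X_1\in\Pa(X)$, write $\bv=(v_Z)_{Z\in\Pa(X)}$, and let $M=\max_{Z\in\Pa(X)\setminus\{X_1\}}|v_Z|$ be the largest absolute weight among the non-$X_1$ parents (with $M=0$ if there is no such parent). Put $d_0:=|\Pa(X)\setminus\{X_1\}|\le n-2$ and fix the threshold $\tau:=\tfrac{2}{\sqrt{4n^2-8n+1}}$; this value is the one that balances the two cases below, and the key algebraic fact is the identity $4n^2-8n+1-4(n-2)=(2n-3)^2\ge 0$, which in particular gives $d_0\tau^2\le 1$.

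First I would handle the case $M\ge\tau$. Let $X'$ be a non-$X_1$ parent attaining $|v_{X'}|=M$, and condition on an arbitrary realization $\Pa(X)\setminus\{X',X_1\}=\bw$ of all other parents (conditioning on $X_1=1$ is vacuous since $X_1\equiv 1$). Under this conditioning, $\Pa(X)\cdot\bv$ equals one of the two values $c_{\bw}$ or $c_{\bw}+v_{X'}$ according to whether $X'=0$ or $X'=1$, where $c_{\bw}$ is a constant. Since these two values differ by $|v_{X'}|=M\ge\tau$, at least one of them has absolute value $\ge M/2\ge\tau/2$; let $b_{\bw}\in\{0,1\}$ be the corresponding value of $X'$. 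Assumption~\ref{asm:glm_4} gives $\Pr_{\bbvarepsilon,\bX,Y}\{X'=b_{\bw}\mid\Pa(X)\setminus\{X',X_1\}=\bw\}\ge\zeta$, and on that event $|\Pa(X)\cdot\bv|\ge\tau/2$. Averaging over $\bw$ yields $\Pr_{\bbvarepsilon,\bX,Y}\{|\Pa(X)\cdot\bv|\ge\tau/2\}\ge\zeta$, which is the claim because $\tau/2=\tfrac{1}{\sqrt{4n^2-8n+1}}$.

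Next I would handle the case $M<\tau$. Then every non-$X_1$ coordinate satisfies $|v_Z|<\tau$, so $\sum_{Z\ne X_1}v_Z^2\le d_0\tau^2$ and, by Cauchy--Schwarz, $\sum_{Z\ne X_1}|v_Z|\le\sqrt{d_0\sum_{Z\ne X_1}v_Z^2}\le d_0\tau$; moreover $v_{X_1}^2=1-\sum_{Z\ne X_1}v_Z^2\ge 1-d_0\tau^2\ge 0$. Using $X_1\equiv 1$ and the reverse triangle inequality, deterministically
\[
|\Pa(X)\cdot\bv|\;\ge\;|v_{X_1}|-\sum_{Z\ne X_1}|v_Z|\;\ge\;\sqrt{1-d_0\tau^2}-d_0\tau .
\]
The right-hand side is decreasing in $d_0$, so it suffices to check $d_0=n-2$, where $1-(n-2)\tau^2=\tfrac{(2n-3)^2}{4n^2-8n+1}$ gives $\sqrt{1-(n-2)\tau^2}-(n-2)\tau=\tfrac{(2n-3)-2(n-2)}{\sqrt{4n^2-8n+1}}=\tfrac{1}{\sqrt{4n^2-8n+1}}=\tau/2$. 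Hence in this case $|\Pa(X)\cdot\bv|\ge\tfrac{1}{\sqrt{4n^2-8n+1}}$ holds with probability $1\ge\zeta$. Combining the two cases proves the lemma.

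The main obstacle — really the only delicate point — is the role of the $X_1$ coordinate: Assumption~\ref{asm:glm_4} only controls the conditional law of a parent other than $X_1$, so one cannot apply it to the largest coordinate of $\bv$ when that happens to be $v_{X_1}$. The case split on $M$ versus the precisely tuned threshold $\tau=2/\sqrt{4n^2-8n+1}$ is exactly what reconciles the ``flip one random parent'' argument with the ``$X_1$ dominates, so the inner product is bounded away from $0$ deterministically'' argument, and the constant in the statement is the optimum of this trade-off.
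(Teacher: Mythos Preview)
Your proof is correct and follows essentially the same strategy as the paper: a case split that separates a deterministic regime (where the $X_1$ coordinate dominates, so $|\Pa(X)\cdot\bv|$ is bounded away from zero with probability $1$) from a probabilistic regime (where some non-$X_1$ coordinate has magnitude at least $\tau=2/\sqrt{4n^2-8n+1}$, so flipping that parent via Assumption~\ref{asm:glm_4} gives the bound with probability $\ge\zeta$). The only cosmetic difference is the variable on which you split: the paper thresholds on $|v_{X_1}|$ and then deduces the lower bound $|v_2|\ge\tau$ in the probabilistic case, whereas you threshold directly on $M=\max_{Z\ne X_1}|v_Z|$ and deduce the lower bound on $|v_{X_1}|$ in the deterministic case; the resulting two cases and the constant $1/\sqrt{4n^2-8n+1}$ are identical.
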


\begin{proof}
	The lemma is proved using the idea of Pigeonhole principle.
	Let $\Pa(X)=(X_{i_1}=X_1,X_{i_2},X_{i_3},\cdots, X_{i_{|\Pa(X)|}})$ as the random vector and  
	$\pa(X)=(x_1=1,x_{i_1},x_{i_2},x_{i_3},\cdots, x_{i_{|\Pa(X)|}})$ as a possible valuation of $\Pa(X)$. 
	Without loss of generality, we suppose that $\left|v_2\right|\geq \left|v_3\right|\geq\cdots\ge\left|v_{|\Pa(X)|}\right| $. 
	For simplicity, we denote $n_0=\sqrt{n-2}+\frac{1}{2\sqrt{n-2}}$. If $\left|v_1\right|\geq\frac{n_0}{\sqrt{n_0^2+1}}$, 
	we can deduce that 
	\begin{align}
	|\pa(X)\cdot \bv|&\geq\left|v_1\right|-\left|v_2\right|-\left|v_3\right|-\cdots - \left|v_{|\Pa(X)|}\right| \nonumber\\
	&\geq\frac{n_0}{\sqrt{n_0^2+1}}-\sqrt{(n-2)\left(\left|v_2\right|^2+\left|v_3\right|^2+\cdots \left|v_{|\Pa(X)|}\right|^2\right)} \label{eq:cauchy}\\
	&\ge \frac{n_0}{\sqrt{n_0^2+1}}-\sqrt{(n-2)\left(1-\frac{n_0^2}{n_0^2+1}\right)} \label{eq:sphere}\\
	&=\frac{1}{2\sqrt{(n_0^2+1)(n-2)}} = \frac{1}{\sqrt{4n^2-8n+1}}, \nonumber 
	\end{align}
	where Inequality~\eqref{eq:cauchy} is by the Cauchy-Schwarz inequality and the fact that $|\Pa(X)|\leq n-1$, and 
	Inequality~\eqref{eq:sphere} uses the fact that $\bv \in \Sphere(|\Pa(X)|)$.
	Thus, when $\left|v_1\right|\geq\frac{n_0}{\sqrt{n_0^2+1}}$, the event $\left|\Pa(X)\cdot \bv\right|\geq\frac{1}{\sqrt{4n^2-8n+1}}$
	holds deterministically.
	Otherwise, when $\left|v_1\right| < \frac{n_0}{\sqrt{n_0^2+1}}$, 
	we use the fact that $|v_2|$ is the largest among $|v_2|, |v_3|, \ldots$ and deduce that
	\begin{align} \label{eq:inequalityv2}
	\left|v_2\right|\geq\frac{1}{\sqrt{n-2}}\sqrt{\left|v_2\right|^2+\left|v_3\right|^2+\cdots}
	\geq\frac{\sqrt{1-\left(\frac{n_0}{\sqrt{n_0^2+1}}\right)^2}}{\sqrt{n-2}}
	=\frac{2}{\sqrt{4n^2-8n+1}}.
	\end{align}
	
	Therefore, using the fact that 
	\begin{align*}
	&\Pr_{\bbvarepsilon, \bX,Y}\left\{X_{i_1}=1,X_{i_2}=x_{i_2},X_{i_3}=x_{i_3},\cdots\right\}= \\
	&\Pr_{\bbvarepsilon, \bX,Y}\left\{X_{i_2}=x_{i_2}|X_{i_1}=1,X_{i_3}=x_{i_3},\cdots\right\}\cdot\Pr_{\bbvarepsilon, \bX,Y}\left\{(X_{i_1}=1,X_{i_3}=x_{i_3},\cdots\right\}\geq \zeta \Pr_{\bbvarepsilon, \bX,Y}\left\{X_{i_1}=1,X_{i_3}=x_{i_3},\cdots\right\}
	\end{align*}
	
	and $\sum_{x_{i_3},x_{i_4},\cdots} \Pr_{\bbvarepsilon, \bX,Y}\left\{X_{i_1}=1,X_{i_3}=x_{i_3},\cdots\right\}=1$, we have
	
	\begin{align}
		&\Pr_{\bbvarepsilon, \bX,Y}\left\{\left|\Pa(X)\cdot \bv\right|\geq\frac{1}{\sqrt{4n^2-8n+1}}\right\} \nonumber\\
		&=\sum_{x_{i_3},x_{i_4},\cdots}\Pr\{X_{i_1}=1,X_{i_2}=1,X_{i_3}=x_{i_3},\cdots\}\cdot  \I\left\{ |(1,1,x_{i_3},x_{i_4},\cdots)\cdot (v_1,v_2,v_3,\cdots)| \geq\frac{1}{\sqrt{4n^2-8n+1}}\right\} \nonumber\\
		&\quad\quad+\sum_{x_{i_3},x_{i_4},\cdots}\Pr\{X_{i_1}=1,X_{i_2}=0,X_{i_3}=x_{i_3},\cdots\}\cdot \I\left\{|(1,0,x_{i_3},x_{i_4},\cdots)\cdot (v_1,v_2,v_3,\cdots)| \geq\frac{1}{\sqrt{4n^2-8n+1}}\right\} \nonumber\\
		&\geq \sum_{x_{i_3},x_{i_4},\cdots}\zeta \Pr\{X_{i_1}=1,X_{i_3}=x_{i_3},X_{i_4}=x_{i_4}\cdots\}
		\cdot \I\left\{|(1,1,x_{i_3},x_{i_4},\cdots)\cdot (v_1,v_2,v_3,\cdots)| \geq\frac{1}{\sqrt{4n^2-8n+1}}\right\} \nonumber\\
		&\quad\quad+\sum_{x_{i_3},x_{i_4},\cdots}\zeta \Pr\{X_{i_1}=1,X_{i_3}=x_{i_3},X_{i_4}=x_{i_4},\cdots\}
		\cdot \I\left\{|(1,0,x_{i_3},x_{i_4},\cdots)\cdot (v_1,v_2,v_3,\cdots)| \geq\frac{1}{\sqrt{4n^2-8n+1}}\right\} \nonumber\\
		&=\zeta \cdot \sum_{x_{i_3},x_{i_4},\cdots}\Pr\{X_{i_1}=1,X_{i_3}=x_{i_3},X_{i_4}=x_{i_4},\cdots\} \left(\I\left\{|(1,1,x_{i_3},x_{i_4},\cdots)\cdot (v_1,v_2,v_3,\cdots)| \geq\frac{1}{\sqrt{4n^2-8n+1}}\right\} \right. \nonumber \\
		&\quad\quad \left. +\I\left\{|(1,0,x_{i_3},x_{i_4},\cdots)\cdot (v_1,v_2,v_3,\cdots)| \geq\frac{1}{\sqrt{4n^2-8n+1}}\right\} \right) \nonumber\\
		&\geq \zeta\sum_{x_{i_3},x_{i_4},\cdots}\Pr\{X_{i_1}=1,X_{i_3}=x_{i_3},X_{i_4}=x_{i_4},\cdots\} \label{eq:identityrelax} \\
		&=\zeta, \nonumber
		\end{align}
	which is exactly what we want to prove. 
	Inequality~\eqref{eq:identityrelax} holds because 
	otherwise, at least for some $x_{i_3}, x_{i_4}, \ldots$, both indicators on the left-hand side of the inequality have to be $0$, which implies that
	\begin{align}
	\left|(1,1,x_{i_3},x_{i_4},\cdots)\cdot (v_1,v_2,v_3,\cdots)-(1,0,x_{i_3},x_{i_4},\cdots)\cdot (v_1,v_2,v_3,\cdots)\right|=\left|v_2\right|< \frac{2}{\sqrt{4n^2-8n+1}},
	\end{align}
	but this contradicts to Inequality~\eqref{eq:inequalityv2}.
\end{proof}

\subsection{Proof of the Learning Problem for BGLM CCB Problem (Lemma \ref{thm.learning_glm})}
\label{app:learning_glm}
\thmlearningglm*

\begin{proof}
Note that $\hat{\btheta}_{t,X}$ satisfies that $\triangledown L_{t,X}(\hat{\btheta}_{X})=0$, where the gradient is\begin{align*}
    \triangledown L_{t,X}(\btheta_X)=\sum_{i=1}^t[X^t-f_X(\bV_{i,X}^\intercal\btheta_X)]\bV_{i,X}.
\end{align*}

Define $G(\btheta_X)=\sum_{i=1}^t(f_X(\bV_{i,X}^\intercal\btheta_X)-f_X(\bV_{i,X}^\intercal\btheta_X^*))\bV_{i,X}$. Then we have $G(\btheta_X^*)=0$ and $G(\hat{\btheta}_{t,X})=\sum_{i=1}^t\varepsilon'_{i,X}\bV_{i,X}$, where $\varepsilon_{i,X}'=X^i-f_X(\bV_{i,X}^\intercal\btheta_X^*)$. Note that $\mathbb{E}[\varepsilon'_{i,X}|\bV_{i,X}]=0$ and $\varepsilon'_{i,X}=X^i-f_X(\bV_{i,X}^\intercal\btheta^*_X)\in[-1,1]$ since $X^i\in\{0,1\}$ and $f_X(\bV_{i,X}^\intercal\btheta^*_X)\in[0,1]$. Therefore, $\varepsilon'_{i,X}$ is $1$-sub-Gaussian. Furthermore, define $Z=G(\hat{\btheta}_{t,X})=\sum_{i=1}^t\varepsilon'_{i,X}\bV_{i,X}$ for convenience. All the remaining notations in this proof are corresponding to some $X\in {\bf X}\cup\{Y\}$.

{\bf Step 1: Consistency of $\hat{\btheta}_{t,X}$.}  We first prove the consistency of $\hat{\btheta}_{t,X}$. For any $\btheta_1,\btheta_2\in\mathbb{R}^{|\Pa(X)|}$, by the mean value theorem, $\exists\overline{\btheta}=s\btheta_1+(1-s)\btheta_2,0<s<1$ such that\begin{align*}
    G(\btheta_1)-G(\btheta_2)&=\left[\sum_{i=1}^t\dot{f}_X(\bV_{i,X}^\intercal\overline{\btheta})\bV_{i,X}\bV_{i,X}^\intercal\right](\btheta_1-\btheta_2)\\
    &\triangleq F(\overline{\btheta})(\btheta_1-\btheta_2).
\end{align*}
Since $\dot{f}_X\geq \kappa>0$ as we state in Assumption \ref{asm:glm_3} and $\lambda_{\min}(M_{t,X})>0$, we have $F(\overline{\btheta})\succ\kappa M_{t,X}$ and for any $\btheta_1,\btheta_2$, we have $(\btheta_1-\btheta_2)^\intercal(G(\btheta_1)-G(\btheta_2))\geq (\btheta_1-\btheta_2)^\intercal(\kappa M_{t,X})(\btheta_1-\btheta_2)>0$. Now we deduce that $G(\btheta)$ is an injection from $\mathbb{R}^{|\Pa(X)|}$ to $\mathbb{R}^{|\Pa(X)|}$ and therefore $G^{-1}$ is well defined. We have $\hat{\btheta}_{t,X}=G^{-1}(Z)$.

For any $\btheta\in\Theta$, now we have a $\overline{\btheta}_X$ such that \begin{align*}
    \left\|Z\right\|^2_{M_{t,X}^{-1}}&=\left\|G(\hat{\btheta}_{t,X})\right\|^2_{M_{t,X}^{-1}}\\
    &=\left\|G(\hat{\btheta}_{t,X})-G(\btheta^*_X)\right\|^2_{M_{t,X}^{-1}}\\
    &=(\hat{\btheta}_{t,X}-\btheta^*)^\intercal F(\overline{\btheta})M_{t,X}^{-1}F(\overline{\btheta})(\hat{\btheta}_{t,X}-\btheta^*_X)\\
    &\geq \kappa^2\lambda_{\min}(M_{t,X})\left\|\hat{\btheta}_{t,X}-\btheta^*_X\right\|^2.
\end{align*}

Then we need a bound for $\left\|Z\right\|^2_{M_{t,X}^{-1}}$ so that we can get a bound for $\left\|\hat{\btheta}_{t,X}-\btheta_X^*\right\|$. The next lemma solves this.

\begin{lemma}[\cite{zhang2022online}]
For any $\delta>0$, define the following event $\mathcal{E}_G=\{\left\|Z\right\|_{M_{t,X}^{-1}}\leq4\sqrt{|\Pa(X)|+\log\frac{1}{\delta}}\}$. Then $\mathcal{E}_G$ holds with probability at least $1-\delta$.
\end{lemma}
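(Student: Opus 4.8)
The plan is to reprove this self-normalized tail bound for the vector martingale $Z=\sum_{i=1}^{t}\varepsilon'_{i,X}\bV_{i,X}$ from scratch; it is the same statement as Lemma~1 of \cite{zhang2022online}, and the key simplification available here is that the round index $t$ is \emph{fixed}, so no uniform-in-$t$ device is needed. Write $d=|\Pa(X)|$ and let $\cG=\sigma(\bV_{1,X},\dots,\bV_{t,X})$ be the $\sigma$-algebra generated by all parent-value vectors of $X$ in the first $t$ rounds. First I would condition on $\cG$. Since in the DAG $X$ is never an ancestor of its own parents, the exogenous noise $\varepsilon_{i,X}$ is independent of $\cG$; therefore, conditionally on $\cG$, the residuals $\varepsilon'_{i,X}=X^{i}-f_X(\bV_{i,X}^{\intercal}\btheta^{*}_X)$ are independent across rounds, have conditional mean zero (as $\E[X^{i}\mid\cG]=f_X(\bV_{i,X}^{\intercal}\btheta^{*}_X)$), and lie in $[-1,1]$, hence are $1$-sub-Gaussian; and $M_{t,X}=\sum_{i=1}^{t}\bV_{i,X}\bV_{i,X}^{\intercal}$ is now a deterministic matrix, which is invertible once the initialization phase makes Eq.~\eqref{eq:lambdamin} hold (via Assumption~\ref{asm:glm_4} and Lemma~\ref{lemma:init_condition}).

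Next I would whiten the sum: put $\bu_i=M_{t,X}^{-1/2}\bV_{i,X}$, so that $\sum_{i=1}^{t}\bu_i\bu_i^{\intercal}=\bI_d$ and $\|Z\|_{M_{t,X}^{-1}}^{2}=\bigl\|\sum_{i=1}^{t}\varepsilon'_{i,X}\bu_i\bigr\|^{2}$. For any fixed unit vector $\bw\in\R^{d}$, the scalar $\sum_{i=1}^{t}\varepsilon'_{i,X}\langle\bw,\bu_i\rangle$ is a sum of independent, mean-zero, $1$-sub-Gaussian terms scaled by the $\cG$-measurable constants $\langle\bw,\bu_i\rangle$, whose squares add up to $\sum_i\langle\bw,\bu_i\rangle^{2}=\bw^{\intercal}\bI_d\bw=1$; hence it is $1$-sub-Gaussian and $\Pr\{\,|\sum_i\varepsilon'_{i,X}\langle\bw,\bu_i\rangle|>s\mid\cG\,\}\le 2e^{-s^{2}/2}$. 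I would finish with a routine covering argument: fix a $\tfrac12$-net $\cN$ of the unit sphere of $\R^{d}$ with $|\cN|\le 5^{d}$, union-bound the above over $\cN$ with $s=\sqrt{2\log(2|\cN|/\delta)}$, and use $\bigl\|\sum_i\varepsilon'_{i,X}\bu_i\bigr\|\le 2\sup_{\bw\in\cN}\langle\bw,\sum_i\varepsilon'_{i,X}\bu_i\rangle$ to get $\|Z\|_{M_{t,X}^{-1}}\le 2\sqrt{2\bigl(d\log 5+\log(2/\delta)\bigr)}$ with probability at least $1-\delta$ conditionally on $\cG$, hence also unconditionally after taking expectation over $\cG$; bounding $d\log 5+\log 2\le 2.31\,(d+\log(1/\delta))$ then yields the stated $4\sqrt{d+\log(1/\delta)}$ after absorbing the residual slack into the constant.

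The one real obstacle is the self-normalization: the weighting matrix $M_{t,X}$ is built from the same data that produce $Z$, so it is not predictable and a naive Azuma-type bound cannot be applied directly. The conditioning-on-$\cG$ step is exactly what defuses this --- once the parent vectors are frozen, $M_{t,X}$ is a constant while the coins generating $X$ in each round remain fresh and mutually independent --- and it is what lets us avoid the extra $\log\det(\bI+M_{t,X})$ term that the method-of-mixtures self-normalized inequality of \cite{abbasi2011improved} would produce (that heavier machinery would be needed only if the bound had to hold simultaneously for all $t$, which it does not here). The remaining loose end, invertibility of $M_{t,X}$, is secured by the initialization phase.
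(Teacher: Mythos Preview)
Your conditioning argument has a genuine gap. You claim that once you condition on $\cG=\sigma(\bV_{1,X},\dots,\bV_{t,X})$, the residuals $\varepsilon'_{i,X}=X^{i}-f_X(\bV_{i,X}^{\intercal}\btheta^{*}_X)$ become mutually independent with conditional mean zero, because ``in the DAG $X$ is never an ancestor of its own parents.'' That DAG argument is correct \emph{within a single round}, but it ignores the dependence \emph{across} rounds induced by the learning algorithm: for $j>T_0$ the intervention set $\bS_{j}$ is chosen adaptively from the full history, including $X^{i}$ for all $i<j$, and the parent vector $\bV_{j,X}$ depends on $\bS_{j}$ (through which parents of $X$ are forced to $1$ and through the post-intervention distribution of the un-intervened parents). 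Hence $\bV_{j,X}$ carries information about past $X^{i}$'s, and conditioning on all of $\cG$ in general biases and couples the earlier residuals. Concretely, take a two-round example in which $\bS_{2}$ is a deterministic function of $X^{1}$ that fixes some parent of $X$ to the same value as $X^{1}$: then $\bV_{2,X}$ reveals $X^{1}$, so $\varepsilon'_{1,X}$ is $\cG$-measurable and is certainly not mean-zero given $\cG$. The ``coins generating $X$ in each round'' are therefore \emph{not} fresh after conditioning on $\cG$, and your whitening and covering steps cannot go through.

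The paper simply defers to \cite{zhang2022online} for the proof; that argument works with the natural filtration $\cF_i$ under which $\bV_{i,X}$ is predictable and $\E[\varepsilon'_{i,X}\mid\cF_i]=0$, and obtains the self-normalized bound via an exponential-supermartingale plus covering argument that respects this martingale structure. You are right that a fixed $t$ lets one avoid a uniform-in-$t$ union bound or a method-of-mixtures log-determinant factor; but it does \emph{not} let you freeze all design vectors simultaneously, because the non-predictability of $M_{t,X}$ that you correctly flag at the end is caused precisely by the adaptivity of the $\bV_{i,X}$'s, not merely by $t$ being random.
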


\begin{proof}
The proof of this lemma is a simple rewriting of the proof of Lemma 10 in \cite{zhang2022online}.
\end{proof}

Combining this lemma and the deduction we made just now, we can get\begin{align*}
    \left\|\hat{\btheta}_{t,X}-\btheta^*_X\right\|&\leq\frac{\left\|Z\right\|_{M_{t,X}^{-1}}}{\kappa\sqrt{\lambda_{\min}(M_{t,X})}}\leq\frac{4}{\kappa}\sqrt{\frac{|\Pa(X)|+\log\frac{1}{\delta}}{\lambda_{\min}(M_{t,X})}}\leq 1.
\end{align*}

{\bf Step 2: Normality of $\hat{\btheta}_X$.} Now we assume that $\mathcal{E}$ holds in the following proof. Define $\Delta=\hat{\btheta}_{t,X}-\btheta^*_X$, then $\exists s\in[0,1]$ such that $Z=G(\hat{\btheta}_{t,X})-G(\btheta_X^*)=(H+E)\Delta$, where $\overline{\btheta}=s\btheta^*_X+(1-s)\hat{\btheta}_{t,X}$, $H=F(\btheta^*_X)=\sum_{i=1}^t\dot{f}_X(\bV_{i,X}^\intercal\btheta^*_X)\bV_{i,X}\bV_{i,X}^\intercal$ and $E=F(\overline{\btheta})-F(\btheta^*_X)$. According to the mean value theorem, we have\begin{align*}
    E&=\sum_{i=1}^t(\dot{f}_X(\bV_{i,X}\cdot\overline{\btheta})-\dot{f}_X(\bV_{i,X}\cdot \btheta^*_X))\bV_{i,X}\bV_{i,X}^\intercal\\
    &=\sum_{i=1}^t\ddot{f}_X(r_i)\bV_{i,X}^\intercal\Delta \bV_{i,X}\bV_{i,X}^\intercal
\end{align*}
for some $r_i\in\mathbb{R}$. Because $\ddot{f}_X\leq L_{f_X}^{(2)}$ and $s\in[0,1]$, for any $\bv\in \mathbb{R}^{|\Pa(X)|}\setminus \{\mathbf{0}\}$, we have\begin{align*}
    \bv^TH^{-1/2}EH^{-1/2}\bv
    &=(1-s)\sum_{i=1}^t\ddot{f}_X(r_i)\bV_{i,X}^\intercal\Delta\left\|\bv^\intercal H^{-1/2}\bV_{i,X}\right\|\\
    &\leq\sum_{i=1}^tL_{f_X}^{(2)}\left\|\bV_{i,X}\right\|\left\|\Delta\right\|\left\|\bv^\intercal H^{-1/2}\bV_{i,X}\right\|^2\\
    &\leq L_{f_X}^{(2)}\sqrt{|\Pa(X)|}\left\|\Delta\right\|(\bv^\intercal H^{-1/2}(\sum_{i=1}^t\bV_{i,X}\bV_{i,X}^\intercal)H^{-1/2}\bv)\\
    &\leq\frac{L_{f_X}^{(2)}\sqrt{|\Pa(X)|}}{\kappa}\left\|\Delta\right\|\left\|\bv\right\|^2,
\end{align*}
where we have use the fact that $\left\|\bV_{i,X}\right\|\leq\sqrt{|\Pa(X)|}$ for the second inequality. Therefore, we have\begin{align*}
    \left\|H^{-1/2}EH^{-1/2}\right\|
    &\leq\frac{L_{f_X}^{(2)}\sqrt{|\Pa(X)|}}{\kappa}\left\|\Delta\right\|\\
    &\leq\frac{4L_{f_X}^{(2)}\sqrt{|\Pa(X)|}}{\kappa^2}\sqrt{\frac{|\Pa(X)|+\ln\frac{1}{\delta}}{\lambda_{\min}(M_{t,X})}}.
\end{align*}
When $\lambda_{\min}(M_{t,X})\geq 64\left(L_{f_X}^{(2)}\right)^2|\Pa(X)|\left(|\Pa(X)|+\ln\frac{1}{\delta}\right)/\kappa^4$, we have $\left\|H^{-1/2}EH^{-1/2}\right\|\leq\frac{1}{2}$. 

Now we can prove this theorem. For any $\bv\in\mathbb{R}^{|\Pa(X)|}$, we have\begin{align*}
    \bv^\intercal(\hat{\btheta}_{t,X}-\btheta^*_X)&=\bv^\intercal(H+E)^{-1}Z\\
    &=\bv^\intercal H^{-1}Z-\bv^\intercal H^{-1}E(H+E)^{-1}Z.
\end{align*}
Note that $(H+E)^{-1}$ exists since $H+E=F(\overline{\btheta})\succ\kappa M_{t,X}\succ 0$.

For the first term, define $D\triangleq(\bV_{1,X},\bV_{2,X},\cdots,\bV_{t,X})^\intercal\in\mathbb{R}^{t\times|\Pa(X)|}$. Note that $D^\intercal D=\sum_{i=1}^t\bV_{i,X}\bV_{i,X}^\intercal=M_{t,X}$. By the Hoeffding's inequality \cite{hoeffding1994probability}, \begin{align*}
    \Pr(\left|\bv^\intercal H^{-1}Z\geq a\right|)&\leq\exp\left(-\frac{a^2}{2\left\|\bv^\intercal H^{-1}D^\intercal\right\|^2}\right)\\
    &=\exp\left(-\frac{a^2}{2\bv^\intercal H^{-1}D^\intercal DH^{-1}\bv}\right)\\
    &\leq \exp\left(-\frac{a^2\kappa^2}{2\left\|\bv\right\|^2_{M_{t,X}^{-1}}}\right).
\end{align*} 
The last inequality holds because $H\succeq \kappa M_{t,X}=\kappa D^\intercal D$. From this we deduce that with probability at least $1-2\delta$, $\left| \bv^\intercal H^{-1}Z\right|\leq\frac{\sqrt{2\ln{1/\delta}}}{\kappa}\left\|\bv\right\|_{M_{t,X}^{-1}}$.

For the second term, we have \begin{align*}
    \left|\bv^\intercal H^{-1}E(H+E)^{-1} Z\right|
    &\leq \left\|\bv\right\|_{H^{-1}}\left\|H^{-\frac{1}{2}}E(H+E)^{-1}Z\right\|\\
    &\leq \left\|\bv\right\|_{H^{-1}}\left\|H^{-\frac{1}{2}}E(H+E)^{-1}H^{\frac{1}{2}}\right\|\left\|Z\right\|_{H^{-1}}\\
    &\leq \frac{1}{\kappa}\left\|\bv\right\|_{M^{-1}_{t,X}}\left\|H^{-\frac{1}{2}}E(H+E)^{-1}H^{\frac{1}{2}}\right\|\left\|Z\right\|_{M^{-1}_{t,X}}.
\end{align*}
where the last inequality is due to the fact that $H\succeq \kappa M_{t,X}$. Since $(H+E)^{-1}=H^{-1}-H^{-1}E(H+E)^{-1}$, we have\begin{align*}
    \left\|H^{-\frac{1}{2}}E(H+E)^{-1}H^{\frac{1}{2}}\right\|
    &=\left\|H^{-\frac{1}{2}}E(H^{-1}-H^{-1}E(H+E)^{-1})^{-1}H^{\frac{1}{2}}\right\|\\
    &=\left\|H^{-\frac{1}{2}}EH^{\frac{1}{2}}+H^{-\frac{1}{2}}EH^{-1}E(H+E)^{-1}H^{\frac{1}{2}}\right\|\\
    &\leq \left\|H^{-\frac{1}{2}}EH^{\frac{1}{2}}\right\|+\left\|H^{-\frac{1}{2}}EH^{-\frac{1}{2}}\right\|\left\|H^{-\frac{1}{2}}E(H+E)^{-1}H^{\frac{1}{2}}\right\|.
\end{align*}
By solving this inequality we get\begin{align*}
    \left\|H^{-\frac{1}{2}}E(H+E)^{-1}H^{\frac{1}{2}}\right\|
    &\leq\frac{\left\|H^{-\frac{1}{2}}EH^{-\frac{1}{2}}\right\|}{1-\left\|H^{-\frac{1}{2}}EH^{-\frac{1}{2}}\right\|}\\
    &\leq2\left\|H^{-\frac{1}{2}}EH^{-\frac{1}{2}}\right\|\\
    &\leq\frac{8L_{f_X}^{(2)}}{\kappa^2}\sqrt{\frac{|\Pa(X)|(|\Pa(X)+\log\frac{1}{\delta}|)}{\lambda_{\min}(M_{t,X})}}.
\end{align*} 
Therefore, we have\begin{align*}
    \left|v^\intercal H^{-1}E(H+E)^{-1} Z\right|
    &\leq\frac{32L_{f_X}^{(2)}\sqrt{|\Pa(X)|}(|\Pa(X)|+\log\frac{1}{\delta})}{\kappa^3\sqrt{\lambda_{\min}(M_{t,X})}}\left\|\bv\right\|_{M_{t,X}^{-1}}.
\end{align*}

Thus, we have\begin{align*}
    \left|\bv^\intercal(\hat{\btheta}_{t,X}-\btheta^*_X)\right|
    &\leq\left(\frac{32L_{f_X}^{(2)}\sqrt{|\Pa(X)|}(|\Pa(X)|+\log\frac{1}{\delta})}{\kappa^3\sqrt{\lambda_{\min}(M_{t,X})}}
    +\frac{\sqrt{2\ln{1/\delta}}}{\kappa}\right)\left\|\bv\right\|_{M_{t,X}^{-1}}\\
    &\leq\frac{3}{\kappa}\sqrt{\log(1/\delta)}\left\|\bv\right\|_{M_{t,X}^{-1}},
\end{align*}
when\begin{align*}
    \lambda_{\min}(M_{t,X})\geq \frac{512|\Pa(X)|(L_{f_X}^{(2)})^2}{\kappa^4}\left(|\Pa(X)|^2+\ln\frac{1}{\delta}\right).
\end{align*}
\end{proof}

\subsection{Proof of the GOM Bounded Smoothness Condition for BGLM (Lemma \ref{thm.gom_glm})}
\label{app:gom_glm}
\thmgomglm*

\begin{proof}
Firstly, we have\begin{align*}
    \left|\sigma({\bf S},\btheta^1)-\sigma({\bf S},\btheta^2)\right|
    &=\mathbb{E}_{\bgamma\sim(\mathcal{U}[0,1])^n,{\bbvarepsilon}}\left[\I\{Y\text{ is influenced under } \btheta^1,\bgamma\}
    \neq \I\{Y\text{ is influenced under } \btheta^2,\bgamma\}\right].
\end{align*}
Then we define the following event $\mathcal{E}_0^{\bbvarepsilon}(X)$ as below:\begin{align*}
    \mathcal{E}_0^{\bbvarepsilon}(X)=\left\{\bgamma|\I\{X\text{ is activated under }\bgamma,\btheta^1\}\neq \I\{X\text{ is activated under }\bgamma,\btheta^2\}\right\}.
\end{align*}
Thus we have\begin{align*}
    \left|\sigma({\bf S},\btheta^1)-\sigma({\bf S},\btheta^2)\right|&\leq \mathbb{E}_{{\bbvarepsilon}}\left[ \Pr_{\bgamma\sim(\mathcal{U}[0,1])^n}\{\mathcal{E}_0^{\bbvarepsilon}(Y)\}\right].
\end{align*}

Let $\phi^{\bbvarepsilon}(\btheta, \bgamma) = (\phi^{\bbvarepsilon}_0(\btheta, \bgamma) = S, \phi^{\bbvarepsilon}_1(\btheta, \bgamma),\cdots ,\phi^{\bbvarepsilon}_n(\btheta, \bgamma))$ be the sequence of activation sets given weight vector $\btheta$, $0$-mean noise ${\bbvarepsilon}$ and threshold factor $\bgamma$. More specifically, $\phi_i(\btheta, \bgamma)$ is the set of nodes activated by time step $i$, i.e. $\phi^{\bbvarepsilon}_i(\btheta, \bgamma)\setminus\phi^{\bbvarepsilon}_{i-1}(\btheta, \bgamma)\in\{\emptyset, \{X_i\}\}$. For every node $X\in {\bf X}_{{\bf S}, Y}$, we define the event that $X$ is the first node that has different activation under $\btheta^1$ and $\btheta^2$ as below:\begin{align*}
    \mathcal{E}_1^{\bbvarepsilon}(X)=\{\bgamma|\exists\tau\in[n],\forall\tau'<\tau,\phi^{\bbvarepsilon}_{\tau'}(\btheta^1,\bgamma)=\phi^{\bbvarepsilon}_{\tau'}(\btheta^2,\bgamma),X\in(\phi^{\bbvarepsilon}_\tau(\btheta^1,\bgamma)\setminus\phi^{\bbvarepsilon}_\tau(\btheta^2,\bgamma)\cup (\phi_{\tau}^{\bbvarepsilon}(\btheta^2,\bgamma)\setminus\phi^{\bbvarepsilon}_\tau(\btheta^1,\bgamma)))\}.
\end{align*}
Then we have $\mathcal{E}_0^{\bbvarepsilon}(Y)\subseteq \cup_{X\in {\bf X}_{{\bf S}, Y}}\mathcal{E}_1^{\bbvarepsilon}(X)$. Now we define some other events as below:\begin{align*}
    &\mathcal{E}_{2,0}^{\bbvarepsilon}(X,\tau)=\{\bgamma|\forall\tau'<\tau,\phi_{\tau'}^{\bbvarepsilon}(\btheta^1,\bgamma)=\phi_{\tau'}^{\bbvarepsilon}(\btheta^2,\bgamma),X\not\in\phi_{\tau-1}^{\bbvarepsilon}(\btheta^1,\bgamma)\},\\
    &\mathcal{E}_{2,1}^{\bbvarepsilon}(X,\tau)=\{\bgamma|\forall\tau'<\tau,\phi_{\tau'}^{\bbvarepsilon}(\btheta^1,\bgamma)=\phi_{\tau'}^{\bbvarepsilon}(\btheta^2,\bgamma),X\in\phi_\tau^{\bbvarepsilon}(\btheta^1,\bgamma)\setminus\phi_\tau^{\bbvarepsilon}(\btheta^2,\bgamma)\},\\
    &\mathcal{E}_{2,2}^{\bbvarepsilon}(X,\tau)=\{\bgamma|\forall\tau'<\tau,\phi_{\tau'}^{\bbvarepsilon}(\btheta^1,\bgamma)=\phi_{\tau'}^{\bbvarepsilon}(\btheta^2,\bgamma),X\in\phi_\tau^{\bbvarepsilon}(\btheta^2,\bgamma)\setminus\phi_\tau^{\bbvarepsilon}(\btheta^1,\bgamma)\},\\
    &\mathcal{E}_{3,1}^{\bbvarepsilon}(X,\tau)=\{\bgamma|X\in\phi_\tau^{\bbvarepsilon}(\btheta^1,\bgamma)\setminus\phi_\tau^{\bbvarepsilon}(\btheta^2,\bgamma)\},\mathcal{E}_{3,2}^{\bbvarepsilon}(X,\tau)=\{\bgamma|X\in\phi_\tau^{\bbvarepsilon}(\btheta^2,\bgamma)\setminus\phi_\tau^{\bbvarepsilon}(\btheta^1,\bgamma)\}.
\end{align*}
Because $\mathcal{E}^{\bbvarepsilon}_{2,1}, \mathcal{E}^{\bbvarepsilon}_{2,2}$ are mutually exclusive, naturally we have\begin{align*}
    \Pr_{\bgamma\sim(\mathcal{U}[0,1])^n}\{\mathcal{E}^{\bbvarepsilon}_1(X)\}=\sum_{\tau=1}^n\Pr_{\bgamma\sim(\mathcal{U}[0,1])^n}\{\mathcal{E}^{\bbvarepsilon}_{2,1}(X,\tau)\}+\sum_{\tau=1}^n\Pr_{\bgamma\sim(\mathcal{U}[0,1])^n}\{\mathcal{E}^{\bbvarepsilon}_{2,2}(X,\tau)\}.
\end{align*}

We first bound $\Pr_{\bgamma\sim(\mathcal{U}[0,1])^n}\{\mathcal{E}^{\bbvarepsilon}_{2,1}(X,\tau)\}$. Now suppose that $\bgamma_{-X}$ is the vector of $\bgamma$ such that all the entries are fixed by entries of $\gamma$ except $\gamma_X$. Furthermore, the corresponding sub-event of $\mathcal{E}^{\bbvarepsilon}_{2,1}(u,\tau)$ is defined as $\mathcal{E}^{\bbvarepsilon}_{2,1}(X,\tau,\bgamma_{-X})\subseteq\mathcal{E}^{\bbvarepsilon}_{2,1}(X,\tau)$. Similarly, we can define $\mathcal{E}^\epsilon_{2,0}(X,\tau,\bgamma_{-X})\subseteq\mathcal{E}^{\bbvarepsilon}_{2,0}(X,\tau)$ and $\mathcal{E}^{\bbvarepsilon}_{3,1}(X,\tau,\bgamma_{-X})\subseteq\mathcal{E}^{\bbvarepsilon}_{3,1}(X,\tau)$. 

According to the definitions, it is easy to observe that $\mathcal{E}^{\bbvarepsilon}_{2,1}(X,\tau,\bgamma_{-X})=\mathcal{E}^{\bbvarepsilon}_{3,1}(X,\tau,\bgamma_{-X})\cup\mathcal{E}^{\bbvarepsilon}_{2,0}(X,\tau,\bgamma_{-X})$ and $\mathcal{E}^{\bbvarepsilon}_{2,2}(X,\tau,\bgamma_{-X})=\mathcal{E}^{\bbvarepsilon}_{3,2}(X,\tau,\bgamma_{-X})\cup\mathcal{E}^{\bbvarepsilon}_{2,0}(X,\tau,\bgamma_{-X})$. So,\begin{align*}
    \Pr_{\bgamma\sim(\mathcal{U}[0,1])^n}\{\mathcal{E}^{\bbvarepsilon}_{2,1}(X,\tau)\}=\Pr_{\bgamma\sim(\mathcal{U}[0,1])^n}\{\mathcal{E}^{\bbvarepsilon}_{2,0}(X,\tau)\}\cdot \Pr_{\bgamma\sim(\mathcal{U}[0,1])^n}\{\mathcal{E}^{\bbvarepsilon}_{3,1}(X,\tau)|\mathcal{E}^{\bbvarepsilon}_{2,0}(X,\tau)\}.
\end{align*}
Then we also have \begin{align}
    \Pr_{\bgamma\sim(\mathcal{U}[0,1])^n}\{\mathcal{E}^{\bbvarepsilon}_{2,1}(X,\tau,\bgamma_{-X})\}=\Pr_{\bgamma\sim(\mathcal{U}[0,1])^n}\{\mathcal{E}^{\bbvarepsilon}_{2,0}(X,\tau,\bgamma_{-X})\}\cdot \Pr_{\bgamma\sim(\mathcal{U}[0,1])^n}\{\mathcal{E}^{\bbvarepsilon}_{3,1}(X,\tau,\bgamma_{-X})|\mathcal{E}^{\bbvarepsilon}_{2,0}(X,\tau,\bgamma_{-X})\}.\label{equation.1}
\end{align}

Similar equations also holds for $\mathcal{E}_{2,2}^{\bbvarepsilon}(X,\tau,\bgamma_{-X})$. By the monotonicity of BGLM, obviously, in $\mathcal{E}^{\bbvarepsilon}_{2,0}(X,\tau,\bgamma_{-X})$, the entry on $\gamma_X$ must be an interval from some lowest value to $1$. Let $\omega^{\bbvarepsilon}_{X,2,0}(\tau,\bgamma_{-X})$ to be the lowest value of this interval, then we have\begin{align}
    \label{equation.2}
    \Pr_{\bgamma_X\sim\mathcal{U}[0,1]}\{\mathcal{E}^{\bbvarepsilon}_{2,0}(X,\tau,\bgamma_{-X})\}=1-\omega^{\bbvarepsilon}_{X,2,0}(\tau,\bgamma_{-X}).
\end{align}
Then we denote that the set of nodes activated by time step $i$ under $\mathcal{E}^{\bbvarepsilon}_{2,0}(X,\tau,\bgamma_{-X})$ as $\phi^{\bbvarepsilon}_{i}(\mathcal{E}^{\bbvarepsilon}_{2,0}(X,\tau,\bgamma_{-X}))$. Moreover, we first assume that $\omega^{\bbvarepsilon}_{X,2,0}(\tau,\bgamma_{-X})<1$ or $\mathcal{E}^{\bbvarepsilon}_{2,0}(X,\tau,\bgamma_{-X})\neq\emptyset$.

Now we consider the value of \begin{align*}\Pr_{\bgamma\sim(\mathcal{U}[0,1])^n}\{\mathcal{E}^{\bbvarepsilon}_{3,1}(X,\tau,\bgamma_{-X})|\mathcal{E}^{\bbvarepsilon}_{2,0}(X,\tau,\bgamma_{-X})\}.\end{align*} This conditional probability means that conditioned on $\bgamma_X\geq \omega^{\bbvarepsilon}_{X,2,0}(\tau,\bgamma_{-X})$ and a fixed activated set $\phi^{\bbvarepsilon}_{\tau-1}(\mathcal{E}^{\bbvarepsilon}_{2,0}(X,\tau,\bgamma_{-X}))$
by time $\tau-1$, the probability that $X$ is activated at step $\tau$ under one of $\btheta^1$ and $\btheta^2$ but not both. Then if the event of this conditional probability holds, we have the following inequalities:\begin{align*}
    f_X\left(\sum_{X'\in \phi^{\bbvarepsilon}_{\tau-1}(\mathcal{E}^{\bbvarepsilon}_{2,0}(X,\tau,\bgamma_{-X}))\cap N(X)}\btheta^1_{X',X}\right)+\epsilon_X<\gamma_X
    \leq f_X\left(\sum_{X'\in \phi^{\bbvarepsilon}_{\tau-1}(\mathcal{E}^{\bbvarepsilon}_{2,0}(X,\tau,\bgamma_{-X}))\cap N(X)}\btheta^2_{X',X}\right)+\epsilon_X
\end{align*}
or\begin{align*}
    f_X\left(\sum_{X'\in \phi^{\bbvarepsilon}_{\tau-1}(\mathcal{E}^{\bbvarepsilon}_{2,0}(X,\tau,\bgamma_{-X}))\cap N(X)}\btheta^1_{X',X}\right)+\epsilon_X\geq\gamma_X
    > f_X\left(\sum_{X'\in \phi^{\bbvarepsilon}_{\tau-1}(\mathcal{E}^{\bbvarepsilon}_{2,0}(X,\tau,\bgamma_{-X}))\cap N(X)}\btheta^2_{X',X}\right)+\epsilon_X.
\end{align*}
Therefore, we can get\begin{align*}
    &\Pr_{\gamma_X\sim\mathcal{U}[0,1]}\{\mathcal{E}^{\bbvarepsilon}_{3,1}(X,\tau,\bgamma_{-X})\cup\mathcal{E}^{\bbvarepsilon}_{3,2}(X,\tau,\bgamma_{-X})|\mathcal{E}^{\bbvarepsilon}_{2,0}(X,\tau,\bgamma_{-X})\}\\
    &=\frac{\left|f_X\left(\sum_{X'\in \phi^{\bbvarepsilon}_{\tau-1}(\mathcal{E}^{\bbvarepsilon}_{2,0}(X,\tau,\bgamma_{-X}))\cap N(X)}\btheta^1_{X',X}\right)-f_X\left(\sum_{X'\in \phi^{\bbvarepsilon}_{\tau-1}(\mathcal{E}^{\bbvarepsilon}_{2,0}(X,\tau,\bgamma_{-X}))\cap N(X)}\btheta^2_{X',X}\right)\right|}
    {1-\omega^{\bbvarepsilon}_{X,2,0}(\tau,\bgamma_{-X})}
\end{align*}
so according to Eq.~\eqref{equation.1} and Eq.~\eqref{equation.2}, we get\begin{align}
    \label{equation.est_of_events}
    \Pr_{\gamma_X\sim\mathcal{U}[0,1]}\{\mathcal{E}^{\bbvarepsilon}_{2,1}(X,\tau,\bgamma_{-X})\cup\mathcal{E}^{\bbvarepsilon}_{2,2}(X,\tau,\bgamma_{-X})\}\leq L_{f_X}^{(1)}\left|\sum_{X'\in \phi^{\bbvarepsilon}_{\tau-1}(\mathcal{E}^{\bbvarepsilon}_{2,0}(X,\tau,\bgamma_{-X}))\cap N(X)}(\btheta^1_{X',X}-\btheta^2_{X',X})\right|.
\end{align}
When $\mathcal{E}^{\bbvarepsilon}_{2,0}(X,\tau,\bgamma_{-X})=\emptyset$, both the right side and the left side of the above inequality is zero, so we have this inequality holds in general.

Now we define $\mathcal{E}_{4,0}^{\bbvarepsilon}(X,\tau,\bgamma_{-X})=\{\bgamma=(\bgamma_{-X},\gamma_X)|X\not\in\phi_{\tau-1}^{{\bbvarepsilon}}(\btheta^1,\bgamma)\}$. Then obviously, we have $\mathcal{E}^{\bbvarepsilon}_{2,0}(X,\tau,\bgamma_{-X})\subseteq\mathcal{E}^{\bbvarepsilon}_{4,0}(X,\tau,\bgamma_{-X})$ and if $\mathcal{E}^{\bbvarepsilon}_{2,0}(X,\tau,\bgamma_{-X})\neq\emptyset$, for $i\leq \tau-1$, $\phi_i^{\bbvarepsilon}(\mathcal{E}^{\bbvarepsilon}_{2,0}(X,\tau,\bgamma_{-X}))=\phi_i^{\bbvarepsilon}(\mathcal{E}^\epsilon_{4,0}(X,\tau,\bgamma_{-X}))$. Therefore, we can relax Eq.~\eqref{equation.est_of_events} by\begin{align*}
    \Pr_{\gamma_X\sim\mathcal{U}[0,1]}\{\mathcal{E}^{\bbvarepsilon}_{2,1}(X,\tau,\bgamma_{-X})\cup\mathcal{E}^{\bbvarepsilon}_{2,2}(X,\tau,\bgamma_{-X})\}\leq L_{f_X}^{(1)}
    \left|\sum_{X'\in \phi^{\bbvarepsilon}_{\tau-1}(\mathcal{E}^{\bbvarepsilon}_{4,0}(X,\tau,\bgamma_{-X}))\cap N(X)}(\btheta^1_{X',X}-\btheta^2_{X',X})\right|.
\end{align*}
This also holds for $\mathcal{E}^{\bbvarepsilon}_{2,0}(X,\tau,\bgamma_{-X})=\emptyset$.

Now we can deduce that\begin{align*}
    \Pr_{\gamma\sim(\mathcal{U}[0,1])^n}\{\mathcal{E}^\epsilon_1(X)\}
    &=\int_{\bgamma_{-X}\in[0,1]^{n-1}}\sum_{\tau=1}^n\Pr_{\gamma_X\sim\mathcal{U}[0,1]}\{\mathcal{E}^{\bbvarepsilon}_{2,1}(X,\tau,\bgamma_{-X})
    \cup\mathcal{E}^{\bbvarepsilon}_{2,2}(X,\tau,\bgamma_{-X})\}\text{d}\bgamma_{-X}\\
    &=\sum_{\tau=1}^n\int_{\bgamma_{-X}\in[0,1]^{n-1}}\Pr_{\gamma_X\sim\mathcal{U}[0,1]}\{\mathcal{E}^{\bbvarepsilon}_{2,1}(X,\tau,\bgamma_{-X})
    \cup\mathcal{E}^{\bbvarepsilon}_{2,2}(X,\tau,\bgamma_{-X})\}\text{d}\bgamma_{-X}\\
    &\leq \sum_{\tau=1}^n\int_{\bgamma_{-X}\in[0,1]^{n-1}}\left|\sum_{X'\in \phi^{\bbvarepsilon}_{\tau-1}(\mathcal{E}^{\bbvarepsilon}_{4,0}(X,\tau,\bgamma_{-X}))\cap N(X)}
    (\theta^1_{X',X}-\theta^2_{X',X})\right|L_{f_X}^{(1)}\text{d}\bgamma_{-X}\\
    &=\sum_{\tau=1}^n\mathbb{E}_{\bgamma_{-X}\sim(\mathcal{U}[0,1])^{n-1}}\left[\left|\sum_{X'\in \phi^{\bbvarepsilon}_{\tau-1}(\mathcal{E}^{\bbvarepsilon}_{4,0}(X,\tau,\bgamma_{-X}))\cap N(X)}
    (\theta^1_{X',X}-\theta^2_{X',X})\right|\right]L_{f_X}^{(1)}.
\end{align*}

Combining this with the fact $\mathcal{E}_0^{\bbvarepsilon}\subseteq\cup_{X\in {\bf X}_{{\bf S}, Y}}\mathcal{E}_1^{\bbvarepsilon}(X)$, we have\begin{align*}
    &\left|\sigma({\bf S},\btheta^1)-\sigma({\bf S},\btheta^2)\right|\\
    &\leq \mathbb{E}_{\bbvarepsilon}\left[\sum_{X\in {\bf X}_{{\bf S}, Y}}\sum_{\tau=1}^n\mathbb{E}_{\bgamma_{-X}\sim(\mathcal{U}[0,1])^{n-1}}
    \left[\left|\sum_{X'\in \phi^{\bbvarepsilon}_{\tau-1}(\mathcal{E}^{\bbvarepsilon}_{4,0}(X,\tau,\bgamma_{-X}))\cap N(X)}(\theta^1_{X',X}
    -\theta^2_{X',X})\right|\right]L_{f_X}^{(1)}\right]\\
    &=\mathbb{E}\left[\sum_{X\in {\bf X}_{{\bf S}, Y}}\left|\bV_{X}(\btheta^1_X-\btheta^2_X)\right|L_{f_X}^{(1)}\right]
\end{align*}
which is what we want.
\end{proof}

\subsection{Proof of Regret Bound for Algorithm \ref{alg:glm-ucb} (Theorem \ref{thm.regret_glm})}
\label{app:regret_glm}

Before the proof, we propose a lemma in order to bound the sum of $\left\|\bV_{t,X}\right\|_{M_{t-1,X}^{-1}}$ at first. This lemma holds for general sequences of vectors, so can be used not only in this particular proof.

\begin{lemma}
\label{lemma.bound_V}
Let $\{\bW_t\}_{t=1}^\infty$ be a sequence in $\mathbb{R}^d$ satisfying $\left\|\bW_t\right\|\leq\sqrt{d}$. Define $\bW_0={\bf 0}$ and $M_t=\sum_{i=0}^{t}\bW_i\bW_i^\intercal$. Suppose there is an integer $t_1$ such that $\lambda_{\min}(M_{t_1+1})\geq 1$, then for all $t_2>0$,\begin{align*}
    \sum_{t=t_1}^{t_1+t_2}\left\|\bW_t\right\|_{M_{t-1}^{-1}}\leq\sqrt{2t_2d\log(t_2d+t_1)}.
\end{align*}
\end{lemma}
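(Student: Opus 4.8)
The plan is to establish this by the standard ``elliptical potential'' (log-determinant) argument, as in \cite{abbasi2011improved,li2017provably}, adapted to the fact that here the matrices $M_t$ are built up from $\mathbf{0}$ rather than from $\lambda\bI$.

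Since the sum has at most $t_2+1$ terms, Cauchy--Schwarz immediately reduces the claim to a bound on the sum of squares:
\begin{align*}
\sum_{t=t_1}^{t_1+t_2}\left\|\bW_t\right\|_{M_{t-1}^{-1}}
\;\le\; \sqrt{(t_2+1)\sum_{t=t_1}^{t_1+t_2}\left\|\bW_t\right\|_{M_{t-1}^{-1}}^2},
\end{align*}
so it suffices to show $\sum_{t}\left\|\bW_t\right\|_{M_{t-1}^{-1}}^2 = O\!\left(d\log(t_2 d+t_1)\right)$. First I would record that every $M_{t-1}$ appearing in the sum is positive definite: each increment $\bW_i\bW_i^\intercal$ is PSD, so $M_s\succeq M_{s'}$ for $s\ge s'$, and hence the hypothesis $\lambda_{\min}(M_{t_1+1})\ge 1$ guarantees $M_{t-1}\succeq \bI$ for every relevant index, so the inverses are well defined and $\det M_{t_1-1}\ge 1$.

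The core step is the determinant identity $\det M_t = \det(M_{t-1}+\bW_t\bW_t^\intercal) = \det(M_{t-1})\bigl(1+\|\bW_t\|_{M_{t-1}^{-1}}^2\bigr)$ (matrix determinant lemma). Taking logarithms and telescoping over $t$ gives $\sum_{t}\log\!\bigl(1+\|\bW_t\|_{M_{t-1}^{-1}}^2\bigr) = \log\det M_{t_1+t_2}-\log\det M_{t_1-1}\le \log\det M_{t_1+t_2}$. I would then combine two ingredients. (i) The elementary inequality $x\le c\log(1+x)$ on the range in which each $x_t:=\|\bW_t\|_{M_{t-1}^{-1}}^2$ lies: here $x_t\le \|\bW_t\|^2/\lambda_{\min}(M_{t-1})\le d/\lambda_{\min}(M_{t-1})$, which the $\lambda_{\min}$ hypothesis keeps bounded (one may take $c=2$ once this is $\le 1$), so $\sum_t x_t\le c\log\det M_{t_1+t_2}$. (ii) The trace/AM--GM bound $\det M_{t_1+t_2}\le\bigl(\tfrac1d\,\mathrm{tr}\,M_{t_1+t_2}\bigr)^{d}\le\bigl(\tfrac1d\sum_{i\le t_1+t_2}\|\bW_i\|^2\bigr)^{d}\le (t_1+t_2)^{d}$, using $\|\bW_i\|\le\sqrt d$ and $\bW_0=\mathbf{0}$. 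Together these give $\sum_t x_t\le c\,d\log(t_1+t_2)\le 2d\log(t_2 d+t_1)$ (the last inequality using $d\ge 1$), and substituting into the Cauchy--Schwarz bound above, and absorbing $t_2+1$ into $t_2$, yields the claimed $\sqrt{2t_2 d\log(t_2 d+t_1)}$.

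The determinant identity and the trace bound are routine; the point that needs care is the constant in step (i), since $x\le 2\log(1+x)$ only holds while $x$ is not too large, so one must use that the $M_{t-1}$ are well-conditioned enough that $\lambda_{\min}(M_{t-1})$ dominates $\|\bW_t\|^2$ (precisely what the initialization phase of Algorithm~\ref{alg:glm-ucb} secures when the lemma is applied), and keep the off-by-one in the count of summands from inflating the leading constant. I do not expect any deeper difficulty.
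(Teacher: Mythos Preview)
Your approach is essentially the same as the paper's: the paper invokes Lemma~11 of \cite{abbasi2011improved} to obtain $\sum_t \|\bW_t\|_{M_{t-1}^{-1}}^2 \le 2\log(\det M_{t_1+t_2}/\det M_{t_1})$, bounds $\det M_{t_1+t_2}$ via the trace exactly as you do, uses $\det M_{t_1}\ge 1$ from the eigenvalue hypothesis, and finishes with Cauchy--Schwarz---you have simply spelled out the content of the cited lemma via the matrix determinant identity and $x\le 2\log(1+x)$. One small slip: from $\lambda_{\min}(M_{t_1+1})\ge 1$ and monotonicity you obtain $M_{t-1}\succeq\bI$ only for $t\ge t_1+2$, not for $t\in\{t_1,t_1+1\}$ (the ordering goes the wrong way there); but the same off-by-one inconsistency sits between the lemma's statement and the paper's own proof, which actually sums from $t_1+1$ and uses $\lambda_{\min}(M_{t_1})\ge 1$.
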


\begin{proof}
As a direct application of Lemma 11 in \cite{abbasi2011improved}, we have\begin{align*}
    \sum_{t=t_1+1}^{t_1+t_2}\left\|\bW_t\right\|_{M_{t-1}^{-1}}^2\leq2\log\frac{\det M_{t_1+t_2}}{\det M_{t_1}}
    \leq 2d\log\left(\frac{\text{tr}(M_{t_1})+t_2d^2}{d}\right)-2\log\det M_{t_1}.
\end{align*}

Note that $\text{tr}(M_{t_1})=\sum_{t=1}^{t_1}\text{tr}(\bW_t\bW_t^\intercal)=\sum_{t=1}^{t_1}\left\|\bW_t\right\|^2\leq t_1d$ and that $\det(M_{t_1})=\prod_{i=1}^d\lambda_i\geq\lambda_{\min}^d(M_{t_1})\geq 1$ where $\{\lambda_i\}$ are the eigenvalues of $M_{t_1}$. Applying Cauchy-Schwarz inequality yields\begin{align*}
    \sum_{t=t_1+1}^{t_1+t_2}\left\|\bW_t\right\|_{M_{t-1}^{-1}}\leq\sqrt{t_2\sum_{t=t_1+1}^{t_1+t_2}\left\|\bW_t\right\|_{M_{t-1}^{-1}}^2}
    \leq\sqrt{2t_2d\log(t_2d+t_1)}
\end{align*}
which is exactly what we want.
\end{proof}

Meanwhile, for the sake of clarity, we reclaim Lecu\'{e} and Mendelson's inequality here, which is presented in \cite{nie2021matrix}. 

\begin{lemma}[Lecu\'{e} and Mendelson's Inequality]\label{lemma:LMInequality}
There exists an absolute constant $c > 0$ such that the following statement holds.
Let $\bv_1,\cdots,\bv_k$ be independent copies of a random vector $\bv\in\mathbb{R}^d$. Suppose that $\alpha\geq 0$ and $0<\beta\leq 1$ are two real numbers, such that the small-ball probability\begin{align*}
    \Pr\left\{\left|\bv^\intercal \bz\right|>\alpha^{\frac{1}{2}}\right\}\geq \beta
\end{align*}
holds for any $\bz$ in $\Sphere(d)$. Suppose that\begin{align*}
    k\geq \frac{cd}{\beta^2}.
\end{align*}
Then we have\begin{align*}
    \Pr\left\{\lambda_{\min}\left(\frac{1}{n}\sum_{i=1}^k\bv_i\bv_i^\intercal\right)\leq\frac{\alpha\beta}{2}\right\}\leq\exp\left(-\frac{k\beta^2}{c}\right).
\end{align*}
\end{lemma}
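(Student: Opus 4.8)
The plan is to reduce the eigenvalue bound to a uniform lower bound on an empirical process of slab indicators, following Mendelson's small-ball method. First I would use the variational characterization $\lambda_{\min}\bigl(\frac{1}{k}\sum_{i=1}^k \bv_i\bv_i^\intercal\bigr)=\inf_{\bz\in\Sphere(d)}\frac{1}{k}\sum_{i=1}^k(\bv_i^\intercal\bz)^2$ (the stated normalization $\frac1n$ is read as $\frac1k$), so that it suffices to show $\frac{1}{k}\sum_i(\bv_i^\intercal\bz)^2\geq\alpha\beta/2$ simultaneously for all $\bz$ on the unit sphere. The key observation, and the reason only the small-ball hypothesis is needed rather than any moment or sub-Gaussian control, is the pointwise truncation $(\bv_i^\intercal\bz)^2\geq\alpha\,\I\{|\bv_i^\intercal\bz|>\sqrt\alpha\}$. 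Writing $Q(\bz)=\frac{1}{k}\sum_{i=1}^k\I\{|\bv_i^\intercal\bz|>\sqrt\alpha\}$, whose mean is $\mathbb{E}[Q(\bz)]=\Pr\{|\bv^\intercal\bz|>\sqrt\alpha\}\geq\beta$ by hypothesis, the desired eigenvalue bound follows as soon as I establish $\inf_{\bz}Q(\bz)\geq\beta/2$ with the claimed probability.

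Second, I would bound $\inf_{\bz}Q(\bz)\geq\beta-\sup_{\bz\in\Sphere(d)}\bigl|Q(\bz)-\mathbb{E}Q(\bz)\bigr|$, reducing everything to controlling the supremum of the centered empirical process indexed by the function class $\mathcal{F}=\{\bv\mapsto\I\{|\bv^\intercal\bz|>\sqrt\alpha\}:\bz\in\Sphere(d)\}$. Each element of $\mathcal{F}$ is the indicator of the complement of a symmetric slab $\{|\bv^\intercal\bz|\leq\sqrt\alpha\}$ in $\R^d$, and this collection of sets has VC dimension $O(d)$. I would then invoke a standard uniform-deviation bound for a bounded VC class: via the Vapnik--Chervonenkis/Dudley entropy estimate, $\mathbb{E}\,\sup_{\bz}|Q(\bz)-\mathbb{E}Q(\bz)|\leq C\sqrt{d/k}$, and via a bounded-differences (McDiarmid/Talagrand) concentration inequality, $\sup_{\bz}|Q(\bz)-\mathbb{E}Q(\bz)|\leq C\sqrt{d/k}+t$ on an event of probability at least $1-\exp(-c'kt^2)$.

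Third, I would calibrate the constants against the sample-size hypothesis. Under $k\geq cd/\beta^2$ with $c$ chosen large enough, the expectation term satisfies $C\sqrt{d/k}\leq\beta/4$; taking $t=\beta/4$ then forces $\sup_{\bz}|Q(\bz)-\mathbb{E}Q(\bz)|\leq\beta/2$ on an event of probability at least $1-\exp(-c'k\beta^2/16)$. Combining with the second step gives $\inf_{\bz}Q(\bz)\geq\beta-\beta/2=\beta/2$, hence $\lambda_{\min}\geq\alpha\beta/2$, outside an event of probability $\exp(-k\beta^2/c)$ after absorbing absolute constants into $c$; this is exactly the claim.

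The main obstacle is the uniform-over-the-sphere step, namely controlling $\sup_{\bz}|Q(\bz)-\mathbb{E}Q(\bz)|$ with the correct $\sqrt{d/k}$ scaling. A naive $\epsilon$-net argument on $\Sphere(d)$ fails here because $\bz\mapsto\I\{|\bv^\intercal\bz|>\sqrt\alpha\}$ is discontinuous, so there is no Lipschitz transfer from net points to arbitrary directions; this is precisely why the argument must route through the combinatorial (VC) structure of the slab class, or alternatively replace the hard indicator by a Lipschitz surrogate of the slab before netting. Establishing that symmetric slabs have VC dimension $O(d)$ and feeding it through a deviation inequality with sub-Gaussian tails in $t$ is the technical heart; everything else is bookkeeping. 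Since this uniform bound is exactly the content of the matrix anti-concentration result of \citet{nie2021matrix}, in the write-up I would either cite it directly or import the slab-class VC estimate and the Talagrand concentration inequality as the two external ingredients and assemble them through the three steps above.
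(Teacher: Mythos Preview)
Your sketch is a correct outline of the standard Mendelson small-ball argument: truncate $(\bv_i^\intercal\bz)^2\ge\alpha\,\I\{|\bv_i^\intercal\bz|>\sqrt\alpha\}$, reduce to a uniform lower bound on the empirical slab-complement indicators, and control the supremum of the centered process via the VC dimension of symmetric slabs together with bounded-differences concentration. The calibration in your third step is also right.

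However, the paper does not prove this lemma at all. It is stated purely as an imported tool, explicitly attributed to \cite{nie2021matrix} (``for the sake of clarity, we reclaim Lecu\'{e} and Mendelson's inequality here, which is presented in \cite{nie2021matrix}''), and is used as a black box inside the proof of Theorem~\ref{thm.regret_glm}. So there is nothing to compare: the paper's ``proof'' is a citation, and your proposal supplies the actual argument behind that citation. Your closing remark that one could simply cite \cite{nie2021matrix} directly is exactly what the paper does; if you choose instead to write out the VC/Talagrand route you describe, that is more than the paper provides and is sound.
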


Then with Lemma~\ref{lemma.bound_V} and this powerful inequality, we can prove Theorem \ref{thm.regret_glm} to bound the regret of Algorithms \ref{alg:glm-ucb} and \ref{alg:glm-est}.

\thmregretglm*

\begin{proof}
Let $H_t$ be the history of the first $t$ rounds and $R_t$ be the regret in the $t^{th}$ round. By the definition of BGLM and our Algorithm \ref{alg:glm-ucb}, we can deduce that for any $t\leq T_0$, $R_t\leq 1$. Now we consider the case of $t>T_0$. When $t>T_0$, we have\begin{align}
    \mathbb{E}[R_t|H_{t-1}]=\mathbb{E}[\sigma({\bf S}^{\text{opt}},\btheta^*)-\sigma({\bf S}_t,\btheta^*)|H_{t-1}]
\end{align}
such that the expectation is taken over the randomness of ${\bf S}_t$. Then for $T_0<t\leq T$, we can define $\xi_{t-1,X}$ for $X\in {\bf X}\cup \{Y\}$ as $\xi_{t-1,X}=\{\left|\bv^T(\hat{\btheta}_{t-1,X}-\btheta_X^*)\right|\leq\rho\cdot\left\|\bv\right\|_{M_{t-1,X}^{-1}},\forall \bv\in \mathbb{R}^{|\Pa(X)|}\}$. According to the settings in Algorithm \ref{alg:glm-ucb}, we can deduce that $\lambda_{\min}(M_{t-1,X})\geq\lambda_{\min}(M_{T_0,X})$ and by Lecu\'{e} and Mendelson's inequality \cite{nie2021matrix} (conditions of this inequality satisfied according to Lemma \ref{lemma:init_condition}), we have $\Pr\left\{\lambda_{\min}(M_{T_0,X})<R\right\}\leq\exp(-\frac{T_0\zeta^2}{c})$ where $c$ is a constant. Then we can define $\xi_{t-1}=\wedge_{X\in{\bf X}\cup\{Y\}}\xi_{t-1,X}$ and let $\overline{\xi_{t-1}}$ be its complement so by Lemma \ref{thm.learning_glm} we have $\Pr\left\{\overline{\xi_{t-1}}\right\}\leq \left(3\delta+\exp\left(-\frac{T_0\zeta^2}{c}\right)+3\delta \exp\left(-\frac{T_0\zeta^2}{c}\right)\right)n$.

Because under $\xi_{t-1}$, for any $X\in{\bf X}\cup\{Y\}$ and $\bv\in \mathbb{R}^{|\Pa(X)|}$, we have $\left|\bv^T(\hat{\btheta}_{t-1,X}-\btheta_X^*)\right|\leq\rho\cdot\left\|\bv\right\|_{M_{t-1,X}^{-1}}$. Therefore, by the definition of $\tilde{\btheta}_t$, we have $\sigma({\bf S}_t,\tilde{\btheta}_t)\geq \sigma({\bf S}^{\text{opt}},\btheta^*)$ because $\btheta^*$ is in our confidence ellipsoid. Therefore,\begin{align*}
    \mathbb{E}[R_t]&\leq \Pr\left\{\xi_{t-1}\right\}\cdot\mathbb{E}[\sigma({\bf S}^{\text{opt}},\btheta^*)-\sigma({\bf S}_t,\btheta^*)] + \Pr(\overline{\xi_{t-1}})\\
    &\leq \mathbb{E}[\sigma({\bf S}^{\text{opt}},\btheta^*)-\sigma({\bf S}_t,\btheta^*)]+\left(3\delta+\exp\left(-\frac{T_0\zeta^2}{c}\right)+3\delta \exp\left(-\frac{T_0\zeta^2}{c}\right)\right)n\\
    &\leq \mathbb{E}[\sigma({\bf S}_t,\tilde{\btheta}_t)-\sigma({\bf S}_t,\btheta^*)]+\left(3\delta+\exp\left(-\frac{T_0\zeta^2}{c}\right)+3\delta \exp\left(-\frac{T_0\zeta^2}{c}\right)\right)n.
\end{align*}
Then we need to bound $\sigma({\bf S}_t,\tilde{\btheta}_t)-\sigma({\bf S}_t,\btheta^*)$ carefully. 

Therefore, according to Lemma~\ref{thm.learning_glm} and Lemma~\ref{thm.gom_glm}, we can deduce that\begin{align*}
    \mathbb{E}[R_t]
    &\leq\mathbb{E}\left[\sum_{X\in {\bf X}_{{\bf S}_t,Y}}\left|\bV_{t,X}(\tilde{\btheta}_{t,X}-\btheta^*_X)\right|L_{f_X}^{(1)}\right]
    +\left(3\delta+\exp\left(-\frac{T_0\zeta^2}{c}\right)+3\delta \exp\left(-\frac{T_0\zeta^2}{c}\right)\right)n\\
    &\leq\mathbb{E}\left[\sum_{X\in {\bf X}_{{\bf S}_t,Y}}\left\|\bV_{t,X}\right\|_{M_{t-1,X}^{-1}}
    \left\|\tilde{\btheta}_{t,X}-\btheta^*_X\right\|_{M_{t-1,X}}L_{f_X}^{(1)}\right]
    +\left(3\delta+\exp\left(-\frac{T_0\zeta^2}{c}\right)+3\delta \exp\left(-\frac{T_0\zeta^2}{c}\right)\right)n\\
    &\leq 2\rho\cdot\mathbb{E}\left[\sum_{X\in {\bf X}_{{\bf S}_t,Y}}\left\|\bV_{t,X}\right\|_{M_{t-1,X}^{-1}}L_{f_X}^{(1)}\right]
    +\left(3\delta+\exp\left(-\frac{T_0\zeta^2}{c}\right)+3\delta \exp\left(-\frac{T_0\zeta^2}{c}\right)\right)n.
\end{align*}
The last inequality holds because $\left\|\tilde{\btheta}_{t,X}-\btheta^*_X\right\|_{M_{t-1,X}}\leq \left\|\tilde{\btheta}_{t,X}-\hat{\btheta}_{t-1,X}\right\|_{M_{t-1,X}}+\left\|\hat{\btheta}_{t-1,X}-\btheta^*_X\right\|_{M_{t-1,X}}\leq 2\rho$.

Therefore, the total regret can be bounded as\begin{align*}
    R(T)&\leq 2\rho\cdot\mathbb{E}\left[\sum_{t=T_0+1}^T\sum_{X\in {\bf X}_{{\bf S}_t,Y}}\left\|\bV_{t,X}\right\|_{M_{t-1,X}^{-1}}L_{f_X}^{(1)}\right]
    +\left(6\delta+\exp\left(-\frac{T_0\zeta^2}{c}\right)\right)n(T-T_0)+T_0.
\end{align*}

For convenience, we define $\bW_{t,X}$ as a vector such that if $X\in S_t$, $\bW_{t,X}={\bf 0}^{|\Pa(X)|}$; if $X\not\in S_t$, $\bW_{t,X}=\bV_{t,X}$.
Using Lemma \ref{lemma.bound_V}, we can get the result:\begin{align*}
    R(T)&\leq 2\rho\mathbb{E}\left[\sum_{t=T_0+1}^T\sum_{X\in {\bf X}_{{\bf S}_t,Y}}\left\|\bV_{t,X}\right\|_{M_{t-1,X}^{-1}}L_{f_X}^{(1)}\right]
    +\left(6\delta+\exp\left(-\frac{T_0\zeta^2}{c}\right)\right)n(T-T_0)+T_0\\
    &\leq 2\rho\mathbb{E}\left[\sum_{t=T_0+1}^T\sum_{X\in\bX\cup\{Y\}}\left\|\bW_{t,X}\right\|_{M_{t-1,X}^{-1}}L_{f_X}^{(1)}\right]
    +\left(6\delta+\exp\left(-\frac{T_0\zeta^2}{c}\right)\right)n(T-T_0)+T_0\\
    &\leq 2\rho\cdot \max_{X\in{\bf X}\cup\{Y\}}(L_{f_X}^{(1)})\mathbb{E}\left[\sum_{X\in \bX\cup\{Y\}}
    \sqrt{2(T-T_0)|\Pa(X)|\log\left((T-T_0)|\Pa(X)|+T_0\right)}\right]\\
    &\quad+\left(6\delta+\exp\left(-\frac{T_0\zeta^2}{c}\right)\right)n(T-T_0)+T_0\\
    &=O\left(\frac{1}{\kappa}n\sqrt{TD}L^{(1)}_{\max}\ln T\right)=\tilde{O}\left(\frac{1}{\kappa}n\sqrt{TD}L^{(1)}_{\max}\right)
    \end{align*}
because $\rho=\frac{3}{\kappa}\sqrt{\log(1/\delta)}$.

\end{proof}

\section{Proofs of Lemmas for BLM (Section \ref{sec.hidden})}
\label{app:property_of_linear_model}

\thmlinearproperty*

\begin{proof}

The lemma can be extended to three more detailed equations as below:
\begin{align}
    \mathbb{E}[Y|\doi({\bf S}={\bf s})]
    &=\sum_{X\in {\bf S}} s_X \sum_{P\in \mathcal{P}_{X,Y},P\cap {\bf S}=\{X\}}\prod_{e\in P}\theta^*_e \ + 
    	\sum_{P\in \mathcal{P}_{U_0,Y},P\cap {\bf S}=\emptyset}\prod_{e\in P}\theta^*_e   \label{eq:EYold} \\
    &=\sum_{X\in {\bf S}} s_X \sum_{P\in \mathcal{P}'_{X,Y},P\cap {\bf S}=\{X\}}\prod_{e\in P}\theta^{*'}_e \ +
    	\sum_{P\in \mathcal{P}'_{X_1,Y},P\cap {\bf S}=\emptyset}\prod_{e\in P}\theta^{*'}_e  \label{eq:EYold2new} \\
    &=\mathbb{E}'[Y|\doi({\bf S}={\bf s})], \label{eq:EYnew}
\end{align}
where the notation $P\cap {\bf S}$ means that intersection of the node set of the path $P$ and the node set $\bS$.

In order to prove these three equations, we firstly need to prove that for an arbitrary node $X\in\bU \cup {\bf X}\cup\{Y\}$, we have
\begin{align}
    \label{equation.property_linear}
    \mathbb{E}[X|\doi({\bf S}= {\bf s})]=
    \begin{cases}
		\sum_{Z\in\Pa(X)}\mathbb{E}[Z|\doi({\bf S}={\bf s})]\cdot\theta^*_{Z,X} & X \notin \bS, \\
		s_X  & X \in \bS,
    \end{cases}
\end{align}
where the case of $X\notin \bS$ is because 
\begin{align*}
    \mathbb{E}[X|\doi({\bf S}={\bf s})]&=\Pr\left\{X=1|\doi({\bf S}={\bf s})\right\}\\
    &=\mathbb{E}[\pa(X)\cdot \theta^*_X|\doi({\bf S}={\bf s})]\\
    &=\sum_{Z\in\Pa(X)}\mathbb{E}[Z|\doi({\bf S}={\bf s})]\cdot\theta^*_{Z,X}.
\end{align*}
By recursively using Eq.~\eqref{equation.property_linear} to replace the expectation of a node by the expectations of its parents 
	in the expression of $\mathbb{E}[Y|\doi({\bf S}={\bf s})]$, we can get Eq.~\eqref{eq:EYold}. 
Similarly, the Eq.~\eqref{eq:EYnew} can be obtained.

Finally, the Eq.~\eqref{eq:EYold2new} can be proved by 
\begin{align}
& \sum_{X\in {\bf S}} s_X \sum_{P\in \mathcal{P}_{X,Y},P\cap {\bf S}=\{X\}}\prod_{e\in P}\theta^*_e \ + 
	\sum_{P\in \mathcal{P}_{U_0,Y},P\cap {\bf S}=\emptyset}\prod_{e\in P}\theta^*_e  \nonumber\\
&=\sum_{X\in {\bf S}} s_X \sum_{P\in \mathcal{P}'_{X,Y},P\cap {\bf S}=
		\{X\}}\prod_{e\in P}\theta^{*'}_e + 
		\sum_{X\in{\bf X}\cup\{Y\}\setminus {\bf S}}\Pr\left\{X=1|\doi\left({\bf X} \cup \{Y\}\setminus \{X\}={\bf 0}\right)\right\}\sum_{P\in \mathcal{P}'_{X,Y},P\cap {\bf S}=\emptyset}\prod_{e\in P}\theta^{*'}_e \label{eq.transformedtheta}\\
&=\sum_{X\in {\bf S}} s_X \sum_{P\in \mathcal{P}'_{X,Y},P\cap {\bf S}=\{X\}}\prod_{e\in P}\theta^{*'}_e \ +
\sum_{P\in \mathcal{P}'_{X_1,Y},P\cap {\bf S}=\emptyset}\prod_{e\in P}\theta^{*'}_e.  \label{eq.transformedmarkovian}
\end{align}
Now we explain why Eq.~\eqref{eq.transformedtheta} and Eq.~\eqref{eq.transformedmarkovian} hold. 
Firstly, we illustrate why for every $X \in \bS$,
\begin{align}
    \sum_{P\in \mathcal{P}_{X,Y},P\cap {\bf S}=\{X\}}\prod_{e\in P}\theta^*_e=
    \sum_{P\in \mathcal{P}'_{X,Y},P\cap {\bf S}=\{X\}}\prod_{e\in P}\theta^{*'}_e. \label{eq.basicthetatransform}
\end{align}
To show the above equality, we first look at the right-hand side, and notice that for each 
	path $P' \in \mathcal{P}'_{X,Y}$, it corresponds to a collection of paths from $X$ to $Y$ in $G$,
	that is, $\cP_{X,Y}$.
Then each $\theta^{*'}_{P'} = \prod_{e\in P'}\theta^{*'}_e$ is the sum of the $\theta^*_{P}$
	values for some paths $P \in \cP_{X,Y}$ according to our $G'$ construction.
This means that the RHS $\sum_{P'\in \mathcal{P}'_{X,Y},P'\cap {\bf S}=\{X\}}\prod_{e\in P'}\theta^{*'}_e$ is 
	a summation of $\theta^*_{P}$ values for some paths $P \in \cP_{X,Y}$.
Moreover, these paths only intersect with $\bS$ at the starting node $X$, in both $G$ and $G'$.

Next, Suppose $P_0$ is an arbitrary path in $\mathcal{P}_{X,Y}$ such that $P_0\cap \bS=\{X\}$. 
With the above observation, we only need to show that in $\sum_{P\in \mathcal{P}'_{X,Y},P\cap {\bf S}=\{X\}}\prod_{e\in P}\theta^{*'}_e$, 
	$\theta^*_{P_0}=\prod_{e\in P}\theta^*_e$ has been counted exactly once.
Suppose $P_0$ has the form of $X_{i_1}\rightarrow U_{1,1}\rightarrow\cdots\rightarrow U_{1,j_1}\rightarrow X_{i_2}\rightarrow U_{2,1}\rightarrow\cdots\rightarrow U_{2,j_2}\rightarrow\cdots \rightarrow X_{i_k}$, where $X$'s represent observed variables and $U$'s represent unobserved variables. 
Then according to our transformation, $\theta^*_{P_0}$ is contained in the expansion of $\theta^{*'}_{X_{i_1},X_{i_2}}\cdots \theta^{*'}_{X_{i_{k-1}},X_{i_k}}$, 
	namely $(\sum_{P\in \mathcal{P}_{X_{i_1},X_{i_2}}}\theta^*_P)\cdots(\sum_{P\in \mathcal{P}_{X_{i_{k-1}},X_{i_k}}}\theta^*_P)$. 
Moreover, we know that the paths in $\mathcal{P}'_{X,Y}$ only contain observed nodes, so in other terms of the expansion of 
	$\sum_{P\in \mathcal{P}'_{X,Y},P\cap {\bf S}=\{X\}}\prod_{e\in P}\theta^{*'}_e$, we can not find another $\theta^*_{P_0}$. 
Therefore, Eq.~\eqref{eq.basicthetatransform} holds.

Furthermore, according to Eq.~\eqref{eq:EYold}, 
	we have for every $X \in \bX \cup \{Y\}$,
\begin{align*}
	\Pr\left\{X=1|\doi\left({\bf X} \cup \{Y\} \setminus \{X\}={\bf 0}\right)\right\} = \sum_{P\in \mathcal{P}_{U_0,X}, P\cap (\bX\cup\{Y\})=\{X\}}\theta^*_P. 	
\end{align*}
For an arbitrary path $P_0$ in $\mathcal{P}_{U_0,Y}$ and $P_0 \cap \bS = \emptyset$, we claim that it will be added exactly once in 
\begin{align}
\sum_{X\in{\bf X}\cup\{Y\}\setminus {\bf S}}\Pr\left\{X=1|\doi\left({\bf X} \cup \{Y\}\setminus \{X\}={\bf 0}\right)\right\}\sum_{P\in \mathcal{P}'_{X,Y},P\cap {\bf S}=\emptyset}\prod_{e\in P}\theta^{*'}_e.\label{eq.U0pathstransform}
\end{align} 
In fact, suppose $X_i$ is the first observed node in $P_0$.
Then $X_i \in \bX \cup \{Y\} \setminus \bS$, and
	$P_0$ will be added in $\Pr\left\{X_i=1|\doi\left({\bf X} \cup\{Y\}\setminus \{X_i\}=
	{\bf 0}\right)\right\}\sum_{P\in \mathcal{P}'_{X_i,Y},P\cap {\bf S}=\emptyset}\prod_{e\in P}\theta^{*'}_e$ 
	for the similar reason of the proof of Eq.~\eqref{eq.basicthetatransform} by expansions. 
Therefore, up to now , we have proved Eq.~\eqref{eq.transformedtheta}.

Now we consider how Eq.~\eqref{eq.U0pathstransform} equals to $\sum_{P\in \mathcal{P}'_{X_1,Y},P\cap S=\emptyset}\theta^{*'}_P$. 
Actually, for a path $P_0$ in $G'$ from $X_1$ to $Y$, it must have the form $X_1=X_{i_1}\rightarrow X_{i_2}\rightarrow\cdots\rightarrow Y$. 
Therefore, $\theta^{*'}_{P_0}$ is equal to $\theta^{*'}_{X_1,X_{i_2}}\theta^{*'}_{X_{i_2}\rightarrow\cdots Y}$. 
We already know that $\theta^{*'}_{X_1,X_{i_2}}=\Pr\left\{X_{i_2}=1|\doi\left({\bf X} \cup \{Y\}\setminus \{X_{i_2}\}={\bf 0}\right)\right\}$ by the definition of our transformation. Simultaneously, $\theta^{*'}_{X_{i_2}\rightarrow\cdots Y}$ is included exactly once in 
	$\sum_{P\in \mathcal{P}'_{X_{i_2},Y},P\cap {\bf S}=\emptyset}\prod_{e\in P}\theta^{*'}_e$, so we have Eq.~\eqref{eq.transformedmarkovian} proved.
\end{proof}

\lemmahidden*

\begin{proof}
In order to prove this, we firstly prove that 
\begin{align}
\mathbb{E}\left[X|\doi\left(\Pa'(X)\setminus\{X_1\}=\pa'(X)\setminus\{x_1\}, \bS=\bs\right)\right]=\mathbb{E}'\left[X|\doi\left(\Pa'(X)=\pa'(X), \bS=\bs\right)\right],\label{eq.middledoparents}
\end{align}
In fact, this can be seen as a direct application of Lemma~\ref{thm:property_of_linear_model} if we replace the $Y$ in Lemma~\ref{thm:property_of_linear_model} by the node $X$ here.

Next, we want to apply the second rule of $\doi$-calculus \cite{Pearl09,pearl2012calculus} to show the following:
\begin{align}
    &\mathbb{E}'\left[X|\doi\left(\Pa'(X)=\pa'(X),\bS=\bs\right)\right] \nonumber \\
    &=\Pr{'}\left\{X=1|\doi\left(\Pa'(X)=\pa'(X),\bS=\bs\right)\right\} \nonumber \\
    &=\Pr{'}\left\{X=1|\Pa'(X)=\pa'(X),\doi(\bS=\bs)\right\}, \label{eq:primegraphdocal}
\end{align}
where the second equality applies the $\doi$-calculus rule.
According to the rule, the equality holds when $X$ and $\Pa'(X)$ is independent in the causal graph $G'$ after removing all the outgoing edges of $\Pa(X)$ and all the incoming
	edges of $\bS$.
Since $G'$ is Markovian with only observed nodes, after removing all outgoing edges of the parent nodes of $X$, it is certainly true that $X$ is independent of $\Pa(X)$.

Finally, we want to show the following, again by the the second rule of $\doi$-calculus:
\begin{align}
&\mathbb{E}\left[X|\doi\left(\Pa'(X) \setminus \{X_1\}=\pa'(X) \setminus \{x_1\},\bS=\bs\right)\right] \nonumber \\
&=\Pr\left\{X=1|\doi\left(\Pa'(X) \setminus \{X_1\}=\pa'(X) \setminus \{x_1\},\bS=\bs\right)\right\}\nonumber \\
&=\Pr\left\{X=1|\Pa'(X) \setminus \{X_1\}=\pa'(X) \setminus \{x_1\},\doi(\bS=\bs)\right\}. \label{eq:originalgraphdocal}
\end{align}
Note that the above is on the original graph $G$.
To show the above equality according to the second rule of $\doi$-calculus, we need to show that $X$ and $\Pa'(X) \setminus \{X_1\}$ are independent
	in the graph $G$ after removing the outgoing edges of $\Pa'(X) \setminus \{X_1\}$ and the incoming edges of $\bS$.
Call this trimmed graph $\tilde{G}$.
According to the properties of the causal diagram~\cite{Pearl09}, if $X$ and $\Pa'(X) \setminus \{X_1\}$ are not independent in $\tilde{G}$,
	there must exist an active path $P$ between $X$ and one of the nodes $Z\in \Pa'(X) \setminus \{X_1\}$.
Since $Z$'s outgoing edges are cut off, the edge connecting to $Z$ in path $P$ must be pointing towards $Z$.
On the path $P$, it cannot be that all the edges pointing in the direction from $X$ to $Z$, since this would violate the DAG assumption of $G$.
Then, there is a first node $W$ on the path from $Z$ to $X$ such that from $W$ to $Z$ all the edges pointing in the direction from $W$ to $Z$, and on the segment
	from $W$ to $X$, the edge connecting to $W$ is also leaving $W$ and pointing towards $X$, i.e. $W$ is a fork (Definition 1.2.3 of \cite{Pearl09}).
If on the segment from $W$ to $X$, there is a node $V$ with two incoming edges pointing to $V$ on the path, this means $V$ is a collider \cite{Pearl09}.
Since $V$ cannot be in $\bS$, otherwise, the incoming edges of $V$ would have been trimmed in $\tilde{G}$, then this collider $V$ would block the path making $P$ not
	an active path.
Therefore, no such collider exists on the path from $W$ to $X$, and all edges on $W$ to $X$ point in the direction from $W$ to $X$.
If there is an observed variable $X_i$ on the path from $W$ to $X$ other than $X$ itself, then find the $X_i$ that is closest to $X$ on $P$.
This means that the segment of $P$ from $X_i$ to $X$ is a hidden path, which implies that $X_i$ would become a parent of $X$ in $G'$, i.e. $X_i \in \Pa'(X)\setminus \{X_1\}$.
But this implies that the outgoing edges of $X_i$ should have been cut off, a contradiction.
Therefore all nodes on the path segment from $W$ to $X$ except $X$ are hidden variables.
Then let $U$ be the hidden variable $W$, and on the path segment from $U$ to $Z$, let $X_i$ be the first observed variable.
We thus find an unobserved variable $U$ connecting to both $X_i$ and $X$ with paths consisting of only hidden variables, except $X_i$ and $X$, and $X$ is
	a descendant of $X_i$.
But in Section~\ref{sec.hidden} we already state that we exclude such situation in graph $G$.
Hence, no active path can exist between $Z$ and $W$, and thus Eq.~\eqref{eq:originalgraphdocal} holds.

The lemma is proved with Eqs.~\eqref{eq.middledoparents}, \eqref{eq:primegraphdocal} and \eqref{eq:originalgraphdocal}.
\end{proof}

\section{Proof of Regret Bound for Algorithm~\ref{alg:linear-lr} (Theorem~\ref{thm.blmlr})}
\label{app.blmlr}

We rewrite Lemma 1 in \cite{li2020online} as Lemma~\ref{lemma.learning_glm} as below.
\begin{lemma}[Lemma 1 in \cite{li2020online}]
\label{lemma.learning_glm}
Given $\{\bV_t,X^t\}_{t=1}^\infty$ with $\bV_t\in\{0,1\}^{d}$ and $X^t\in\{0,1\}$ as a Bernoulli random variable with $\mathbb{E}[X^t|\bV_1,X^1,\cdots,\bV_{t-1},X^{t-1},\bV_t]=\bV_t^\intercal\btheta$ where $\btheta\in[0,1]^d$, let $M_t=\mathbf{I}+\sum_{i=1}^t\bV_i\bV_{i}^\intercal$ and $\hat{\btheta}_{t}=M_t^{-1}(\sum_{i=1}^t\bV_iX^i)$ be the linear regression estimator. Then with probability at least $1-\delta$, for all $t\geq 1$, it holds that $\btheta$ lies in the confidence region\begin{small}\begin{align*}
    \left\{\btheta'\in[0,1]^d:\left\|\btheta'-\hat{\btheta}_t\right\|_{M_t}\leq\sqrt{d\log(1+td)+2\log\frac{1}{\delta}}+\sqrt{d}\right\}.
\end{align*}\end{small}
\end{lemma}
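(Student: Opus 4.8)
The plan is to establish the lemma by the standard self-normalized confidence-ellipsoid argument of \citet{abbasi2011improved}, specialized to bounded Bernoulli responses with ridge regularizer $\lambda=1$. First I would isolate the noise and set up a martingale structure. Define $\eta_i := X^i - \bV_i^\intercal\btheta$ and the filtration $\cF_i := \sigma(\bV_1,X^1,\ldots,\bV_i,X^i,\bV_{i+1})$, so that $\bV_i$ is $\cF_{i-1}$-measurable and, by the hypothesis $\E[X^t\mid \cF_{t-1}]=\bV_t^\intercal\btheta$, we get $\E[\eta_i\mid\cF_{i-1}]=0$; hence $(\eta_i)$ is a martingale difference sequence. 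Since $X^i\in\{0,1\}$ and $\bV_i^\intercal\btheta\in[0,1]$, we have $\eta_i\in[-1,1]$, so conditionally on $\cF_{i-1}$ the variable $\eta_i$ is mean-zero and bounded, hence $1$-sub-Gaussian by Hoeffding's lemma. Set $\bz_t := \sum_{i=1}^t \eta_i\bV_i$.

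Next I would reduce the estimation error to a self-normalized quantity by a purely deterministic identity. Substituting $X^i = \bV_i^\intercal\btheta+\eta_i$ into the definition of $\hat{\btheta}_t$ and using $\sum_{i=1}^t \bV_i\bV_i^\intercal = M_t - \mathbf{I}$,
\begin{align*}
\hat{\btheta}_t - \btheta = M_t^{-1}\left(\sum_{i=1}^t \bV_i X^i\right) - \btheta = M_t^{-1}\big((M_t-\mathbf{I})\btheta + \bz_t\big) - \btheta = M_t^{-1}(\bz_t - \btheta).
\end{align*}
Since $\left\|M_t^{-1}\bw\right\|_{M_t} = \left\|\bw\right\|_{M_t^{-1}}$ for every $\bw$, the triangle inequality yields
\begin{align*}
\left\|\hat{\btheta}_t - \btheta\right\|_{M_t} = \left\|\bz_t - \btheta\right\|_{M_t^{-1}} \le \left\|\bz_t\right\|_{M_t^{-1}} + \left\|\btheta\right\|_{M_t^{-1}}.
\end{align*}
The regularization term is controlled deterministically: because $M_t \succeq \mathbf{I}$ we have $\left\|\btheta\right\|_{M_t^{-1}}^2 = \btheta^\intercal M_t^{-1}\btheta \le \left\|\btheta\right\|^2 \le d$, using $\btheta\in[0,1]^d$, so $\left\|\btheta\right\|_{M_t^{-1}} \le \sqrt{d}$, which accounts for the additive $\sqrt{d}$ in the radius.

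It remains to bound the self-normalized term $\left\|\bz_t\right\|_{M_t^{-1}}$ uniformly in $t$. Applying the self-normalized martingale tail inequality to the $1$-sub-Gaussian sequence $(\eta_i)$ with $\lambda=1$, with probability at least $1-\delta$, simultaneously for all $t\ge 1$,
\begin{align*}
\left\|\bz_t\right\|_{M_t^{-1}}^2 \le 2\log\left(\frac{\det(M_t)^{1/2}}{\delta}\right) = \log\det(M_t) + 2\log\frac{1}{\delta}.
\end{align*}
To reach the stated form, I would use the determinant--trace inequality $\det(M_t)\le(\text{tr}(M_t)/d)^d$ together with $\text{tr}(M_t) = d + \sum_{i=1}^t \left\|\bV_i\right\|^2 \le d+td$ (since $\bV_i\in\{0,1\}^d$ forces $\left\|\bV_i\right\|^2\le d$), giving $\log\det(M_t)\le d\log(1+t)\le d\log(1+td)$. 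Combining the three estimates on the same high-probability event,
\begin{align*}
\left\|\hat{\btheta}_t - \btheta\right\|_{M_t} \le \sqrt{d\log(1+td) + 2\log\frac{1}{\delta}} + \sqrt{d},
\end{align*}
for all $t\ge 1$, which is exactly the claimed confidence region.

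The main obstacle is the self-normalized tail bound invoked above: it is the only genuinely probabilistic ingredient and the only part required to hold uniformly over all $t$. Its proof is not a routine concentration estimate --- one fixes a direction, builds an exponential supermartingale from the conditional sub-Gaussianity of $(\eta_i)$, and then integrates this family against a Gaussian prior on the direction (the method of mixtures) to obtain a single nonnegative supermartingale whose boundedness delivers the bound for all $t$ at once. Since \citet{abbasi2011improved} is already used elsewhere in this paper and establishes precisely this inequality, I would cite it rather than reprove it; the remaining steps --- the algebraic identity for $\hat{\btheta}_t-\btheta$ and the two norm/determinant estimates --- are elementary.
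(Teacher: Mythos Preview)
Your proof is correct and follows the standard self-normalized confidence-ellipsoid argument of \citet{abbasi2011improved}: the algebraic identity $\hat{\btheta}_t-\btheta=M_t^{-1}(\bz_t-\btheta)$, the deterministic bound $\|\btheta\|_{M_t^{-1}}\le\sqrt{d}$, the self-normalized tail inequality, and the determinant--trace estimate are all applied correctly and combine to give exactly the stated radius.

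The paper, however, does not prove this lemma at all --- it is stated verbatim as Lemma~1 of \cite{li2020online} and simply cited. So there is no ``paper's own proof'' to compare against; you have supplied a full argument where the paper defers to prior work. Your proof is almost certainly the same one \cite{li2020online} gives (there is essentially only one way to prove such ridge-regression confidence sets, and the form of the radius --- the $\sqrt{d\log(1+td)+2\log(1/\delta)}$ term plus the separate $\sqrt{d}$ from the ridge bias --- fingerprints the Abbasi-Yadkori et al.\ decomposition exactly). In that sense your approach and the intended one coincide; you have just made explicit what the paper leaves to a citation.
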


With Lemma~\ref{lemma.learning_glm}, we are able to prove the regret bound of Algorithm~\ref{alg:linear-lr}.

\thmlinearlr*

\begin{proof}
In Section~\ref{sec.hidden}, we have already shown Lemmas~\ref{lemma.hidden} and \ref{thm:property_of_linear_model}, therefore, we can deduce that by seeing $G$ as the Markovian graph $G'$, Lemma 1 in \cite{li2020online} and Theorem~\ref{thm.gom_glm} still holds for $\btheta^{*'}$.

With probability at most $n\delta$, event $\left\{\exists t\leq T,x\in\bX\cup\{Y\}:\left\|\btheta^{*'}_X-\hat{\btheta}_{t,X}\right\|>\rho_t\right\}$ occurs. Next we bound the regret conditioned on the absence of this event. In detail, according to Theorem 1 in \cite{li2020online} and Theorem \ref{thm.gom_glm}, we can deduce that \begin{align*}
    \mathbb{E}\left[R_t\right]&=\mathbb{E}\left[\sigma'(\bS^{\text{opt}},\btheta^{*'})-\sigma'(\bS_t,\btheta^{*'})\right]\\
    &\leq \mathbb{E}\left[\sigma'(\bS_t,\tilde{\btheta_{t}})-\sigma'(\bS_t,\btheta^{*'})\right]\\
    &\leq \mathbb{E}\left[\sum_{X\in \bX_{\bS_t,Y}}\left|\bV_{t,X}^\intercal(\tilde{\btheta}_{t,X}-\btheta^{*'}_{X})\right|\right]\\
    &\leq \mathbb{E}\left[\sum_{X\in \bX_{\bS_t,Y}}\left\|\bV_{t,X}\right\|_{M_{t-1,X}^{-1}}\left\|\tilde{\btheta}_{t,X}-\btheta^{*'}_{X}\right\|_{M_{t-1,X}}\right]\\
    &\leq \mathbb{E}\left[\sum_{X\in \bX_{\bS_t,Y}}2\rho_{t-1}\left\|\bV_{t,X}\right\|_{M_{t-1,X}^{-1}}\right],
\end{align*}
since $\tilde{\btheta}_{t,X},\btheta^*_X$ are both in the confidence set. Thus, we have\begin{align*}
    R(T)=\mathbb{E}\left[\sum_{t=1}^TR_t\right]&\leq 2\rho_{T}\cdot\mathbb{E}\left[\sum_{t=1}^T\sum_{X\in\bX_{\bS_t,Y}}\left\|\bV_{t,X}\right\|_{M_{t-1,X}^{-1}}\right].
\end{align*}

For convenience, we define $\bW_{t,X}$ as a vector such that if $X\in S_t$, $\bW_{t,X}={\bf 0}^{|\Pa(X)|}$; if $X\not\in S_t$, $\bW_{t,X}=\bV_{t,X}$. According to Cauchy-Schwarz inequality, we have\begin{align*}
    R(T)&\leq 2\rho_T\cdot\mathbb{E}\left[\sum_{t=1}^T\sum_{X\in\bX\cup\{Y\}}\left\|\bW_{t,X}\right\|_{M_{t-1,X}^{-1}}\right]\\
    &\leq 2\rho_T\cdot\mathbb{E}\left[\sqrt{T}\cdot\sum_{X\in \bX\cup\{Y\}}\sqrt{\sum_{t=1}^T\left\|\bW_{t,X}\right\|^2_{M_{t-1,X}^{-1}}}\right].
\end{align*}

Note that $M_{t,X}=M_{t-1,X}+\bW_{t,X}\bW_{t,X}^\intercal$ and therefore, $\det\left(M_{t,X}\right)=\det(M_{t-1,X})\left(1+\left\|\bW_{t,X}\right\|^2_{M_{t-1,X}^{-1}}\right)$, we have\begin{align*}
    \sum_{t=1}^T \left\|\bW_{t,X}\right\|^2_{M_{t-1,X}^{-1}}&\leq\sum_{t=1}^T\frac{n}{\log(n+1)}\cdot\log\left(1+\left\|\bW_{t,X}\right\|^2_{M_{t-1,X}^{-1}}\right)\\
    &\leq \frac{n}{\log(n+1)}\cdot\log\frac{\det(M_{T,X})}{\det({\bf I})}\\
    &\leq \frac{n|\Pa(X)|}{\log(n+1)}\cdot\log\frac{\text{tr}(M_{T,X})}{|\Pa(X)|}\\
    &\leq \frac{n|\Pa(X)|}{\log(n+1)}\cdot\log\left(1+\sum_{t=1}^T\frac{\left\|\bW_{t,X}\right\|^2_2}{|\Pa(X)|}\right)\\
    &\leq \frac{nD}{\log(n+1)}\log(1+T).
\end{align*}
Therefore, the final regret $R(T)$ is bounded by\begin{align*}
    R(T)&\leq 2\rho_Tn\sqrt{T\frac{nD}{\log(n+1)}\log(1+T)}\\
    &=O\left(n^2\sqrt{DT}\log T\right),
\end{align*}
because $\rho_T=\sqrt{n\log(1+Tn)+2\log\frac{1}{\delta}}+\sqrt{n}$. When $\left\{\exists t\leq T,x\in\bX\cup\{Y\}:\left\|\btheta^{*'}_X-\hat{\btheta}_{t,X}\right\|>\rho_t\right\}$ does occur, the regret is no more than $T$. Therefore, the total regret is still $O\left(n^2\sqrt{DT}\log T\right)$.
\end{proof}

\section{Extensions to Causal Model With Continuous Variables}
\label{app.exttocontinuous}

% \subsection{Extension to Continuous Linear Models}
We consider linear models with continuous variables. In linear models, the propagation is defined as $X=\btheta^*_{X}\cdot\pa(X)+\varepsilon_X$ instead of $P(X=1|\Pa(X)=\pa(X))=\btheta^*_{X}\cdot\pa(X)+\varepsilon_X$. We still need $\varepsilon_X$ to be a zero-mean sub-Gaussian noise that ensures that $X\in[0,1]$. According to Theorem 2 in \cite{abbasi2011improved}, Lemma~\ref{lemma.learning_glm} still holds for continuous $\bV_t$'s and $X^t$'s. Formally, we have Lemma~\ref{lemma.continuous_learning_glm}.
\begin{lemma}
\label{lemma.continuous_learning_glm}
Given $\{\bV_t,X^t\}_{t=1}^\infty$ with $\bV_t\in[0,1]^{d}$ and $X^t\in[0,1]$ as a Bernoulli random variable with $\mathbb{E}[X^t|\bV_1,X^1,\cdots,\bV_{t-1},X^{t-1},\bV_t]=\bV_t^\intercal\btheta$ where $\btheta\in[0,1]^d$, let $M_t=\mathbf{I}+\sum_{i=1}^t\bV_i\bV_{i}^\intercal$ and $\hat{\btheta}_{t}=M_t^{-1}(\sum_{i=1}^t\bV_iX^i)$ be the linear regression estimator. Then with probability at least $1-\delta$, for all $t\geq 1$, it holds that $\btheta$ lies in the confidence region\begin{small}\begin{align*}
    \left\{\btheta'\in[0,1]^d:\left\|\btheta'-\hat{\btheta}_t\right\|_{M_t}\leq\sqrt{d\log(1+td)+2\log\frac{1}{\delta}}+\sqrt{d}\right\}.
\end{align*}\end{small}
\end{lemma}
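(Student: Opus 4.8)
The plan is to obtain Lemma~\ref{lemma.continuous_learning_glm} as a direct consequence of the self-normalized martingale tail inequality of \citet{abbasi2011improved} (their Theorem~2), in exactly the way Lemma~1 of \cite{li2020online} (restated here as Lemma~\ref{lemma.learning_glm}) is obtained in the Bernoulli case. The only thing that genuinely has to be checked is that the hypotheses of that inequality survive the passage from $\{0,1\}$-valued to $[0,1]$-valued data; note in particular that although the statement inherits the phrase ``Bernoulli random variable'' from Lemma~\ref{lemma.learning_glm}, in the continuous extension $X^t$ is simply a $[0,1]$-valued random variable and Bernoulli-ness is never used.

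First I would fix the filtration $\mathcal{F}_t=\sigma(\bV_1,X^1,\dots,\bV_{t-1},X^{t-1},\bV_t)$ and set $\eta^t:=X^t-\bV_t^\intercal\btheta$. The hypothesis $\mathbb{E}[X^t\mid\mathcal{F}_t]=\bV_t^\intercal\btheta$ says $\mathbb{E}[\eta^t\mid\mathcal{F}_t]=0$, so $(\eta^t)$ is a martingale difference sequence with respect to $(\mathcal{F}_t)$. Because the continuous linear model is defined with a zero-mean sub-Gaussian noise $\varepsilon_X$ chosen precisely so that $X\in[0,1]$, the conditional mean $\bV_t^\intercal\btheta$ lies in $[0,1]$ and $X^t\in[0,1]$, hence $\eta^t\in[-1,1]$; by Hoeffding's lemma each $\eta^t$ is conditionally sub-Gaussian with variance proxy at most $1$. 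Moreover $\|\bV_t\|_2\le\sqrt d$ since $\bV_t\in[0,1]^d$, and $\|\btheta\|_2\le\sqrt d$ since $\btheta\in[0,1]^d$. These are exactly the ingredients the self-normalized bound needs: a conditionally sub-Gaussian martingale-difference noise, a bounded regressor, a regularized Gram matrix $M_t=\mathbf{I}+\sum_{i=1}^t\bV_i\bV_i^\intercal$ with regularizer $\lambda=1$, and a bounded target.

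Applying Theorem~2 of \cite{abbasi2011improved} then yields, with probability at least $1-\delta$, simultaneously for all $t\ge1$,
\[
\left\|\hat{\btheta}_t-\btheta\right\|_{M_t}\le\sqrt{d\log(1+td)+2\log\tfrac{1}{\delta}}+\sqrt d ,
\]
with the additive $\sqrt d$ coming from $\sqrt{\lambda}\,\|\btheta\|_2\le\sqrt d$ and the logarithmic term from the standard $\log\det(M_t)$ estimate combined with $\|\bV_i\|_2\le\sqrt d$; the constants are tracked so as to coincide verbatim with those in Lemma~\ref{lemma.learning_glm}. Finally, intersecting the resulting ellipsoid with the known feasible box $[0,1]^d$ only shrinks it, so $\btheta$ still lies in the stated confidence region, which is what we want.

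The step I expect to be the only real obstacle — and it is a mild one — is justifying the conditional boundedness / sub-Gaussianity of $\eta^t$ in the continuous setting: one must lean on the model assumption that $\varepsilon_X$ keeps $X$ in $[0,1]$ so that both $X^t$ and $\bV_t^\intercal\btheta$ stay in $[0,1]$, and then be a little careful to reproduce the exact constants ($2\log\frac{1}{\delta}$ rather than $d\log\frac{1}{\delta}$) so the statement matches Lemma~\ref{lemma.learning_glm} on the nose. Once Lemma~\ref{lemma.continuous_learning_glm} is in place it substitutes for Lemma~\ref{lemma.learning_glm} in the BLM-LR analysis, and, since the GOM smoothness argument (Lemma~\ref{thm.gom_glm}) never used discreteness of the variables, one recovers the continuous analogues of Theorems~\ref{thm.blm_regret} and~\ref{thm.blmlr} by repeating the proofs in Appendices~\ref{app:regret_glm} and~\ref{app.blmlr} essentially unchanged.
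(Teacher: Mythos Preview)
Your proposal is correct and follows exactly the approach the paper takes: the paper simply invokes Theorem~2 of \citet{abbasi2011improved} to assert that Lemma~\ref{lemma.learning_glm} carries over to continuous $\bV_t$'s and $X^t$'s, and you have spelled out precisely why the hypotheses of that theorem (conditionally $1$-sub-Gaussian noise via $\eta^t\in[-1,1]$, bounded regressors, $\lambda=1$ regularization, $\|\btheta\|_2\le\sqrt d$) are met. Your observation that the word ``Bernoulli'' is vestigial is also on point.
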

Simultaneously, Lemma~\ref{lemma.hidden} and Lemma~\ref{thm:property_of_linear_model} also hold for linear models without any modification on their proofs. Therefore, we can still use BLM-LR (Algorithm~\ref{alg:linear-lr}) on continuous linear models and get the same regret guarantee. Formally, we have the following theorem.
\begin{theorem}[Regret Bound of Algorithm~\ref{alg:linear-lr} on Linear Models]{theorem}
\label{thm.lmlr}
The regret of BLM-LR (Algorithm~\ref{alg:linear-lr}) running on linear model with hidden variables is bounded as
\begin{align*}
    R(T)=O\left(n^2\sqrt{DT}\log T\right).
\end{align*}
\end{theorem}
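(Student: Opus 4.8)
The plan is to re-run the proof of Theorem~\ref{thm.blmlr} (Appendix~\ref{app.blmlr}) essentially verbatim, changing only the learning ingredient. First I would invoke the model transformation of Section~\ref{sec:transformBLM}: for a continuous linear model $G$ with hidden variables, the transformed model $G'$ is built exactly as before, and the proofs of Lemmas~\ref{lemma.hidden} and~\ref{thm:property_of_linear_model} carry over unchanged, because they use only (i) linearity of conditional expectation --- Eq.~\eqref{equation.property_linear} reads $\E[X\mid\doi(\bS=\bs)]=\sum_{Z\in\Pa(X)}\E[Z\mid\doi(\bS=\bs)]\,\theta^*_{Z,X}$ for $X\notin\bS$, with the same path-expansion argument producing Eq.~\eqref{eq:EYold} --- and (ii) the do-calculus rules, neither of which cares whether variables are $\{0,1\}$- or $[0,1]$-valued. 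Hence $\sigma'(\bS,\btheta^{*\prime})=\sigma(\bS,\btheta^*)$ and $\E[X\mid\Pa'(X)=\bv,\doi(\bS=\bs)]=\bv^\intercal\btheta^{*\prime}_X$, so running BLM-LR on $G$ (reading it as $G'$) collects precisely the data pairs needed by Lemma~\ref{lemma.continuous_learning_glm}, the continuous analogue of Lemma~\ref{lemma.learning_glm}; that lemma follows from the self-normalized martingale bound (Theorem~2 of \cite{abbasi2011improved}) since $X^t-\bV_{t,X}^\intercal\btheta^{*\prime}_X\in[-1,1]$ is sub-Gaussian.

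The one genuinely new ingredient is a smoothness estimate in the role of Lemma~\ref{thm.gom_glm}, since the threshold-model argument of Appendix~\ref{app:gom_glm} is specific to the binary model. I would prove directly that for any $\btheta^1,\btheta^2\in\Theta$ and any $\bS$,
\[
\bigl|\sigma'(\bS,\btheta^1)-\sigma'(\bS,\btheta^2)\bigr|\le \E_{\bbvarepsilon}\Bigl[\textstyle\sum_{X\in\bX_{\bS,Y}}\bigl|\bV_X^\intercal(\btheta^1_X-\btheta^2_X)\bigr|\Bigr],
\]
where $\bV_X$ is the expected forward-propagation vector of $\Pa'(X)$ under $\btheta^2$. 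This follows by writing $\sigma'(\bS,\btheta)=\E_{\bbvarepsilon}[\mu_Y(\btheta,\bbvarepsilon)]$, where along a topological order $\mu_X=s_X$ if $X\in\bS$ and $\mu_X=\bV_X^\intercal\btheta_X+\varepsilon_X$ otherwise, and telescoping $\mu_Y(\btheta^1)-\mu_Y(\btheta^2)$ node by node: at each $X\notin\bS$ one splits $\bV_X^{1\intercal}\btheta^1_X-\bV_X^{2\intercal}\btheta^2_X$ as $\bV_X^{2\intercal}(\btheta^1_X-\btheta^2_X)+\sum_{Z\in\Pa'(X)}\theta^1_{Z,X}\bigl(\mu_Z(\btheta^1)-\mu_Z(\btheta^2)\bigr)$ and recurses; since every $\theta^1_{Z,X}\in[0,1]$, all accumulated multipliers are at most $1$, and the leftover first terms sum exactly to $\sum_{X\in\bX_{\bS,Y}}|\bV_X^{2\intercal}(\btheta^1_X-\btheta^2_X)|$. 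Taking $\E_{\bbvarepsilon}$ finishes it; if anything this is cleaner than the binary case.

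Given these two facts, the remainder is identical to Appendix~\ref{app.blmlr}: define the good event that $\btheta^{*\prime}_X\in\mathcal{C}_{t,X}$ for all $t\le T$ and $X\in\bX\cup\{Y\}$, which by Lemma~\ref{lemma.continuous_learning_glm} and a union bound over the $n$ nodes fails with probability at most $n\delta=1/\sqrt T$; on this event the OFU rule in line~\ref{alg:LRargmax} gives $\sigma'(\bS_t,\tilde{\btheta}_t)\ge\sigma'(\bS^{\mathrm{opt}},\btheta^{*\prime})$, so $R_t\le\sigma'(\bS_t,\tilde{\btheta}_t)-\sigma'(\bS_t,\btheta^{*\prime})$; apply the smoothness bound, then Cauchy--Schwarz in the $M_{t-1,X}$-norm with $\|\tilde{\btheta}_{t,X}-\btheta^{*\prime}_X\|_{M_{t-1,X}}\le 2\rho_{t-1}$, sum over $t$, relax $\bX_{\bS_t,Y}$ to $\bX\cup\{Y\}$ through the zero-padded vectors $\bW_{t,X}$, apply Cauchy--Schwarz over $t$, and close with the determinant--trace (elliptical-potential) estimate $\sum_{t}\|\bW_{t,X}\|^2_{M_{t-1,X}^{-1}}\le \frac{nD}{\log(n+1)}\log(1+T)$. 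Since $\rho_T=\sqrt{n\log(1+Tn)+2\log(1/\delta)}+\sqrt n=O(\sqrt{n\log T})$, this gives $R(T)=O(n^2\sqrt{DT}\log T)$ on the good event, and the failure event adds at most $T\cdot n\delta=\sqrt T$, which is absorbed. The only step requiring genuine care is the smoothness lemma --- getting the telescoping bookkeeping right and verifying the multipliers stay $\le 1$; everything else (the learning lemma, the transformation lemmas, the elliptical-potential bound) transfers mechanically from $\{0,1\}$ to $[0,1]$ variables.
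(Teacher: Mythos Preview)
Your overall plan matches the paper's: reuse the proof of Theorem~\ref{thm.blmlr} with Lemma~\ref{lemma.continuous_learning_glm} in place of Lemma~\ref{lemma.learning_glm}, and note that the transformation lemmas (Lemmas~\ref{lemma.hidden} and~\ref{thm:property_of_linear_model}) carry over. The divergence is at the smoothness step.

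The paper's argument for that step is different and shorter than yours. It observes that in both the BLM and the continuous linear model one has $\E[X\mid\Pa(X)=\pa(X)]=\btheta_X^*\cdot\pa(X)$, so by the path-expansion formula (Eq.~\eqref{equation.property_linear} and its iteration) the reward function $\sigma(\bS,\btheta)$ is \emph{literally the same function} of $(\bS,\btheta)$ in the two models. Hence Lemma~\ref{thm.gom_glm}, already proved for the BLM via the threshold-model argument, bounds $|\sigma(\bS,\btheta^1)-\sigma(\bS,\btheta^2)|$ directly, and no new smoothness lemma is needed. You did not spot this shortcut and instead tried to re-derive a GOM-type bound for the continuous model.

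Your telescoping argument for that re-derivation has a genuine gap. When you unfold
\[
d_X:=\mu_X(\btheta^1)-\mu_X(\btheta^2)=\bV_X^{2\intercal}(\btheta^1_X-\btheta^2_X)+\sum_{Z\in\Pa'(X)}\theta^1_{Z,X}\,d_Z
\]
all the way to $d_Y$, the coefficient multiplying $\bigl|\bV_X^{2\intercal}(\btheta^1_X-\btheta^2_X)\bigr|$ in the resulting bound is $c_X=\sum_{P\in\cP'_{X,Y}}\prod_{e\in P}\theta^1_e$, a \emph{sum} over all directed paths from $X$ to $Y$, not a single product. Even with every $\theta^1_{Z,X}\in[0,1]$ this sum can exceed $1$: if $X$ has two children $Z_1,Z_2$ each with an edge to $Y$ and all four weights equal $0.9$, then $c_X=2\cdot 0.81>1$. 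So the claim ``all accumulated multipliers are at most $1$'' is false, and the bound with unit coefficients does not follow from the telescoping. Lemma~\ref{thm.gom_glm} gets unit coefficients by a different mechanism --- a union bound over the \emph{first} node at which the two threshold-model trajectories diverge --- and the paper simply recycles that via the $\sigma$-is-identical observation above.
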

The proof of Theorem~\ref{thm.lmlr} is completely the same as the proof of Theorem~\ref{thm.blmlr}, which is given in Appendix~\ref{app.blmlr}. It is worthy noting that the GOM bounded smoothness condition still holds for linear models. This is because the expectation of reward node $Y$ in a linear model is the same comparing to the BLM with same parameters and skeleton.

\section{Hardness of Offline CCB Problems}
\label{sec.hardness}

In order to illustrate the reasonableness of our offline oracles which are used in the online algorithms, we show the hardness of CCB problems for BGLMs here. As a preparation, we already know that Maximum $k$-Vertex Cover (Max $k$-VC) problem is a NP-hard problem because it contains the vertex covering problem as a sub-problem, which is in Karp’s original list of $21$ NP-complete problems \cite{karp1972reducibility}.  In the following theorem, we prove that the offline version of BGLM CCB problem is NP hard by reducing Max $k$-VC problem to this problem.

\begin{theorem}[NP Hardness of Offline BGLM CCB Problem]
	\label{thm:np_glm}
	With Assumptions~\ref{asm:glm_3} and \ref{asm:glm_2} holds for the BGLM $G$, the offline version of BGLM CCB problem is NP hard. 
\end{theorem}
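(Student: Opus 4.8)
The plan is to reduce the NP-hard decision version of Max $k$-VC --- given a graph $H=(V,E_H)$, an integer $k$, and a target $\tau$, decide whether some set of at most $k$ vertices of $H$ covers at least $\tau$ edges --- to the decision version of the offline BGLM CCB problem; this implies NP-hardness of the optimization problem faced by the offline oracle, since an algorithm computing $\mathrm{OPT}_{\mathrm{BGLM}}:=\max_{|\bS|\le k}\E[Y\mid\doi(\bS)]$ immediately answers the decision question. From $(H,k,\tau)$ I will construct in polynomial time a BGLM $G$ (with $X_1$ the always-$1$ node, so in fact a Markovian model), set the intervention budget to $k$, and choose a threshold $\theta_0$, so that the Max $k$-VC instance is a ``yes'' instance iff $\mathrm{OPT}_{\mathrm{BGLM}}\ge\theta_0$.

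The gadget: $G$ has nodes $X_1$; one node $X_v$ per vertex $v\in V$ whose only parent is $X_1$; one node $Z_e$ per edge $e=\{u,v\}\in E_H$ with parents $X_1,X_u,X_v$; and the target $Y$ with parents $X_1$ and all $Z_e$. Every edge out of $X_1$ gets weight $0$; for $e=\{u,v\}$ I set $\theta^*_{X_u,Z_e}=\theta^*_{X_v,Z_e}=1$; and $\theta^*_{Z_e,Y}=1/|E_H|$ for all $e$. All noises are $\varepsilon\equiv 0$. I take $f_{X_v}$ and $f_Y$ to be the identity --- crucially $f_Y$ is linear --- and $f_{Z_e}$ a fixed smooth, strictly increasing logistic-type function with steepness $c=\Theta(\log|E_H|)$ chosen so that $f_{Z_e}(0)\le p_0$ and $f_{Z_e}(1),f_{Z_e}(2)\ge 1-p_0$, where $p_0$ is a sufficiently small inverse-polynomial in $|V|+|E_H|$ fixed below. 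Assumptions~\ref{asm:glm_3} and~\ref{asm:glm_2} hold for this $G$ with instance-dependent constants (identities have derivative $1$ and vanishing second derivative; a logistic has bounded derivatives, everywhere positive on the compact range over which $\kappa$ is defined), and the model is monotone so the ``intervene to $1$'' convention applies.

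Analysis of the reward: since $f_Y$ is the identity and an intervention makes the parents of each $Z_e$ deterministic, for a pure vertex intervention $\bS=\{X_v:v\in S\}$ one computes $\E[Y\mid\doi(\bS)]=\tfrac1{|E_H|}\sum_{e=\{u,v\}}f_{Z_e}\bigl(\I[u\in S]+\I[v\in S]\bigr)\in\bigl[\tfrac{(1-p_0)\,\mathrm{cov}(S)}{|E_H|},\ \tfrac{\mathrm{cov}(S)}{|E_H|}+p_0\bigr]$, where $\mathrm{cov}(S)$ is the number of edges incident to $S$. For an arbitrary intervention $\bS$, dropping $X_1$ from $\bS$ is free and never hurts, and each $Z_e\in\bS$ can be swapped for one endpoint of $e$ not already in $\bS$ (if $e$ is already covered, just delete $Z_e$); by monotonicity of the $f_{Z_e}$'s this changes the reward by at most $p_0/|E_H|$ per swap and uses no extra budget. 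Hence $\mathrm{OPT}_{\mathrm{BGLM}}$ lies within $kp_0/|E_H|$ of the best pure-vertex value, so $\mathrm{OPT}_{\mathrm{BGLM}}\in\bigl[\tfrac{(1-p_0)\,\mathrm{OPT}_{\mathrm{cov}}}{|E_H|},\ \tfrac{\mathrm{OPT}_{\mathrm{cov}}}{|E_H|}+p_0(1+\tfrac{k}{|E_H|})\bigr]$ with $\mathrm{OPT}_{\mathrm{cov}}=\max_{|S|\le k}\mathrm{cov}(S)$. Taking $p_0<\tfrac1{|V|+2|E_H|+2}$ (which keeps $c$, hence the instance, polynomial-size, after discarding isolated vertices), the value $\tfrac{(1-p_0)\tau}{|E_H|}$ strictly exceeds $\tfrac{\tau-1}{|E_H|}+p_0(1+\tfrac{k}{|E_H|})$, so with $\theta_0:=\tfrac{(1-p_0)\tau}{|E_H|}$ we get $\mathrm{OPT}_{\mathrm{cov}}\ge\tau\Rightarrow\mathrm{OPT}_{\mathrm{BGLM}}\ge\theta_0$ and $\mathrm{OPT}_{\mathrm{cov}}\le\tau-1\Rightarrow\mathrm{OPT}_{\mathrm{BGLM}}<\theta_0$, completing the reduction.

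The main obstacle is that the nodes $Z_e$ are themselves legal intervention targets, and that a link function which is smooth and --- as Assumption~\ref{asm:glm_2} forces --- strictly increasing on a whole interval cannot realize a Boolean OR exactly, only up to a slack $p_0$. Both are handled by the two-sided $p_0$-bookkeeping and the $Z_e$-to-endpoint swap above, which keep the (necessarily approximate) reward within $O(p_0)$ of $\mathrm{cov}(S)/|E_H|$, small enough that a single unit of coverage is still resolvable by $\theta_0$. As a side point, with zero noise and zero $X_1$-weights the constructed nodes are deterministic given their parents, which is harmless for the statement as given (only Assumptions~\ref{asm:glm_3} and~\ref{asm:glm_2} are required); if a model with genuine ``own randomness'' is wanted one may instead give each $X_1$-edge a tiny positive weight and absorb the resulting perturbation into $p_0$.
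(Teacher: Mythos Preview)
Your proposal is correct and follows essentially the same strategy as the paper's proof: reduce from an NP-hard coverage problem (you use Max $k$-VC, the paper uses Maximum Coverage), build a layered BGLM in which intervening on ``set/vertex'' nodes activates ``element/edge'' nodes through a smooth strictly-increasing surrogate for Boolean OR (you use a steep logistic, the paper uses $f(x)=1-\tfrac{1}{\alpha x+1}$ with large $\alpha$), aggregate at $Y$ with a linear $f_Y$, and absorb the inevitable slack by tuning the steepness parameter so that one unit of coverage remains resolvable. Your handling of the swap from $Z_e$-interventions back to vertex interventions and your explicit decision threshold $\theta_0$ are more carefully spelled out than in the paper, but the underlying argument is the same.
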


\begin{proof}
We consider an arbitrary instance of the set covering problem $(\mathcal{S},\bX',k)$. Suppose all the elements are $X'_1,\cdots,X'_l$, all the sets are $\bS_1,\cdots,\bS_r$ and the final target is to find $k$ sets such that the number of covered elements is maximized. Then we can create a two bipartite graph $G=({\bf V},E)$ such that $X_1,\cdots,X_l$ are on the right side and $Z_1,\cdots,Z_r$ are on the left side. Although the model is Markovian, the endogenesis distribution for all of the nodes is set as a zero distribution, i.e. without interventions, all the nodes will always be $0$. For every $X_i$, the function that decides it is set by $f_{X_i}(\pa(X_i)\cdot \btheta^*_{X_i})=1-\frac{1}{\alpha pa(X_i)\cdot \btheta^*_{X_i}+1}$ and the $j^{th}$ entry of $\btheta^*_{X_i}$ is $1$ if and only if $X_i\in \bS_j$ (otherwise, the entry is set by $0$). This function obviously satisfies the two assumptions when $\kappa\leq\frac{\alpha}{(\alpha n+1)^2}$, $L_{f_{X_i}}^{(1)}>=\alpha$ and $L_{f_{X_i}}^{(2)} >= 0$. For the reward node $Y$, we define $f_Y((x_1,\cdots,x_l)\cdot \btheta^*_Y)=\frac{1}{l}(x_1+\cdots+x_l)$, which means that $\btheta^*_Y={\bf 1}^{l}$. Until now, we get an instance $(G,k)$ for the offline BGLM CCB problem.

If the optimal solution of the BGLM CCB problem instance is $X_{i_1},\cdots,X_{i_t}$ and $Z_{j_1},\cdots,Z_{j_{k-t}}$. We choose the sets corresponding to this nodes in $(\mathcal{S},\bX',k)$ and replace $X_{i_1},\cdots,X_{i_t}$ by the corresponding $t$ sets that contain $X'_{i_1},\cdots,X'_{i_t}$ in the set covering instance. Suppose the number of covered elements of this strategy is $K$. Then the expected reward $\mathbb{E}[Y]$ when do interventions on $X_{i_1},\cdots,X_{i_t}$ and $Z_{j_1},\cdots,Z_{j_{k-t}}$ is at most $\frac{K}{l}$. Assume that $K$ is not the optimal solution for the set covering instance $(\mathcal{S},\bX',k)$, then there exist $k$ sets such that the number of covered elements is at least $K+1$. Doing interventions to $1$ on corresponding $Z_j$'s of these sets in the BGLM CCB problem instance $(G,k)$, the expected reward is at least $\frac{1}{l}(K+1)(1-\frac{1}{\alpha+1})$. Therefore, if $\frac{1}{l}(K+1)(1-\frac{1}{\alpha+1})>\frac{K}{l}$, we get a contradiction, which can be realized by setting $\alpha=n+1>K$. Therefore, we can use an oracle of the offline BGLM CCB problem to solve the set covering problem. 
\end{proof}

Finally, we prove that BGLM CCB problem has monotonicity. Formally, we propose Theorem~\ref{thm.monoofbglm}.
\begin{theorem}\label{thm.monoofbglm}
In the offline version of BGLM CCB problem, the expected reward $\mathbb{E}[Y]$ is 
monotonous with respect to the interventions. 
\end{theorem}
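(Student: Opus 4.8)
The plan is to prove monotonicity by a coupling argument built on the threshold (LT-style) representation of the BGLM described just before Lemma~\ref{thm.gom_glm}. In that representation one draws, for each node $X$, an independent threshold $\gamma_X\sim\mathcal{U}[0,1]$, processes the nodes in a fixed topological order $X_1,X_2,\ldots,X_{n-1},Y$, and activates a free (non-intervened) node $X$ exactly when $f_X(\bV_X\cdot\btheta^*_X)+\varepsilon_X\ge\gamma_X$, where $\bV_X\in\{0,1\}^{|\Pa(X)|}$ collects the already-determined activation values of the parents of $X$. Since all variables are binary, $\E[Y|\doi(\bS=\bs)]=\Pr\{Y\text{ is activated under }\doi(\bS=\bs)\}=\E_{\bbvarepsilon,\bgamma}\big[\I\{Y\text{ activated under }\doi(\bS=\bs),\bbvarepsilon,\bgamma\}\big]$.

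First I would fix the precise meaning of ``monotone with respect to interventions'': given two interventions $\doi(\bS^1=\bs^1)$ and $\doi(\bS^2=\bs^2)$ with $\bS^1\subseteq\bS^2$, $(\bs^2)_X\ge(\bs^1)_X$ for $X\in\bS^1$, and $(\bs^2)_X=1$ for $X\in\bS^2\setminus\bS^1$ (so that $\doi(\bS^2=\bs^2)$ ``dominates'' $\doi(\bS^1=\bs^1)$), I claim $\E[Y|\doi(\bS^1=\bs^1)]\le\E[Y|\doi(\bS^2=\bs^2)]$. Specializing this statement recovers both facets of monotonicity used elsewhere in the paper: flipping a forced value from $0$ to $1$ never decreases $\E[Y]$, and enlarging the intervention set with nodes set to $1$ never decreases $\E[Y]$; in particular the optimal size-$K$ intervention sets every chosen node to $1$.

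The core step is a pointwise coupling. I would fix a realization of the noises $\bbvarepsilon$ and thresholds $\bgamma$ and run the threshold process in both worlds using these same random quantities, writing $x_X^1,x_X^2\in\{0,1\}$ for the resulting activation value of $X$ in world $1$ and world $2$. The claim is that $x_X^1\le x_X^2$ for every $X$, proved by induction along the topological order. If $X\in\bS^2$, then $x_X^2=(\bs^2)_X$, which equals $1\ge x_X^1$ when $X\notin\bS^1$ and is $\ge(\bs^1)_X=x_X^1$ when $X\in\bS^1$. If $X\notin\bS^2$, then also $X\notin\bS^1$, so both worlds apply the threshold rule; by the induction hypothesis the parent vectors satisfy $\bV_X^1\le\bV_X^2$ coordinatewise, and since every entry of $\btheta^*_X$ lies in $[0,1]$ hence is nonnegative, $\bV_X^1\cdot\btheta^*_X\le\bV_X^2\cdot\btheta^*_X$; monotonicity of $f_X$ (it is non-decreasing by the BGLM definition) then gives $f_X(\bV_X^1\cdot\btheta^*_X)+\varepsilon_X\le f_X(\bV_X^2\cdot\btheta^*_X)+\varepsilon_X$, so $\gamma_X$ is crossed in world $2$ whenever it is crossed in world $1$, i.e.\ $x_X^1\le x_X^2$. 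Taking $X=Y$ and then expectations over $\bbvarepsilon,\bgamma$ gives the inequality.

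I expect the argument to be essentially routine once the threshold representation is invoked; the two points needing care are (i) choosing the domination relation between interventions so that the statement is simultaneously true and strong enough to conclude optimality of the all-ones intervention --- note that adding a node forced to $0$ can strictly \emph{decrease} the reward by severing a propagation path, so the relation must insist that newly intervened nodes are set to $1$ --- and (ii) verifying the inductive step at intervened nodes, where the topological ordering guarantees that every parent's value is already fixed when $X$ is reached, so the induction hypothesis applies. The nonnegativity of the weights and the monotonicity of each $f_X$ are precisely the structural features that make the coupling monotone.
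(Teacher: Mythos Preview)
Your argument is correct. The coupling via the threshold representation, together with the induction along the topological order using nonnegativity of $\btheta^*_X$ and monotonicity of $f_X$, cleanly yields the pointwise domination $x_X^1\le x_X^2$ and hence the inequality for $\E[Y]$. Your care in formulating the domination relation between interventions (requiring newly intervened nodes to be set to $1$) is exactly what is needed, and your remark that forcing a node to $0$ can only sever paths is also correct.

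The paper proceeds differently: it appeals to the fact that BGLM falls into the general cascade framework of \cite{wu2018general} and therefore admits a live-edge graph representation; monotonicity is then argued as a reachability statement in the sampled live-edge graph (if $Y$ is reachable from $\bS$ it is reachable from $\bS\cup\{X\}$; intervening a node to $0$ cuts all paths through it). Conceptually both are coupling arguments over a single draw of the shared randomness, but the paper's version leans on an external equivalence with the live-edge formalism, whereas yours stays entirely within the threshold description already set up before Lemma~\ref{thm.gom_glm}. Your route is more self-contained and makes the exact hypotheses ($\btheta^*_X\ge 0$, $f_X$ non-decreasing) visible at the point where they are used; the paper's route is shorter but presupposes familiarity with the general cascade/live-edge machinery.
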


\begin{proof}
Because each $f_X$ is monotonous, we can deduce that BGLM is in the family of general cascade model \cite{wu2018general}. Therefore, we can create live-edge graphs.

For an arbitrary live-edge graph, if $\bS$ are intervened to $1$ and $Y$ can be reached from one of these nodes through a directed active path, then $Y$ can be obviously reached from nodes in $\bS\cup\{X\}$ through the same path. Therefore, $\mathbb{E}[Y|\doi(\bS)={\bf 1}^{|\bS|}]\leq \mathbb{E}[Y|\doi(\bS)={\bf 1}^{|\bS|},\doi(X)=1]$. Moreover, if we intervene one node to $0$, it is equivalent to cut off all the paths passing by that node to $Y$, so the value of $Y$ can only decrease. Until now, we have proved the monotonicity of BGLM CCB problem.
\end{proof}

\section{Simulations}
\label{app.simulation}

\subsection{Efficient Pair-Oracle for BLMs}

Before the introduction of our settings, we propose a computational efficient algorithm to replace the pair-oracle (line~\ref{alg:OFUargmax} of Algorithm~\ref{alg:glm-ucb}) for binary linear models without unobserved nodes. For convenience, suppose $X_1,X_2,\cdots,X_{n-1},Y$ is a topological order. The implementation is of $O(\binom{n}{K}D^3n)$ time complexity. See Algorithm~\ref{alg:pairoracle} for details.

\begin{algorithm}[t]
\caption{An Implementation of Pair-Oracle for BLMs}
\label{alg:pairoracle}
\begin{algorithmic}[1]
\STATE {\bfseries Input:}
Graph $G=({\bX}\cup\{Y\},E)$, intervention budget $K\in\mathbb{N}$, estimated parameters $\hat{\btheta}_{t-1}$, current $\rho_{t-1}$, observation matrices $M_{t-1,X}$ for $X\in\bX\cup\{Y\}$.
\STATE {\bfseries Output:}
Intervened set in the $t^{th}$ round $\bS_t$.
\FOR{$\bS\subseteq\bX$ such that $|\bS|=K$}
\FOR{$X=X_2,X_3,\cdots,X_{n-1},Y$}
\STATE $\mathbb{E}_{\widetilde{\btheta}_t}[\Pa(X)|\doi(\bS)]=\left(\mathbb{E}_{\widetilde{\btheta}_t}[X_{i_1}|\doi(\bS)],\cdots,\mathbb{E}_{\widetilde{\btheta}_t}[X_{i_{|\Pa(X)|}}|\doi(\bS)]\right)^\intercal$ where $\Pa(X)=\left\{X_{i_1},\cdots,X_{i_{|\Pa(X)|}}\right\}$.
\IF{$X\not\in \bS$}
\STATE $\mathbb{E}_{\widetilde{\btheta}_{t}}[X|\doi(\bS)]=\rho_{t-1}\sqrt{\left(\mathbb{E}_{\widetilde{\btheta}_t}[\Pa(X)|\doi(\bS)]\right)^\intercal M_{t-1,X}^{-1}\mathbb{E}_{\widetilde{\btheta}_t}[\Pa(X)|\doi(\bS)]}+\left(\mathbb{E}_{\widetilde{\btheta}_t}[\Pa(X)|\doi(\bS)]\right)^\intercal\hat{\btheta}_{t-1,X}$.
\ELSE
\STATE $\mathbb{E}_{\widetilde{\btheta}_{t}}[X|\doi(\bS)]=1$.
\ENDIF
\ENDFOR
\ENDFOR
\RETURN $\bS_t=\argmax_{\bS}\mathbb{E}_{\widetilde{\btheta}_{t}}[Y|\doi(\bS)]$. 
\end{algorithmic}
\end{algorithm}

In order to apply Lagrange multiplier method \cite{de2000mathematical}, we remove the $[0,1]$-boundaries of $\btheta'_{t,X}$ in the definition of ellipsoid $\mathcal{C}_{t,X}$ defined in Algorithm~\ref{alg:linear-lr} in this section. This does not have impact on our regret analysis and in this section, we adopt $\mathcal{C}_{t,X}=\left\{\btheta'_X\in\mathbb{R}^{|\Pa(X)|}:\left\|\btheta'_X-\hat{\btheta}_{t-1,X}\right\|_{M_{t-1,X}}\leq\rho_{t-1}\right\}$. Then we have the following theorem to make sure correctness of Algorithm~\ref{alg:pairoracle}.

\begin{theorem}[Correctness of Algorithm~\ref{alg:pairoracle}]
The intervened set $S_t$ we get in Algorithm~\ref{alg:pairoracle} is exactly what we get from $\argmax_{\bS\subseteq \bX, |\bS|\le K, \btheta'_{t,X} \in \mathcal{C}_{t,X} } \E[Y| \doi({\bS})]$ for Algorithm~\ref{alg:linear-lr}. Moreover, for any $i=1,2,\cdots,n$ and $\bS\subseteq\bX$, we have\begin{align}
    \max_{\btheta_{t,X_j}'\in\mathcal{C}_{t,X_j},j\leq i}\mathbb{E}[X_i|\doi(\bS)]=\mathbb{E}_{\widetilde{\btheta}_{t}}[X_i|\doi(\bS)]\label{eq.argmaxtheta} 
\end{align} for $\widetilde{\btheta}_t$ corresponding to each $\bS$ in the first loop of Algorithm~\ref{alg:pairoracle}. Hence, when $\bS$ equals the selected seed node set $\bS_t$, the corresponding $\widetilde{\btheta}_t$ in the loop of Algorithm~\ref{alg:pairoracle} is a feasible solution of $\tilde{\btheta}_t$ for Algorithm~\ref{alg:linear-lr}.
\end{theorem}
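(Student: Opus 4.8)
The plan is to establish the ``moreover'' assertion, Eq.~(\ref{eq.argmaxtheta}), first; the other two statements then follow. Indeed, taking $i=n$ (so that $X_n=Y$, the last node of the topological order) in Eq.~(\ref{eq.argmaxtheta}) gives $\mathbb{E}_{\widetilde{\btheta}_t}[Y\mid\doi(\bS)]=\max_{\btheta'_{t,X}\in\mathcal{C}_{t,X}}\mathbb{E}[Y\mid\doi(\bS)]$ for every $\bS$ with $|\bS|=K$, so the $\argmax_{\bS}\mathbb{E}_{\widetilde{\btheta}_t}[Y\mid\doi(\bS)]$ returned by Algorithm~\ref{alg:pairoracle} is exactly the $\argmax_{\bS\subseteq\bX,|\bS|\le K,\btheta'_{t,X}\in\mathcal{C}_{t,X}}\mathbb{E}[Y\mid\doi(\bS)]$ of line~\ref{alg:LRargmax} of Algorithm~\ref{alg:linear-lr}. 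Moreover the forward pass, for $\bS=\bS_t$, implicitly defines a weight vector $\widetilde{\btheta}_t$ --- each $\widetilde{\btheta}_{t,X}$ being the Lagrange maximiser appearing in the closed form below --- which lies in $\prod_X\mathcal{C}_{t,X}$ and attains the maximum, hence is a feasible $\tilde{\btheta}_t$ for Algorithm~\ref{alg:linear-lr}.

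I would prove Eq.~(\ref{eq.argmaxtheta}) by induction on the topological index $i$. The base case $i=1$ is immediate since $X_1\equiv1$ regardless of the weights, so both sides equal $1$. For the inductive step, if $X_i\in\bS$ the intervention forces $X_i=1$ and both sides are $1$; otherwise the identity underlying Lemma~\ref{thm:property_of_linear_model} --- Eq.~(\ref{equation.property_linear}) --- gives, for the linear model, $\mathbb{E}[X_i\mid\doi(\bS)]=\mathbb{E}[\Pa(X_i)\mid\doi(\bS)]^\intercal\btheta'_{X_i}$. This is linear in $\btheta'_{X_i}$ and, because $\Pa(X_i)\subseteq\{X_1,\dots,X_{i-1}\}$ in the topological order, it does not involve $\btheta'_{X_j}$ for any $j\ge i$. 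Splitting the maximisation, $\max_{j\le i}\mathbb{E}[X_i\mid\doi(\bS)]=\max_{j<i}\bigl(\max_{\btheta'_{X_i}\in\mathcal{C}_{t,X_i}}\mathbb{E}[\Pa(X_i)\mid\doi(\bS)]^\intercal\btheta'_{X_i}\bigr)$.

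The inner maximum over the ellipsoid $\mathcal{C}_{t,X_i}=\{\btheta':\|\btheta'-\hat{\btheta}_{t-1,X_i}\|_{M_{t-1,X_i}}\le\rho_{t-1}\}$ is the textbook Lagrange-multiplier / Cauchy--Schwarz computation: for any fixed vector $\bv$, $\max_{\btheta'\in\mathcal{C}_{t,X_i}}\bv^\intercal\btheta'=\bv^\intercal\hat{\btheta}_{t-1,X_i}+\rho_{t-1}\|\bv\|_{M_{t-1,X_i}^{-1}}$, attained at $\btheta'=\hat{\btheta}_{t-1,X_i}+\rho_{t-1}M_{t-1,X_i}^{-1}\bv/\|\bv\|_{M_{t-1,X_i}^{-1}}$; here removing the $[0,1]$ box on $\btheta'$ is precisely what makes this closed form hold without extra case analysis, and it matches the $X_i$-line of Algorithm~\ref{alg:pairoracle} with $\bv$ the parent-expectation vector. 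Writing $g_i(\bv)$ for this value, it remains to show $\max_{j<i}g_i(\mathbb{E}[\Pa(X_i)\mid\doi(\bS)])=g_i(\bv^\star)$, where $\bv^\star$ stacks the already-computed $\mathbb{E}_{\widetilde{\btheta}_t}[X_{i_\ell}\mid\doi(\bS)]$. By the induction hypothesis applied to each parent $X_{i_\ell}$ (index $<i$), every achievable parent-expectation vector is dominated coordinatewise by $\bv^\star$, and $\bv^\star$ is itself achievable --- the single weight vector built in the forward pass realises all the parents' maxima at once, since the weights that optimise a node's expectation are its own in-edges, so the optimistic values propagate forward without conflict. Combined with the coordinatewise monotonicity of $g_i$ on the region where these parent-expectation vectors live, this closes the induction.

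The step I expect to be the main obstacle is exactly this last monotonicity claim --- that jointly maximising the parents' expectations is optimal for $\mathbb{E}[X_i\mid\doi(\bS)]$. For BLMs this is the weight-level analogue of the intervention monotonicity in Theorem~\ref{thm.monoofbglm}: through the live-edge / general-cascade representation, raising an upstream edge weight (equivalently an upstream activation probability) can only increase the chance that $Y$ is reached, hence $\mathbb{E}[Y\mid\doi(\bS)]$, and the same holds node by node. The only wrinkle created by dropping the $[0,1]$ constraint is that the maximising $\btheta'_{X_i}$ may have negative coordinates, so one must verify this cannot flip the monotonicity; I would do so either by expanding the partial derivatives of $g_i(\bv)=\bv^\intercal\hat{\btheta}_{t-1,X_i}+\rho_{t-1}\|\bv\|_{M_{t-1,X_i}^{-1}}$, or directly from $g_i(\bv)=\max_{\btheta'\in\mathcal{C}_{t,X_i}}\bv^\intercal\btheta'$, on the orthant containing the relevant parent-expectation vectors. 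Everything else --- the closed form and the two corollary assertions --- is routine bookkeeping.
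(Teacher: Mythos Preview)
Your proposal follows essentially the same route as the paper: induction along the topological order, the Lagrange/Cauchy--Schwarz closed form for maximising a linear functional over the ellipsoid $\mathcal{C}_{t,X_i}$, and the observation that the forward pass produces a single consistent $\widetilde{\btheta}_t$ realising all parents' optimistic values simultaneously (so that the coordinatewise maximum $\bv^\star$ is achievable). The paper writes the inductive step as a short chain of inequalities, the key one being
\[
\max_{\btheta'_{t,X_j}\in\mathcal{C}_{t,X_j},\,j\le k}\ \sum_{Z\in\Pa(X_k)}\mathbb{E}[Z\mid\doi(\bS)]\,\theta'_{t,Z,X_k}
\ \le\ \max_{\btheta'_{t,X_k}\in\mathcal{C}_{t,X_k}}\ \sum_{Z\in\Pa(X_k)}\Bigl(\max_{j\le k-1}\mathbb{E}[Z\mid\doi(\bS)]\Bigr)\,\theta'_{t,Z,X_k},
\]
and then invokes the induction hypothesis; this inequality is exactly your coordinatewise monotonicity of $g_i$. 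The paper uses it without further comment, so the issue you single out as ``the main obstacle'' --- that dropping the $[0,1]$ box permits negative coordinates in the maximising $\btheta'_{t,X_k}$ and could in principle break the step --- is also not addressed in the paper's own argument; your extra care there is warranted rather than superfluous.
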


\begin{proof}
Initially, we prove that $\mathbb{E}_{\widetilde{\btheta}_{t}}[X_k|\doi(\bS)]=\max_{\btheta'_{t,X_k}\in\mathcal{C}_{t,X_k}}\mathbb{E}_{\widetilde{\btheta}_{t,X_i},i<k}[X_k|\doi(\bS)]$. Here, the subscript of $\mathbb{E}$ on the right-hand side means that $\widetilde{\btheta}_{t,X_i}$ are already fixed for $i<k$. This part is similar to Appendix B.5 in \cite{li2020online}. In order to determine $\argmax_{\btheta'_{t,X_k}\in\mathcal{C}_{t,X_k}}\mathbb{E}_{\widetilde{\btheta}_{t,X_i},i<k}[X_k|\doi(\bS)]$, we use the method of Lagrange multipliers to solve this optimization problem. The only constraint on $\btheta'_{t,X_k}$ is $\left\|\btheta'_{t,X_k}-\hat{\btheta}_{t-1,X_k}\right\|_{M_{t-1,X_k}}\leq\rho_{t-1}$, therefore, the optimized $\btheta'_{t,X_k}$ we want should be a solution of\begin{align*}
   \frac{\partial \mathbb{E}_{\widetilde{\btheta}_{t,X_i},i\leq k-1, \btheta'_{i,X_k}}[X_k|\doi(\bS)]}{\partial \btheta'_{t,X_k}}-\lambda \frac{\partial \left(\left\|\btheta'_{t,X_k}-\hat{\btheta}_{t-1,X_k}\right\|_{M_{t-1,X_k}}^2-\rho_{t-1}^2\right)}{\partial \btheta'_{t,X_k}}=0, 
\end{align*}
which indicates that\begin{align*}
    \left(\mathbb{E}_{\widetilde{\btheta}_{t,X_i},i\leq k-1}[X_{i_1}|\doi(\bS)],\cdots,\mathbb{E}_{\widetilde{\btheta}_{t,X_i},i\leq k-1}[X_{i_{|\Pa(X_k)|}}|\doi(\bS)]\right)^\intercal=2\lambda M_{t-1,X_k}\left(\btheta'_{t,X_k}-\hat{\btheta}_{t-1,X_k}\right),
\end{align*}
where $X_{i_1},\cdots,X_{i_{|\Pa(X_k)|}}$ are parents of $X_k$. Therefore, we can deduce that \begin{align*}
    \btheta'_{t,X_k}=\frac{1}{2\lambda}M_{t,X_k}^{-1}\left(\mathbb{E}_{\widetilde{\btheta}_{t,X_i},i\leq k-1}\left[X_{i_1}\middle|\doi(\bS)\right],\cdots,\mathbb{E}_{\widetilde{\btheta}_{t,X_i},i\leq k-1}[X_{i_{|\Pa(X_k)|}}|\doi(\bS)]\right)^\intercal+\hat{\btheta}_{t-1,X_k}.
\end{align*}
Meanwhile, we know that when $\btheta'_{t,X_k}$ is optimized, it should be on the boundary of the confidence ellipsoid, so $\lambda$ can be solved out as \begin{align*}
    \frac{1}{2\lambda}=\frac{\rho_{t-1}}{\sqrt{\left(\mathbb{E}_{\widetilde{\theta}_{t,X_i},i\leq k-1}[\Pa(X_k)|\doi(\bS)]\right)^\intercal M_{t-1,X_k}^{-1}\mathbb{E}_{\widetilde{\theta}_{t,X_i},i\leq k-1}[\Pa(X_k)|\doi(\bS)]}}.
\end{align*}
Until now, we have shown that\begin{align}
    \max_{\btheta'_{t,X_k}\in\mathcal{C}_{t,X_k}}\mathbb{E}_{\widetilde{\btheta}_{t,X_i},i<k}[X_k|\doi(\bS)]&=\frac{\rho_{t-1} M_{t-1,X_k}^{-1}\mathbb{E}_{\widetilde{\theta}_{t,X_i},i\leq k-1}[\Pa(X_k)|\doi(\bS)]}{\sqrt{\left(\mathbb{E}_{\widetilde{\theta}_{t,X_i},i\leq k-1}[\Pa(X_k)|\doi(\bS)]\right)^\intercal M_{t-1,X_k}^{-1}\mathbb{E}_{\widetilde{\theta}_{t,X_i},i\leq k-1}[\Pa(X_k)|\doi(\bS)]}}\nonumber\\
    &+\left(\mathbb{E}_{\widetilde{\theta}_{t,X_i},i\leq k-1}[\Pa(X_k)|\doi(\bS)]\right)^\intercal\hat{\btheta}_{t-1,X_k}\nonumber\\
    &=\mathbb{E}_{\widetilde{\btheta}_t}[X_k|\doi(\bS)].\label{eq.maxoftheta}
\end{align}

Next, we prove Eq.~\eqref{eq.argmaxtheta} by induction on $i$. If $i=2$, it is trivial that $\max_{\btheta_{t,X_2}'\in\mathcal{C}_{t,X_2}}\mathbb{E}[X_2|\doi(\bS)]=\mathbb{E}_{\widetilde{\btheta}_{t}}[X_2|\doi(\bS)]$ according to Eq.~\eqref{eq.maxoftheta}. 

Now we suppose that for any $i\leq k-1$, we already know that $\max_{\btheta_{t,X_j}'\in\mathcal{C}_{t,X_j},j\leq i}\mathbb{E}[X_i|\doi(\bS)]=\mathbb{E}_{\widetilde{\btheta}_{t}}[X_i|\doi(\bS)]$. We want to prove that $\max_{\btheta_{t,X_j}'\in\mathcal{C}_{t,X_j},j\leq k}\mathbb{E}[X_k|\doi(\bS)]=\mathbb{E}_{\widetilde{\btheta}_{t}}[X_k|\doi(\bS)]$. Actually, we can deduce that\begin{align*}
    \mathbb{E}_{\btheta'_{t,X_j}\in\mathcal{C}_{t,X_j},j\leq k}[X_k|\doi(\bS)]
    &\leq \max_{\btheta_{t,X_j}'\in\mathcal{C}_{t,X_j},j\leq k}\sum_{Z\in\Pa(X_k)}\mathbb{E}[Z|\doi(\bS)]\theta'_{t,Z,X_k}\\
    &\leq\max_{\btheta_{t,X_k}'\in\mathcal{C}_{t,X_k}}\sum_{Z\in\Pa(X_k)}\left(\max_{\btheta_{t,X_j}'\in\mathcal{C}_{t,X_j},j\leq k-1}\mathbb{E}[Z|\doi(\bS)]\right)\theta'_{t,Z,X_k}\\
    &=\max_{\btheta_{t,X_k}'\in\mathcal{C}_{t,X_k}}\sum_{Z\in\Pa(X_k)}\mathbb{E}_{\widetilde{\btheta}_t}[Z|\doi(\bS)]\theta'_{t,Z,X_k}\\
    &\leq\sum_{Z\in\Pa(X_k)}\widetilde{\theta}_{t,Z,X_k}\mathbb{E}_{\widetilde{\btheta}_t}[Z|\doi(\bS)]\\
    &=\mathbb{E}_{\widetilde{\btheta}_{t}}[X_k|\doi(\bS)],
\end{align*}
which is exactly what we want while $\mathbb{E}_{\widetilde{\btheta}_{t}}[X_k|\doi(\bS)]$ can be easily achieved because $\widetilde{\btheta}_{t,X_k}\in\mathcal{C}_{t,X_k}$ for any $X_k\in\bX\cup\{Y\}$. Therefore, Eq.~\eqref{eq.argmaxtheta} holds and because $\bS_t$ in Algorithm~\ref{alg:pairoracle} is selected according to the largest $\mathbb{E}_{\widetilde{\btheta}_t}[Y|\doi(\bS)]=\max_{\btheta'_{t,X_k}\in\mathcal{C}_{t,X_k},k=2,3,\cdots,n}\mathbb{E}[Y|\doi(\bS)]$, it is exactly the intervened set $\bS_t$ we define in Algorithm~\ref{alg:glm-ucb}.
\end{proof}

When conducting experiments on BLMs, Algorithm~\ref{alg:pairoracle} works well in practice. It runs $30$ times faster than the $\epsilon$-net method proposed in \cite{li2020online} when adopting $\epsilon=0.01$.

\subsection{Simulations on BLMs}

In this section, we run simulations on some typical BLMs with our two algorithms based on maximum likelihood estimation, linear regression (BGLM-OFU and BLM-LR) and two baseline algorithms (UCB and $\epsilon$-greedy). We call BGLM-OFU as BLM-OFU when it is adopted on BLMs. We show that our algorithms have much smaller best-in-hindsight regrets than the baseline algorithms (UCB and $\epsilon$-greedy). This is consistent with our theoretical analysis that BLM-OFU and BLM-LR promise better regrets which are polynomial with respect to the size of graph in combinatorial settings. Also, we further show this by compare the performances of these algorithms when adopted on graphs with different sizes. When the graph becomes larger, the performance gaps between our algorithms and the baselines become larger.

Because our round number is limited, we adopt $\rho_t, \rho$ to be $\frac{1}{10}$ times of our original parameters setting in BLM-OFU and BLM-LR. We use the implementation of pair-oracle introduced in the former section (Algorithm~\ref{alg:pairoracle}). About the initialization phase of BLM-OFU, we set $T_0=\frac{1}{100}T$ for convenience (second order derivative of a linear function is $0$, so $L_{f_X}^{(2)}$ in BGLM-OFU can be arbitrarily small). For the UCB algorithm, we adopt the common used upper confidence bound $\sqrt{\frac{\ln t}{n_{i,t}}}$ where $t$ is the number of current round and $n_{i,t}$ is the number of playing times of arm $i$ until the $t^{th}$ round. For fairness, we also simulate on UCB with the upper confidence bound $10$ times smaller than the standard one. The result of this heuristic algorithm is labeled by ``UCB (scaled)". For $\epsilon$-greedy algorithm, we adopt $\epsilon=0.1$ and $\epsilon=0.01$. We have tried other settings for these two baselines, and our choices are close to optimal for all tested BLMs. For both of the two baselines, we treat each possible $K$ node intervention set as an arm so there are $\binom{n-2}{K}$ arms in total (one can intervene $X_2,\cdots,X_{n-1}$). For each experiment, we run average regrets of $30$ repeated simulations in the same settings. Then we draw the $95\%$ confidence intervals \cite{fisher1992statistical} of these average regrets by repeating the $30$ repeats for $20$ times. In total, we simulate $600$ times for each setting of each experiment. All of our experiments are run on multi-threadedly on $4$ performance-cores of Intel Core\texttrademark~i7-12700H Processor at 4.30GHz with 32GB DDR5 SDRAM. Code is available in our supplementary material.

\subsubsection{Simulations on Parallel Graphs}

In the first experiment, we set round number $T=10000$, $K=3$ and $n=8$. The simulated model $G_1$ is a parallel BLM such that $X_1$ points to $X_2,X_3,\cdots,X_7$ and $X_2,X_3,\cdots,X_7$ all point to $Y$. $X_1$ is always $1$, all the other paramters are\begin{align*}
    &\theta^*_{X_1,X_2}=0.3, \theta^*_{X_1,X_3}=0.4, \theta^*_{X_1,X_4}=0.2, \theta^*_{X_1,X_5}=0.1, \theta^*_{X_1,X_6}=0.6, \theta^*_{X_1,X_7}=0.5,\\
    &\theta^*_{X_2,Y}=0.1, \theta^*_{X_3,Y}=0.3, \theta^*_{X_4,Y}=0.2, \theta^*_{X_5,Y}=0.2, \theta^*_{X_6,Y}=0.1, \theta^*_{X_7,Y}=0.1.
\end{align*} 
The best intervention for $G_1$ is $\{\doi(X_3=1),\doi(X_4=1),\doi(X_5=1)\}$. One can find the graph structure and parameters on $G_1$ in Figure~\ref{Fig.parallel1}. The total running time of this experiment is $3714$ seconds.

\begin{figure}[htb] 
\centering 
\includegraphics[width=0.40\textwidth]{./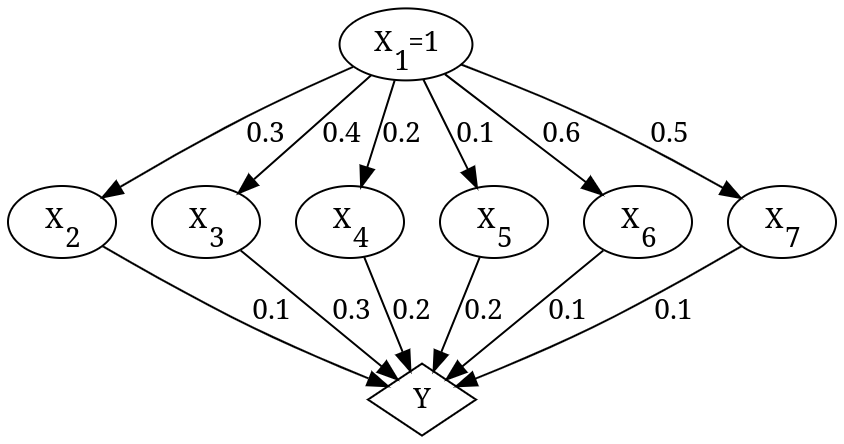}
\caption{Structure and Parameters of $G_1$.}
\label{Fig.parallel1} 
\end{figure}

\begin{figure}[htb] 
\centering 
\includegraphics[width=0.99\textwidth]{./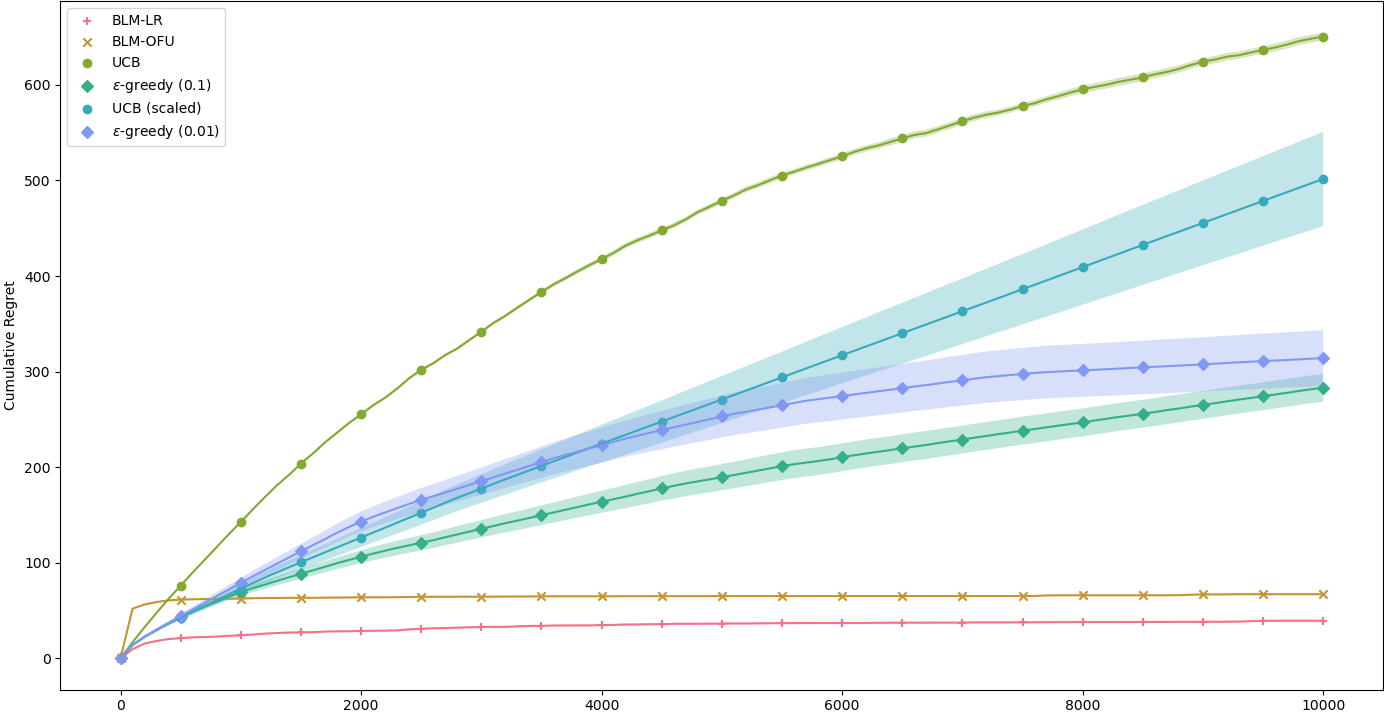}
\caption{Regrets of Algorithms Run on $G_1$.}
\label{Fig.simulation1} 
\end{figure}

Figure~\ref{Fig.simulation1} shows that regrets of BLM-LR and BLM-OFU are much smaller than all settings of UCB and $\epsilon$-greedy algorithms. Moreover, regrets of BLM-LR and BLM-OFU remain nearly unchanged after some rounds. That may be because parameters are already estimated well so deviations from the best intervention are not likely to occur. The large regrets of UCB and $\epsilon$-greedy are due to the large amount of arms ($20$ arms) and the hardness of estimating expected payoffs.

In the second experiment, we set round number $T=2000$ and $K=2$. We run this experiment on three parallel graphs $G_2,G_3$ and $G_4$. $G_2$ has $10$ nodes in total. $X_1$ points to $X_2,X_3,\cdots,X_9$ and $X_2,X_3,\cdots,X_9$ point to $Y$. The parameters on $G_2$ are\begin{align*}
    &\theta^*_{X_1,X_2}=0.2, \theta^*_{X_1,X_3}=0.2, \theta^*_{X_1,X_4}=0.6, \theta^*_{X_1,X_5}=0.6, \theta^*_{X_1,X_6}=0.6, \theta^*_{X_1,X_7}=0.6, \theta^*_{X_1,X_8}=0.6, \theta^*_{X_1,X_9}=0.6,\\
    &\theta^*_{X_2,Y}=0.2, \theta^*_{X_3,Y}=0.2, \theta^*_{X_4,Y}=0.1, \theta^*_{X_5,Y}=0.1, \theta^*_{X_6,Y}=0.1, \theta^*_{X_7,Y}=0.1, \theta^*_{X_8,Y}=0.1, \theta^*_{X_9,Y}=0.1.
\end{align*} 
The best intervention for $G_2$ is $\{\doi(X_2=1),\doi(X_3=1)\}$. One can find the graph structure and parameters on $G_2$ in Figure~\ref{Fig.parallel2}.

\begin{figure}[htb] 
\centering 
\includegraphics[width=0.50\textwidth]{./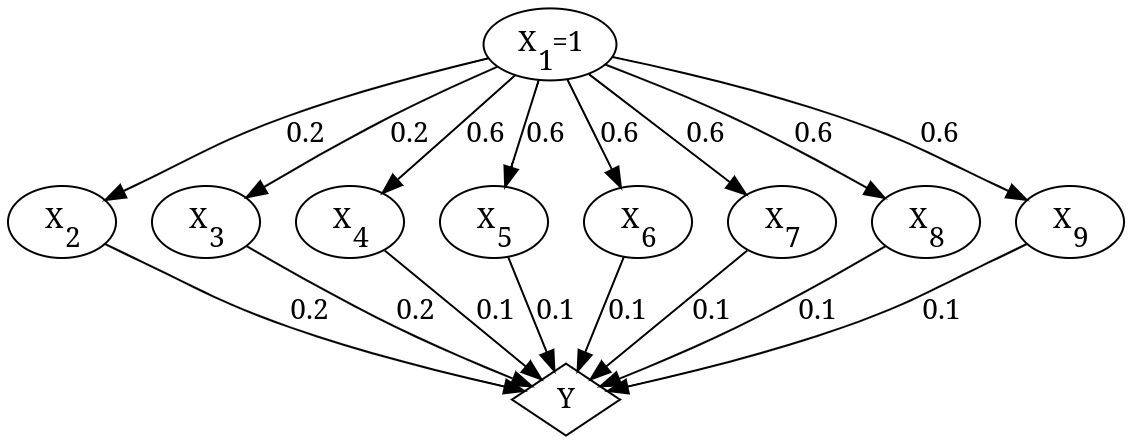}
\caption{Structure and Parameters of $G_2$.}
\label{Fig.parallel2} 
\end{figure}

$G_3$ has $8$ nodes, which is exactly $G_2$ without two nodes $X_8$ and $X_9$. $G_4$ has $6$ nodes, which is exactly $G_2$ without four nodes $X_6,X_7,X_8$ and $X_9$. The best intervention for $G_3$ and $G_4$ is also $\{\doi(X_2=1),\doi(X_3=1)\}$. The total running time of this experiment is $603$ seconds.

\begin{figure}[htb] 
\centering 
\includegraphics[width=0.9\textwidth]{./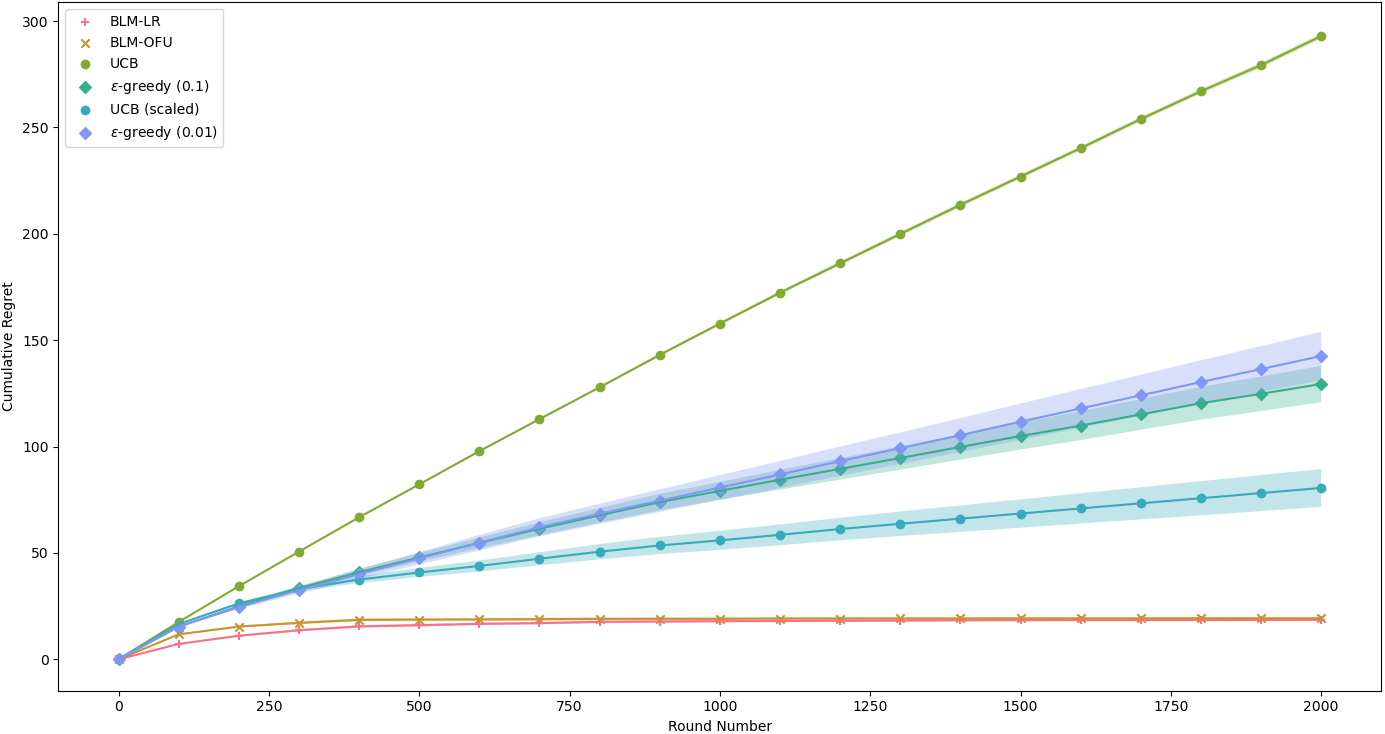}
\caption{Regrets of Algorithms Run on $G_2$.}
\label{Fig.simulation2} 
\end{figure}

\begin{figure}[htb] 
\centering 
\includegraphics[width=0.9\textwidth]{./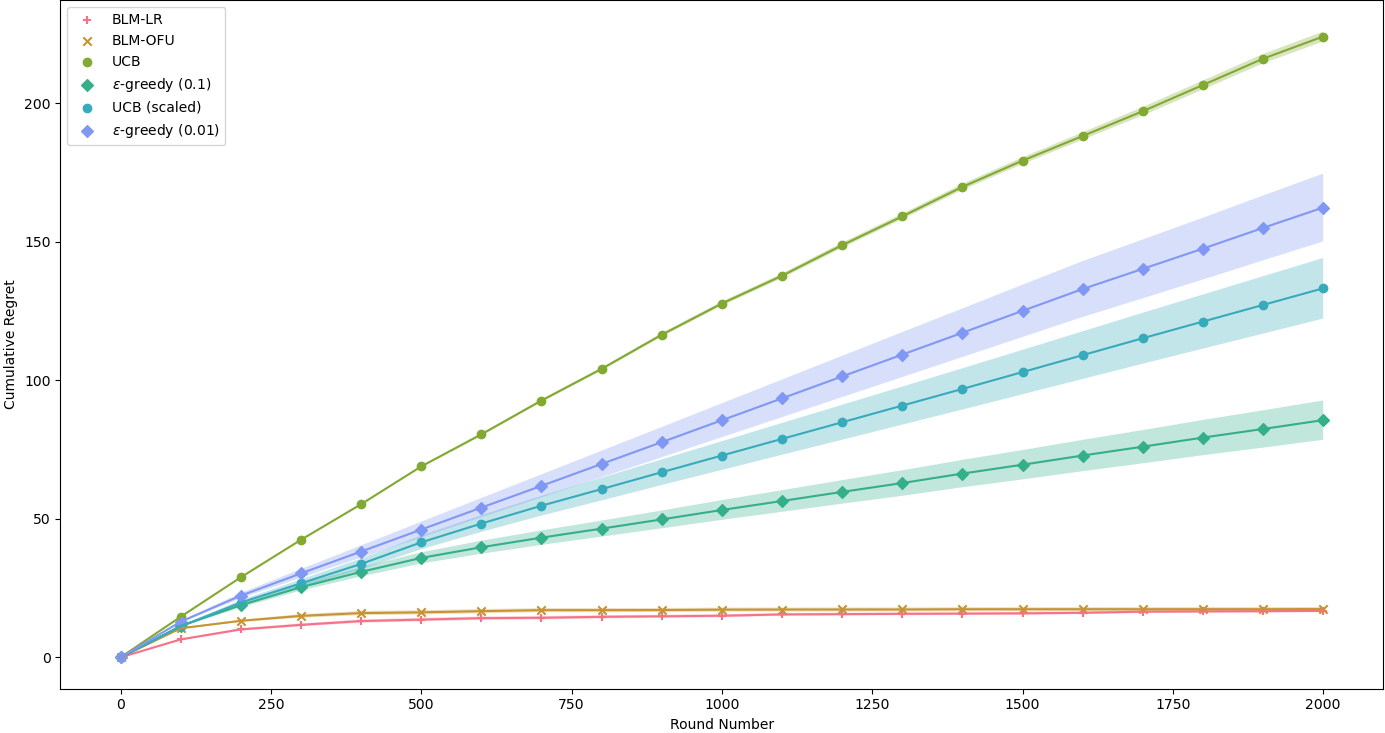}
\caption{Regrets of Algorithms Run on $G_3$.}
\label{Fig.simulation3} 
\end{figure}

\begin{figure}[htb] 
\centering 
\includegraphics[width=0.9\textwidth]{./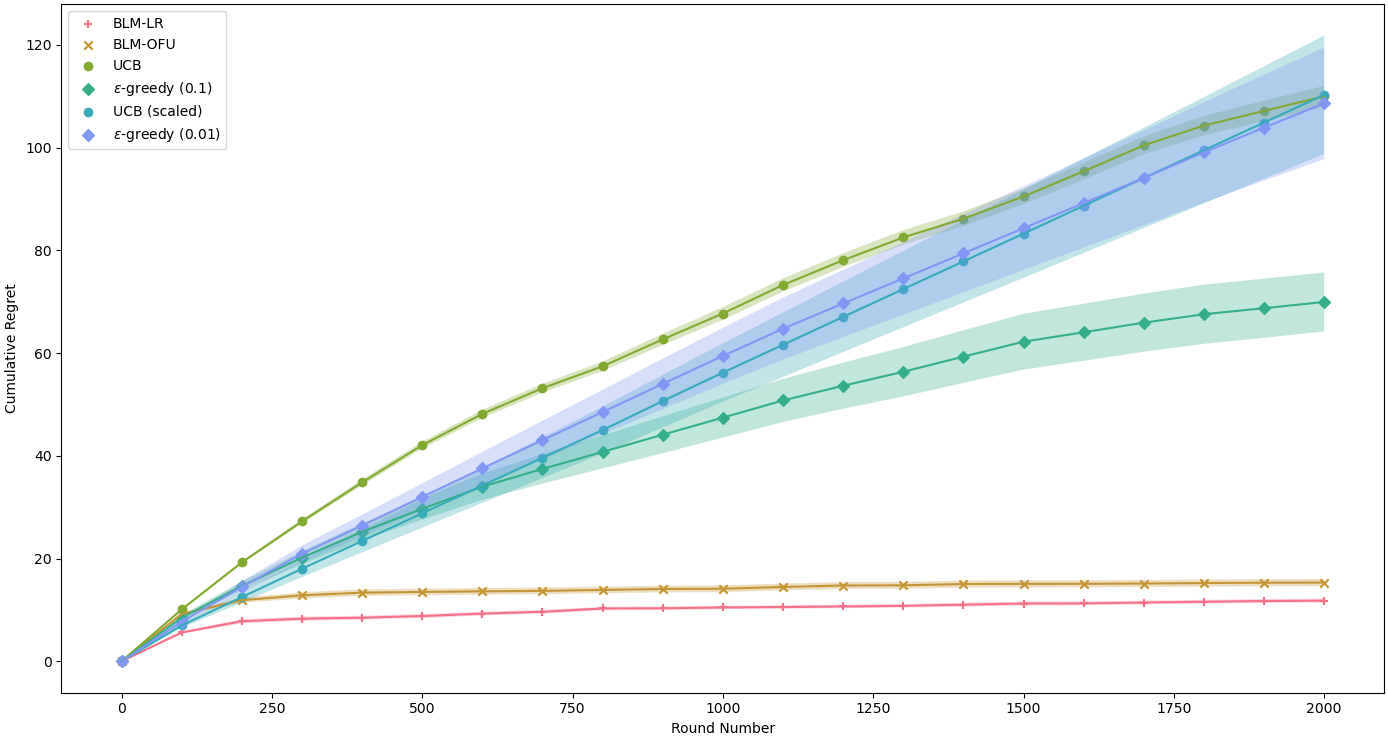}
\caption{Regrets of Algorithms Run on $G_4$.}
\label{Fig.simulation4} 
\end{figure}

Figures~\ref{Fig.simulation2}, \ref{Fig.simulation3} and \ref{Fig.simulation4} also show that regrets of BLM-LR and BLM-OFU are much smaller than all settings of UCB and $\epsilon$-greedy algorithms. Additionally, regrets of BLM-LR and BLM-OFU remain below $20$ on each of the three BLMs $G_2,G_3$ and $G_4$, which are not sensitive to the graph size. However, the regrets of UCB and $\epsilon$-greedy algorithms increase in proportion to the number of arms in each BLM ($G_2,G_3$ and $G_4$ have $6, 15$ and $28$ arms respectively). This comparison shows that BLM-LR and BLM-OFU are able to overcome the exponentially large space of arms in combinatorial settings for causal bandits problem.

\subsubsection{Simulations on Two-Layer Graph}

Besides parallel BLMs, we also conduct an experiment on a two-layer BLM. We set round number $T=10000$, $K=2$ and $n=7$. $G_5$ is a two-layer graph such that $X_1$ points to $X_2,X_3,\cdots,X_6$ and both $X_2,X_3$ point to all of $X_4,X_5,X_6$ and $X_4,X_5,X_6$ all point to $Y$. The parameters on $G_5$ are\begin{align*}
    &\theta^*_{X_1,X_2}=0.1, \theta^*_{X_1,X_3}=0.1, \theta^*_{X_1,X_4}=0.1, \theta^*_{X_1,X_5}=0.1, \theta^*_{X_1,X_6}=0.1,\\
    &\theta^*_{X_2,X_4}=0.1, \theta^*_{X_2,X_5}=0.7, \theta^*_{X_2,X_6}=0.7, \theta^*_{X_3,X_4}=0.2, \theta^*_{X_3,X_5}=0.1, \theta^*_{X_3,X_6}=0.1,\\
    &\theta^*_{X_4,Y}=0.6, \theta^*_{X_5,Y}=0.1, \theta^*_{X_6,Y}=0.1.
\end{align*} 

The best intervention for $G_5$ is $\{\doi(X_2=1),\doi(X_4=1)\}$. One can find the graph structure and parameters on $G_5$ in Figure~\ref{Fig.twolayer}. The total running time of this experiment is $355$ seconds.

\begin{figure}[htb] 
\centering 
\includegraphics[width=0.30\textwidth]{./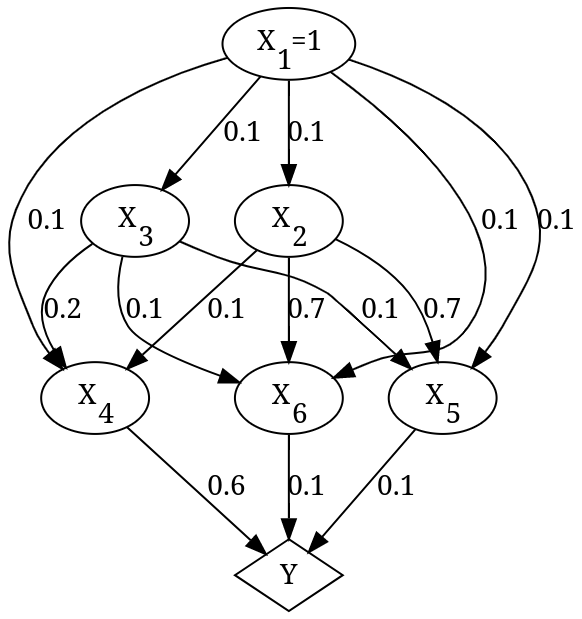}
\caption{Structure and Parameters of $G_5$.}
\label{Fig.twolayer} 
\end{figure}

\begin{figure}[htb] 
\centering 
\includegraphics[width=0.9\textwidth]{./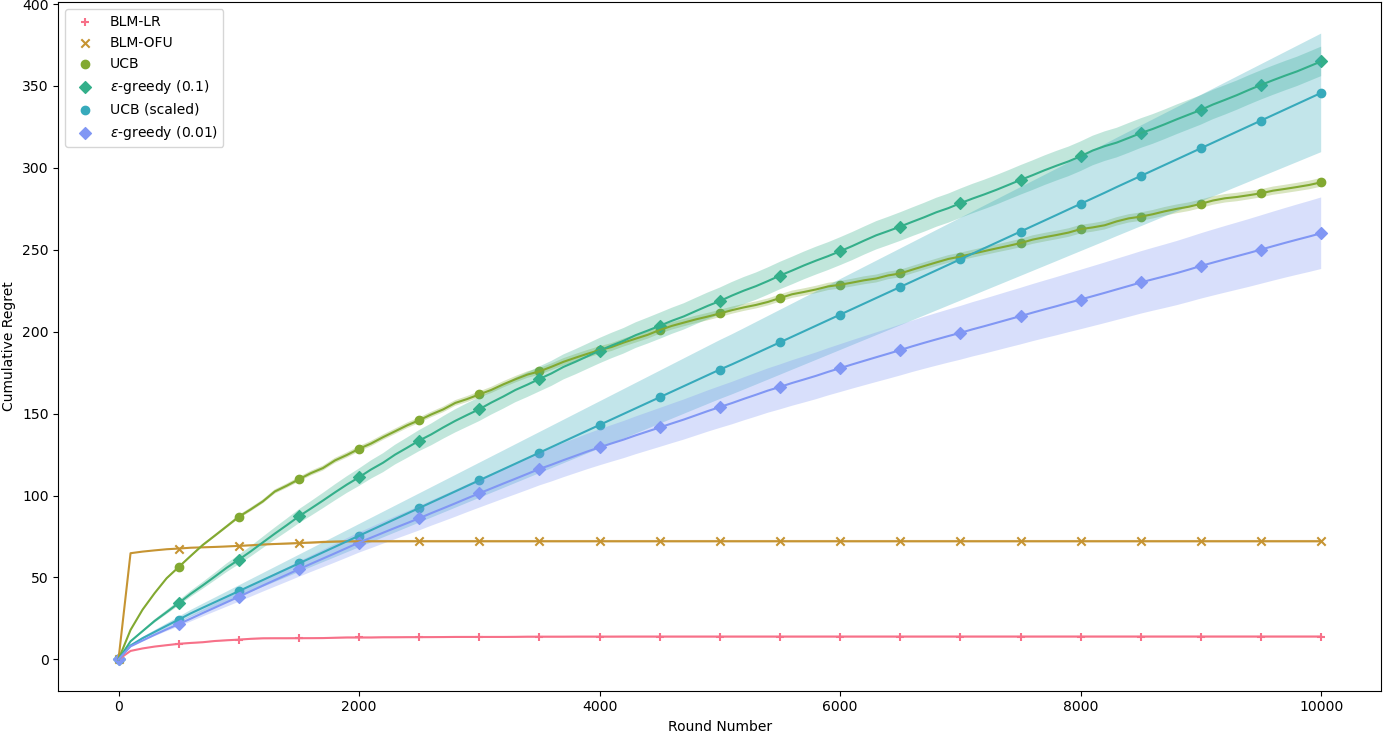}
\caption{Regrets of Algorithms Run on $G_5$.}
\label{Fig.simulation5} 
\end{figure}

Figure~\ref{Fig.simulation5} shows that regrets of BLM-LR and BLM-OFU are much smaller than all settings of UCB and $\epsilon$-greedy algorithms not only on parallel graphs but also on more general graphs. Furthermore, because the difference between expected payoffs of the best intervention and other interventions is larger than that in $G_1$, BLM-LR estimates the parameters accurately enough even faster. Hence, BLM-LR achieves even much smaller regret than BLM-OFU.

\end{document}